\def\eqref#1{equation~\ref{#1}}
\def\1{\bm{1}}
\DeclareMathAlphabet{\mathsfit}{\encodingdefault}{\sfdefault}{m}{sl}
\SetMathAlphabet{\mathsfit}{bold}{\encodingdefault}{\sfdefault}{bx}{n}
\DeclareMathOperator*{\argmax}{arg\,max}
\newtheorem{remark}{Remark}
\newtheorem{definition}{Definition}
\newtheorem{lemma}{Lemma}
\newtheorem{proposition}{Proposition}
\newcommand{\widgraph}[2]{\includegraphics[keepaspectratio,width=#1]{#2}}
\title{Towards Marginal Fairness Sliced Wasserstein Barycenter}
\author{Khai Nguyen\thanks{Equal Contribution}\\
Department of Statistics and Data Sciences\\
University of Texas at Austin\\
Austin, TX 78713, USA \\
\texttt{khainb@utexas.edu} \\
\And
Hai Nguyen$^*$ \\
VinAI Research\\
Hanoi University of Science and Technology \\
Hanoi, Vietnam \\
\texttt{v.hainn14@vinai.io} \\
\AND
Nhat Ho\\
Department of Statistics and Data Sciences\\
University of Texas at Austin\\
Austin, TX 78713, USA \\
\texttt{minhnhat@utexas.edu}
}
\begin{document}

\maketitle

\begin{abstract}
The Sliced Wasserstein barycenter (SWB) is a widely acknowledged method for efficiently generalizing the averaging operation within probability measure spaces. However, achieving marginal fairness SWB, ensuring approximately equal distances from the barycenter to marginals, remains unexplored. The uniform weighted SWB is not necessarily the optimal choice to obtain the desired marginal fairness barycenter due to the heterogeneous structure of marginals and the non-optimality of the optimization. As the first attempt to tackle the problem, we define the marginal fairness sliced Wasserstein barycenter (MFSWB) as a constrained SWB problem. Due to the computational disadvantages of the formal definition, we propose two hyperparameter-free and computationally tractable surrogate MFSWB problems that implicitly minimize the distances to marginals and encourage marginal fairness at the same time. To further improve the efficiency, we perform slicing distribution selection and obtain the third surrogate definition by introducing a new slicing distribution that focuses more on marginally unfair projecting directions. We discuss the relationship of the three proposed problems and their relationship to sliced multi-marginal Wasserstein distance. Finally, we conduct experiments on finding 3D point-clouds averaging, color harmonization, and training of sliced Wasserstein autoencoder with class-fairness representation to show the favorable performance of the proposed surrogate MFSWB problems\footnote{Code for the  paper is published at \url{https://github.com/khainb/MFSWB}.}.
\end{abstract}

\section{Introduction}
\label{sec:introduction}
Wasserstein barycenter~\citep{agueh2011barycenters} generalizes "averaging" to the space of probability measures. In particular, a Wasserstein barycenter is a probability measure that minimizes a weighted sum of Wasserstein distances between it and some given marginal probability measures. Due to the rich geometry of the Wasserstein distance~\citep{peyre2020computational}, the Wasserstein barycenter can be seen as the Fréchet mean~\citep{grove1973conjugate} on the space of probability measures. As a result, Wasserstein barycenter has been applied widely to various applications in machine learning such as Bayesian inference~\citep{srivastava2018scalable,staib2017parallel}, domain adaptation~\citep{montesuma2021wasserstein}, clustering~\citep{ho2017multilevel}, sensor fusion~\citep{elvander2018tracking}, text classification~\citep{kusner2015word}, and so on. Moreover, Wasserstein barycenter is also a powerful tool for computer graphics since it can be used for texture mixing~\citep{rabin2012wasserstein}, style transfer~\citep{mroueh2020wasserstein}, shape interpolation~\citep{solomon2015convolutional}, and many other tasks on many other domains.

Despite being useful, it is very computationally expensive to compute Wasserstein barycenter. In more detail, the computational complexity of Wasserstein barycenter is $\mathcal{O}(n^3 \log n)$ when using linear programming~\citep{anderes2016discrete} where $n$ is the largest number of supports of marginal probability measures. When using  entropic regularization for optimal transport~\citep{cuturi2013sinkhorn}, the computational complexity is reduced to $\mathcal{O}(n^2)$~\citep{kroshnin2019complexity}. Nevertheless, quadratic scaling is not enough when the number of supports approaches a hundred thousand or a million. To address the issue, Sliced Wassserstein Barycenter (SWB) is introduced in~\citep{bonneel2015sliced} by replacing Wasserstein distance with its sliced variant i.e., Sliced Wasseretein (SW) distance. Thank to the closed-form of Wasserstein distance in one-dimension, SWB has a low time complexity i.e., $\mathcal{O}(n \log n)$ which enables fast computation. Combining with the fact that Sliced Wasserstein is equivalent to Wasserstein distance in bounded domains~\citep{bonnotte2013unidimensional} and Sliced Wasserstein does not suffer from the curse of dimensionality~\citep{nguyen2021distributional,nadjahi2020statistical,manole2022minimax,nietert2022statistical}, SWB becomes a scalable alternative choice of Wasserstein barycenter. 

In some applications, we might want to find a barycenter that minimizes the distances to marginals while having equal distances to marginals at the same time e.g., constructing shape template for a group of shapes~\citep{bongratz2022vox2cortex,sun2023hybrid} that can be further used in downstream tasks, exact balance style mixing between images~\citep{bonneel2015sliced}, fair generative modeling~\citep{choi2020fair}, and so on. We refer to such a barycenter as a marginal fairness barycenter.  Both the Wasserstein barycenter and SWB are defined based on a given set of marginal weights (marginal coefficients), and these weights represent the importance levels of marginals toward the barycenter. Nevertheless, a uniform (weights) barycenter does not necessarily lead to the desired marginal fairness barycenter as shown in Figure~\ref{fig:gauss}. Moreover, obtaining the marginal fairness barycenter is challenging since such a barycenter might not exist and might not be identifiable given non-global-optimal optimization (Karcher mean problem). To the best of our knowledge, there is no prior work that investigates finding a marginal fairness barycenter. 

\begin{figure}[!t]
\begin{center}
    
  \begin{tabular}{ccc}
  \widgraph{0.3\textwidth}{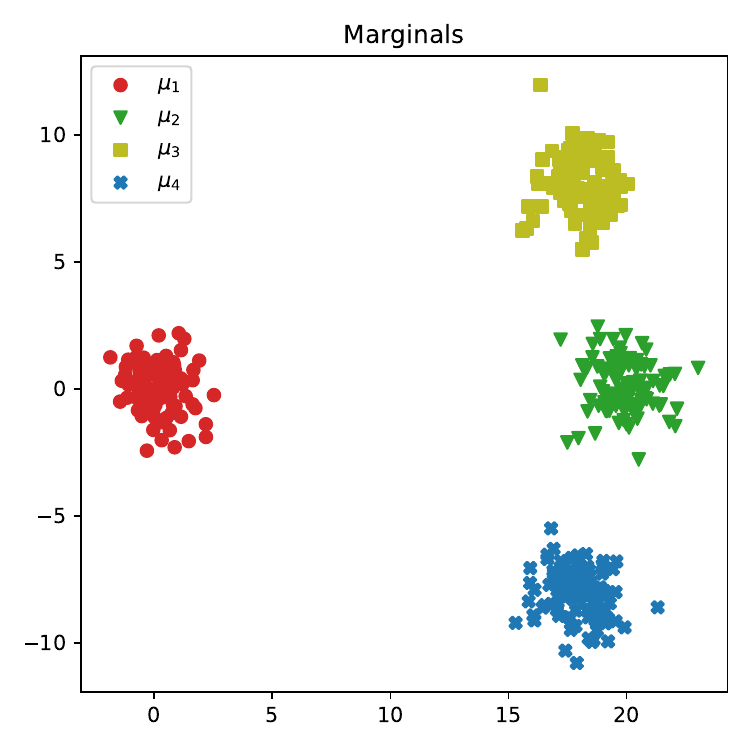} &
  \widgraph{0.3\textwidth}{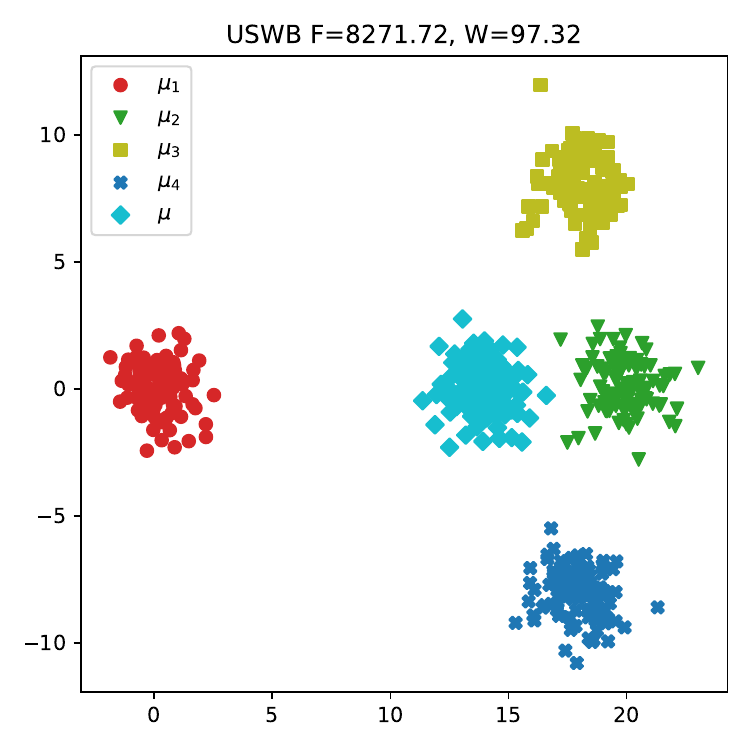} &   \widgraph{0.3\textwidth}{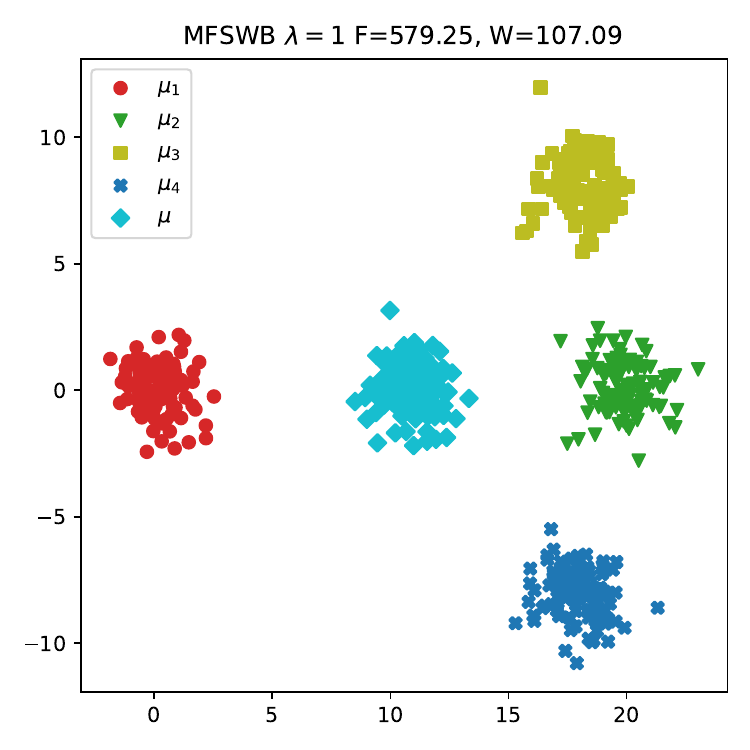} 

  \end{tabular}
  \end{center}
  \vskip -0.25in
  \caption{
  \footnotesize{The uniform SWB and the MFSWB of 4 Gaussian distributions.
}
} 
  \label{fig:gauss}
   \vskip -0.3in
\end{figure}

In this work, we make the first attempt to tackle the marginal fairness barycenter problem i.e., we focus on finding Marginal Fairness Sliced Wasserstein Barycenter (MFSWB) to utilize the scalability of SW distance. 

\textbf{Contribution:} In summary, our main contributions are four-fold:

1. We define the Marginal Fairness Sliced Wasserstein Barycenter (MFSWB) problem, which is a constrained barycenter problem where the constraint aims to limit the average pair-wise absolute difference between distances from the barycenter to the marginals. We derive the dual form of MFSWB, discuss its computation, and address its computational challenges.

2. To address this issue, we propose surrogate definitions of MFSWB that are hyperparameter-free and computationally tractable. Motivated by Fair PCA~\citep{samadi2018price}, we propose the first surrogate MFSWB, which minimizes the largest SW distance from the barycenter to the marginals. To solve the problem of biased gradient estimation of the first surrogate MFSWB, we propose the second surrogate MFSWB, which is the expectation of the largest one-dimensional Wasserstein distance from the projected barycenter to the projected marginals. We show that the second surrogate is an upper bound of the first surrogate and can yield an unbiased gradient estimator. We further extend the second surrogate to the third surrogate by applying slicing distribution selection and show that the third surrogate is an upper bound of the previous two.

3. We discuss the connection between the proposed surrogate MFSWB problems and the Sliced Multi-marginal Wasserstein (SMW) distance with the maximal ground metric. In particular, solving the proposed MFSWB problems is equivalent to minimizing a lower bound of the SMW. By showing that the SMW with the maximal ground metric is a generalized metric, we demonstrate that it is safe to use the proposed surrogate MFSWB problems.

4. We conduct simulations with Gaussian data and experiments on various applications, including 3D point-cloud averaging, color harmonization, and sliced Wasserstein autoencoder with class-fair representation, to demonstrate the favorable performance of the proposed surrogate definitions.

\textbf{Organization.} We first discuss some preliminaries on SW distance, SWB, its computation, and Sliced Multi-marginal Wasserstein distance in Section~\ref{sec:preliminaries}. We then introduce the formal definition and surrogate definitions of marginal fairness SWB in Section~\ref{sec:MFSWB}. Next, we conduct experiments to demonstrate the favorable performance and fairness of the proposed definitions in Section~\ref{sec:experiments}. We conclude the paper and provide some future directions in Section~\ref{sec:conclusion}. Finally, we defer the proofs of key results, the discussion on related works, and additional materials to the Appendices.
 
\section{Preliminaries}
\label{sec:preliminaries}

\textbf{Sliced Wasserstein distance.}  The definition of sliced Wasserstein (SW) distance~\citep{bonneel2015sliced} between two probability measures $\mu_1 \in \mathcal{P}_p(\mathbb{R}^d)$ and $\mu_2\in \mathcal{P}_p(\mathbb{R}^d)$ is:
\begin{align}
\label{eq:SW}
    \text{SW}_p^p(\mu_1,\mu_2)  =  \mathbb{E}_{ \theta \sim \mathcal{U}(\mathbb{S}^{d-1})} [\text{W}_p^p (\theta \sharp \mu_1,\theta \sharp \mu_2)],
\end{align}
where  the Wasserstein distance has a closed form in one-dimension which is $\text{W}_p^p(\theta \sharp \mu_1,\theta \sharp \mu_2) =
     \int_0^1 |F_{\theta \sharp \mu_1}^{-1}(z) - F_{\theta \sharp \mu_2}^{-1}(z)|^{p} dz $
where $\theta \sharp \mu$ and $\theta \sharp \nu$ denotes the pushforward measures of $\mu$ and $\nu$ through the function $f(x)=\theta^\top x$, $F_{\theta \sharp \mu_1}$ and $F_{\theta \sharp \mu_2}$  are  the cumulative
distribution function (CDF) of $\theta \sharp \mu_1$ and $\theta \sharp \mu_2$ respectively. 


\textbf{Sliced Wasserstein Barycenter.} The definition of the sliced Wasserstein barycenter (SWB) problem~\citep{bonneel2015sliced} of $K\geq 2$ marginals $\mu_1,\ldots,\mu_K \in \mathcal{P}_p(\mathbb{R}^d)$ with marginal weights $\omega_1,\ldots,\omega_K>0$ ($\sum_{i=k}^K \omega_k=1$) is defined as:
\begin{align}
\label{eq:SWB}
    \min_{\mu}\mathcal{F}(\mu; \mu_{1:K},\omega_{1:K});  
    \quad \mathcal{F}(\mu;\mu_{1:K},\omega_{1:K})= \sum_{k=1}^K \omega_k \text{SW}_p^p (\mu,\mu_k).
\end{align}
When $\omega_1=\ldots=\omega_K = 1/K$, we obtain an uniform SWB problem.

\textbf{Computation of parametric SWB.} Let $\mu_\phi$ be parameterized by $\phi \in \Phi$, SWB can be solved by gradient-based optimization. In that case, the interested quantity is the gradient $\nabla_\phi \mathcal{F}(\mu_\phi ;\mu_{1:K}, \omega_{1:K}) = \sum_{k=1}^K \omega_k  \nabla_\phi \text{SW}_p^p(\mu_\phi,\mu_k)$. However, the gradient $$\nabla_\phi \text{SW}_p^p(\mu_\phi,\mu_k) = \nabla_\phi \mathbb{E}_{ \theta \sim \mathcal{U}(\mathbb{S}^{d-1})} [\text{W}_p^p (\theta \sharp \mu_\phi,\theta \sharp \mu_k)] =\mathbb{E}_{ \theta \sim \mathcal{U}(\mathbb{S}^{d-1})} [\nabla_\phi \text{W}_p^p (\theta \sharp \mu_\phi,\theta \sharp \mu_k)]$$ for any $k=1,\ldots, K$ is intractable due to the intractability of SW with the expectation with respect to the uniform distribution over the unit-hypersphere. Therefore, Monte Carlo estimation is used. In particular,  projecting directions $\theta_1,\ldots,\theta_L$ are sampled i.i.d from $\mathcal{U}(\mathbb{S}^{d-1})$, and the stochastic gradient estimator is formed:
\begin{align}
\label{eq:grad_SW}
    \nabla_\phi \text{SW}_p^p(\mu_\phi,\mu_k) \approx \frac{1}{L} \sum_{l=1}^L \nabla_\phi \text{W}_p^p (\theta_l \sharp \mu_\phi,\theta_l \sharp \mu_k) .
\end{align}
With the stochastic gradient, the SWB can be solved by using a stochastic gradient descent algorithm. We refer the reader to Algorithm~\ref{alg:SWB} in Appendix~\ref{sec:add_materials} for more detail. Specifically, we now discuss the discrete SWB  i.e., marginals and the barycenter are discrete measures. 

\textit{Free supports barycenter.} In this setting, we have $\mu_{\phi}= \frac{1}{n} \sum_{i=1}^n \delta_{x_i}$, $\mu_k=\frac{1}{n} \sum_{i=1}^n \delta_{y_{i}}$, and $\phi = (x_1,\ldots,x_n)$, we can compute the (sub-)gradient with the time complexity $\mathcal{O}(n\log n)$:
\begin{align}
\label{eq:grad_W1}
    \nabla_{x_i}\text{W}_p^p (\theta \sharp \mu_\phi,\theta \sharp \mu_k)= p |\theta^\top x_i - \theta^\top y_{\sigma(i)}|^{p-1} \text{sign}(\theta^\top x_i - \theta^\top y_{\sigma(i)}) \theta,
\end{align}
where $\sigma= \sigma_1 \circ \sigma_2^{-1}$ with  $\sigma_1$ and $\sigma_2$ are any sorted permutation of $\{x_1,\ldots,x_n\}$ and $\{y_1,\ldots,y_n\}$. Here, $[n]$ denotes the  set $\{1,2,\ldots,n\}$, $\sigma_1:[n] \to [n]$ is the permuation function such that $x_{\sigma_1(1)} \leq x_{\sigma_1(2)} \leq \ldots \leq x_{\sigma_1(n)}$ or $x_{\sigma_1(1)} \geq x_{\sigma_1(2)} \geq \ldots \geq x_{\sigma_1(n)}$. Similarly, $\sigma_2:[n] \to [n]]$ is the permuation function such that $y_{\sigma_2(1)} \leq y_{\sigma_2(2)} \leq \ldots \leq y_{\sigma_2(n)}$ or $y_{\sigma_2(1)} \geq y_{\sigma_2(2)} \geq \ldots \geq y_{\sigma_2(n)}$, and  $\sigma_2^{-1}$ is the argsort operator. The transport map is contructed as $\sigma = \sigma_1 \circ \sigma_2^{-1}$.

\textit{Fixed supports barycenter.} In this setting, we have $\mu_{\phi} = \sum_{i=1}^n \phi_i \delta_{x_i}$, $\mu_k= \sum_{i=1}^n \beta_i \delta_{x_i}$, $\sum_{i=1}^n \phi_i =\sum_{i=1}^n \beta_i $ and $\phi=(\phi_1,\ldots,\phi_n)$.  We can compute the gradient as follows:
\begin{align}
\label{eq:grad_W2}
    \nabla_{\phi}\text{W}_p^p (\theta \sharp \mu_{\phi},\theta \sharp \mu_k) = \boldsymbol{f}^\star, 
\end{align}
where $\boldsymbol{f}^\star$ is the first optimal Kantorovich dual potential of $\text{W}_p^p (\theta \sharp \mu_\phi,\theta \sharp \mu_k)$ which can be obtained with the time complexity of $\mathcal{O}(n\log n)$. We refer the reader to Proposition 1 in~\citep{cuturi2014fast} for the detail and Algorithm 1 in~\citep{sejourne2022faster} for the computational algorithm.

When the supports or weights of the barycenter are the output of a parametric function, we can use the chain rule to estimate the gradient of the parameters of the function. For the continuous case, we can approximate the barycenter and marginals by their empirical versions, and then perform the estimation in the discrete case. Since the sample complexity of SW is $\mathcal{O}(n^{-1/2})$~\citep{nadjahi2019asymptotic,nguyen2021distributional,manole2022minimax,nietert2022statistical}, the approximation error will reduce fast with the number of support $n$ increases. Another option is to use continuous Wasserstein solvers~\citep{fan2021scalable,korotin2022wasserstein,claici2018stochastic}, however, this option is not as simple as the first one.

\textbf{Sliced Multi-marginal Wasserstein Distance.} Given $K\geq 1$ marginals $\mu_1,\ldots,\mu_{K} \in \mathcal{P}_p(\mathbb{R}^d)$, Sliced Multi-marginal Wasserstein Distance~\citep{cohen2021sliced} (SMW) is defined as:
\begin{align}
\label{eq:SMW}
    SMW_p^p(\mu_{1:K};c)= \mathbb{E}\left[\inf_{\pi \in \Pi(\mu_1,\ldots,\mu_K)}\int c(\theta^\top x_1,\ldots,\theta^\top x_K)^p d\pi(x_1,\ldots,x_K)\right],
\end{align}
where the expectation is under $\theta \sim \mathcal{U}(\mathbb{S}^{d-1})$. When using the barycentric cost i.e., $$c(\theta^\top x_1,\ldots,\theta^\top x_K)^p=\sum_{k=1}^K \beta_k \left|\theta^\top x_k-\sum_{k'=1}^K \beta_{k'}\theta^\top x_{k'}\right|^p,$$ for $\beta_k>0 \quad \forall k$ and $\sum_{k}\beta_k =1$. Minimizing $SMW_p^p(\mu_{1:K},\mu;c)$ with respect to $\mu$ is equivalent to a barycenter problem. We refer the reader to Proposition 7 in~\citep{cohen2021sliced} for more detail.

\section{Marginal Fairness Sliced Wasserstein Barycenter}
\label{sec:MFSWB}
We first formally define the marginal fairness Sliced Wasserstein barycenter in Section~\ref{subsec:formal_definition}. We then propose surrogate problems in Section~\ref{subsec:surrogate_definitions}.  Finally, we discuss the connection of the proposed surrogate problems to sliced multi-marginal Wasserstein in Section~\ref{subsec:SMW}.

\subsection{Formal Definition}
\label{subsec:formal_definition}

 Now, we define the Marginal Fairness Sliced Wasserstein Barycenter (MFSWB) problem by adding marginal fairness constraints to the SWB problem.
\begin{definition}
\label{def:MFSWB}
    Given $K\geq 2$ marginals $\mu_1,\ldots,\mu_K \in \mathcal{P}_p(\mathbb{R}^d)$, admissible $\epsilon\geq 0$ for $i=1,\ldots, K$ and $j=i+1,\ldots,K$, the Marginal Fairness Sliced Wasserstein barycenter (MFSWB)  is defined as:
    \begin{align}
       & \min_{\mu} \frac{1}{K}\sum_{k=1}^K  \text{SW}_p^p (\mu,\mu_k) \quad  \text{ s.t. }  \frac{2}{(K-1)K}\sum_{i=1}^{K-1}\sum_{j=i+1}^K|\text{SW}_p^p (\mu,\mu_i) - \text{SW}_p^p (\mu,\mu_j)| \leq \epsilon .
    \end{align}
\end{definition}

\begin{remark}
    \label{remark:MFSWB} We want $\epsilon$ in Definition~\ref{def:MFSWB} to be close to $0$ i.e., $\mu_1,\ldots,\mu_K$ are on the $SW_p$-sphere with the center $\mu$. However, for a too-small value of $\epsilon$, there might not exist a solution $\mu$.
\end{remark}

\textit{Duality objective.} For admissible $\epsilon>0$, there exist a Lagrange multiplier $\lambda$ such that we have the dual form
\begin{align}
\label{eq:dual1}
    \mathcal{L}(\mu,\lambda)=\frac{1}{K}\sum_{k=1}^K \text{SW}_p^p (\mu,\mu_k) +   \frac{2\lambda  }{(K-1)K}\sum_{i=1}^{K-1}\sum_{j=i+1}^K|\text{SW}_p^p (\mu,\mu_i) - \text{SW}_p^p (\mu,\mu_j)|-\lambda\epsilon.
\end{align}


\textit{Computational challenges.} Firstly, MFSWB in Definition~\ref{def:MFSWB} requires an admissible $\epsilon > 0$ to guarantee the existence of the barycenter $\mu$. In practice, it is unknown if a value of $\epsilon$ satisfies such a property. Secondly, given an $\epsilon$, it is not trivial to obtain the optimal Lagrange multiplier $\lambda^\star$ in Equation~\eqref{eq:dual1} to minimize the duality gap, which can be non-zero (weak duality). Thirdly, directly using the dual objective in Equation~\eqref{eq:dual1} requires hyperparameter tuning for $\lambda$ and might not provide a good landscape for optimization. Moreover, we cannot obtain an unbiased gradient estimate of $\phi$ in the case of the parametric barycenter $\mu_\phi$. In greater detail, the Monte Carlo estimation of the absolute distance between two SW distances is biased. Finally, Equation~\eqref{eq:dual1} has a quadratic time complexity and space complexity in terms of the number of marginals, i.e., $\mathcal{O}(K^2)$.


\subsection{Surrogate Definitions}
\label{subsec:surrogate_definitions}
Since it is not convenient to use the formal MFSWB in applications, we propose three surrogate definitions of MFSWB that are free of hyperparameters and computationally friendly.

\textbf{First Surrogate Definition.} Motivated by Fair PCA~\citep{samadi2018price},  we propose a practical surrogate MFSWB problem that is hyperparameter-free.
\begin{definition}
    \label{def:sMFSWB}
    Given $K\geq 2$ marginals $\mu_1,\ldots,\mu_K \in \mathcal{P}_p(\mathbb{R}^d)$, the surrogate Marginal Fairness Sliced Wasserstein Barycenter (s-MFSWB) problem is defined as:
    \begin{align}
    \min_\mu \mathcal{SF}(\mu;\mu_{1:K});\quad  \mathcal{SF}(\mu;\mu_{1:K})=\max_{k \in \{1,\ldots,K\}} SW_p^p(\mu,\mu_k).
\end{align}
\end{definition}
The s-MFSWB problem tries to minimize the maximal distance from the barycenter to the marginals. Therefore, it can minimize indirectly the overall distances between the barycenter to the marginals and implicitly make the distances to marginals approximately the same.

\textit{Gradient estimator.} Let $\mu_\phi$ be paramterized by $\phi \in \Phi$, and $\mathcal{F}(\phi,k)= SW_p^p(\mu_\phi,\mu_k)$, we would like to  compute $\nabla_\phi \max_{k\in \{1,\ldots,K\}}\mathcal{F}(\phi,k)$. By Danskin's envelope theorem~\citep{danskin2012theory}, we have: $$\nabla_\phi \max_{k\in \{1,\ldots,K\}}\mathcal{F}(\phi,k)=\nabla_\phi\mathcal{F}(\phi,k^\star) = \nabla_\phi\text{SW}_p^p (\mu_\phi,\mu_{k^\star}), $$ for $k^\star = \argmax_{k\in \{1,\ldots,K\}}\mathcal{F}(\phi,k)$.  Nevertheless, $k^\star$ is intractable due to the intractablity of $SW_p^p(\mu_\phi,\mu_k)$ for $k=1,\ldots,K$. Hence, we can form the estimation $$\hat{k}^\star =  \argmax_{k\in \{1,\ldots,K\}} \widehat{SW}_p^p(\mu_\phi,\mu_k;L)$$ where $\widehat{SW}_p^p(\mu_\phi,\mu_k;L) = \frac{1}{L}\sum_{l=1}^L \text{W}_p^p (\theta_l \sharp \mu_\phi,\theta_l \sharp \mu_{k})$ with $\theta_1,\ldots,\theta_L \overset{i.i.d}{\sim} \mathcal{U}(\mathbb{S}^{d-1})$. Then, we can estimate $\nabla_\phi\text{SW}_p^p (\mu_\phi,\mu_{\hat{k}^\star})$ as in Equation~\ref{eq:grad_SW}. We refer the reader to Algorithm~\ref{alg:s-MFSWB} in Appendix~\ref{sec:add_materials} for the gradient estimation and optimization procedure. The downside of this estimator is that it is biased.

\textbf{Second Surrogate Definition.} To address the biased gradient issue of the first surrogate problem, we propose the second surrogate MFSWB problem.
\begin{definition}
    \label{def:usMFSWB}
    Given $K\geq 2$ marginals $\mu_1,\ldots,\mu_K \in \mathcal{P}_p(\mathbb{R}^d)$, the unbiased surrogate Marginal Fairness Sliced Wasserstein Barycenter (us-MFSWB) problem is defined as:
    \begin{align}
    \min_\mu \mathcal{USF}(\mu;\mu_{1:K});\quad \mathcal{USF}(\mu;\mu_{1:K})=\mathbb{E}_{\theta \sim \mathcal{U}(\mathbb{S}^{d-1})} \left[\max_{k \in \{1,\ldots,K\}} W_p^p(\theta \sharp \mu,\theta \sharp \mu_k)\right].
\end{align}
\end{definition}
In contrast to s-MFSWB which minimizes the maximal SW distance among marginals, us-MFSWB minimizes the expected value of the maximal one-dimensional Wasserstein distance among marginals. By considering fairness on one-dimensional projections, us-MFSWB can yield an unbiased gradient estimate which is the reason why it is named as unbiased s-MFSWB.

\textit{Gradient estimator.} Let $\mu_\phi$ be paramterized by $\phi \in \Phi$, and $\mathcal{F}(\theta, \phi,k)= W_p^p(\theta \sharp \mu_\phi,\theta \sharp \mu_k)$, we would like to  compute $\nabla_\phi \mathbb{E}_{\theta \sim \mathbb{S}^{d-1}}[\max_{k\in \{1,\ldots,K\}}\mathcal{F}(\theta,\phi,k)]$ which is equivalent to $\mathbb{E}_{\theta \sim \mathbb{S}^{d-1}}[\nabla_\phi \max_{k\in \{1,\ldots,K\}}\mathcal{F}(\theta,\phi,k)]$ due to the Leibniz's rule. By Danskin's envelope theorem, we have: $$\nabla_\phi \max_{k\in \{1,\ldots,K\}}\mathcal{F}(\theta,\phi,k)=\nabla_\phi\mathcal{F}(\theta,\phi,k^\star) = \nabla_\phi\text{W}_p^p (\theta\sharp \mu _\phi,\theta \sharp \mu_{k^\star}), $$ for $k^\star_\theta = \argmax_{k\in \{1,\ldots,K\}}\mathcal{F}(\theta,\phi,k)$ where we can estimate $\nabla_\phi\text{W}_p^p (\theta\sharp \mu_\phi,\theta \sharp \mu_{k^\star_\theta})$ can be computed as in Equation~\ref{eq:grad_W1}-~\ref{eq:grad_W2}. Overall, with $\theta_1,\ldots,\theta_L \overset{i.i.d}{\sim} \mathcal{U}(\mathbb{S}^{d-1})$, we can form the final estimation $ \frac{1}{L}\sum_{l=1}^L\nabla_\phi\text{W}_p^p (\theta_l\sharp \mu_\phi,\theta_l \sharp \mu_{k^\star_{\theta_l}})$ which is an unbiased estimate. We refer the reader to Algorithm~\ref{alg:us-MFSWB} in Appendix~\ref{sec:add_materials} for the gradient estimation and optimization procedure.

\begin{proposition}
    \label{prop:surrogate_bound}
Given $K\geq 2$ marginals $\mu_{1:K} \in \mathcal{P}_p(\mathbb{R}^d)$, we have $ \mathcal{SF}(\mu;\mu_{1:K}) \leq \mathcal{USF}(\mu;\mu_{1:K})$.
\end{proposition}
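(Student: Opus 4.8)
The plan is to exploit the fact that the sliced Wasserstein distance is itself an expectation over projections, so that both quantities can be written as operations applied to the same family of one-dimensional Wasserstein distances. Concretely, for each $k$ I would start from the defining identity in Equation~\eqref{eq:SW}, namely $\text{SW}_p^p(\mu,\mu_k)=\mathbb{E}_{\theta\sim\mathcal{U}(\mathbb{S}^{d-1})}[\text{W}_p^p(\theta\sharp\mu,\theta\sharp\mu_k)]$. This rewrites $\mathcal{SF}(\mu;\mu_{1:K})=\max_{k}\mathbb{E}_\theta[\text{W}_p^p(\theta\sharp\mu,\theta\sharp\mu_k)]$, whereas by definition $\mathcal{USF}(\mu;\mu_{1:K})=\mathbb{E}_\theta[\max_{k}\text{W}_p^p(\theta\sharp\mu,\theta\sharp\mu_k)]$. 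The claim then reduces to the elementary and standard inequality that the maximum of expectations never exceeds the expectation of the maximum.

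The key step is a pointwise domination argument. Fixing any index $j\in\{1,\ldots,K\}$, for every $\theta$ we trivially have $\text{W}_p^p(\theta\sharp\mu,\theta\sharp\mu_j)\leq \max_{k\in\{1,\ldots,K\}}\text{W}_p^p(\theta\sharp\mu,\theta\sharp\mu_k)$, since the left-hand side is one of the terms in the maximum. I would then integrate this inequality against $\mathcal{U}(\mathbb{S}^{d-1})$; monotonicity of the expectation preserves the inequality, yielding
\begin{align*}
\text{SW}_p^p(\mu,\mu_j)=\mathbb{E}_\theta[\text{W}_p^p(\theta\sharp\mu,\theta\sharp\mu_j)]\leq \mathbb{E}_\theta\Big[\max_{k\in\{1,\ldots,K\}}\text{W}_p^p(\theta\sharp\mu,\theta\sharp\mu_k)\Big]=\mathcal{USF}(\mu;\mu_{1:K}).
\end{align*}
Since the right-hand side is a single constant that does not depend on $j$, and the displayed bound holds for every $j$, taking the maximum over $j\in\{1,\ldots,K\}$ on the left gives $\mathcal{SF}(\mu;\mu_{1:K})=\max_{j}\text{SW}_p^p(\mu,\mu_j)\leq \mathcal{USF}(\mu;\mu_{1:K})$, which is exactly the asserted bound.

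There is no real obstacle here: the proof is essentially the observation that $\max$ and $\mathbb{E}$ do not commute, and that the ordering always favors $\mathbb{E}[\max]\geq \max[\mathbb{E}]$. The only point requiring minor care is measurability and integrability, so that the expectation of the pointwise maximum is well defined and the monotonicity of the integral applies; this is immediate because each $\theta\mapsto\text{W}_p^p(\theta\sharp\mu,\theta\sharp\mu_k)$ is measurable and nonnegative, the maximum of finitely many such functions is measurable, and $\mu,\mu_k\in\mathcal{P}_p(\mathbb{R}^d)$ guarantees finiteness of all the relevant integrals. I expect the entire argument to occupy only a few lines once the defining identity for $\text{SW}_p^p$ is invoked.
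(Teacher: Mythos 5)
Your proof is correct and follows essentially the same route as the paper's: both reduce the claim to the elementary inequality $\max_k \mathbb{E}_\theta[\cdot] \leq \mathbb{E}_\theta[\max_k \cdot]$ after expanding $\text{SW}_p^p$ via its defining expectation. The only cosmetic difference is that you bound each index $j$ uniformly and then take the maximum, whereas the paper instantiates the maximizing index $k^\star$ directly; the two arguments are interchangeable.
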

Proof of Proposition~\ref{prop:surrogate_bound} is given in Appendix~\ref{subsec:proof:prop:surrogate_bound}. From the proposition, we see that minimizing the objective of us-MFSWB also reduces the objective of s-MFSWB implicitly.

\begin{proposition}
    \label{prop:MCerror} Given $K\geq 2$ marginals $\mu_1,\ldots,\mu_K \in \mathcal{P}_p(\mathbb{R}^d)$, $\theta_1,\ldots,\theta_L \overset{i.i.d}{\sim} \mathcal{U}(\mathbb{S}^{d-1})$, we have:
    \begin{align}
        \mathbb{E}\left| \nabla_\phi\frac{1}{L} \sum_{l=1}^L\text{W}_p^p (\theta_l\sharp \mu_\phi,\theta_l \sharp \mu_{k^\star_\theta}) - \nabla_\phi\mathcal{USF}(\mu_\phi;\mu_{1:K}) \right| \leq\frac{1}{\sqrt{L}} \text{Var} \left[ \nabla_\phi \text{W}_p^p (\theta\sharp \mu_\phi,\theta \sharp \mu_{k^\star_\theta})\right]^{\frac{1}{2}},
    \end{align}
    where $k^\star_\theta = \argmax_{k\in \{1,\ldots,K\}} W_p^p(\theta \sharp \mu_\phi,\theta \sharp \mu_k)$;  and the expectation and variance are under the random projecting direction $\theta \sim \mathcal{U}(\mathbb{S}^{d-1})$
\end{proposition}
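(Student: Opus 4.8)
The plan is to recognize the right-hand side as the standard Monte Carlo variance bound for an unbiased average of i.i.d. samples, so that the claim reduces to a one-line second-moment computation followed by Jensen's inequality. Throughout, write $X_l = \nabla_\phi \text{W}_p^p(\theta_l \sharp \mu_\phi, \theta_l \sharp \mu_{k^\star_{\theta_l}})$ for the per-direction gradient summand and $\hat{G}_L = \frac{1}{L}\sum_{l=1}^L X_l$ for the estimator, and interpret $|\cdot|$ as the Euclidean norm $\|\cdot\|$ and $\text{Var}[\cdot]$ of a vector-valued random element as $\mathbb{E}\|\cdot - \mathbb{E}[\cdot]\|^2$, i.e. the trace of its covariance (this is the only reading consistent with the single power of $\text{Var}$ and the $L^{-1/2}$ scaling).

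First I would establish unbiasedness. Each $X_l$ has the same law as $X = \nabla_\phi \text{W}_p^p(\theta \sharp \mu_\phi, \theta \sharp \mu_{k^\star_\theta})$ with $\theta \sim \mathcal{U}(\mathbb{S}^{d-1})$, and $\mathbb{E}[X] = \nabla_\phi \mathcal{USF}(\mu_\phi;\mu_{1:K})$. This is exactly the content of the Leibniz-rule interchange together with Danskin's envelope theorem invoked in the discussion preceding the proposition: differentiating under the expectation over $\theta$ is valid, and at a fixed $\theta$ the gradient of the pointwise maximum equals the gradient of the active branch $k^\star_\theta$ with no contribution from differentiating through the $\argmax$. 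Consequently $\mathbb{E}[\hat{G}_L] = \nabla_\phi \mathcal{USF}(\mu_\phi;\mu_{1:K})$, and $\hat{G}_L - \nabla_\phi\mathcal{USF}(\mu_\phi;\mu_{1:K}) = \frac{1}{L}\sum_{l=1}^L (X_l - \mathbb{E}[X])$ is a centered average of i.i.d. terms.

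Next comes the second-moment computation and the concavity step. Since the $X_l$ are i.i.d. and the summands $X_l - \mathbb{E}[X]$ are centered, all cross terms vanish and $\mathbb{E}\|\hat{G}_L - \mathbb{E}[X]\|^2 = \frac{1}{L^2}\sum_{l=1}^L \mathbb{E}\|X_l - \mathbb{E}[X]\|^2 = \frac{1}{L}\,\text{Var}[X]$. Applying Jensen's inequality to the concave square root — equivalently the bound $\mathbb{E}[Y] \le (\mathbb{E}[Y^2])^{1/2}$ for the nonnegative scalar $Y = \|\hat{G}_L - \mathbb{E}[X]\|$ — then yields $\mathbb{E}\|\hat{G}_L - \mathbb{E}[X]\| \le (\mathbb{E}\|\hat{G}_L - \mathbb{E}[X]\|^2)^{1/2} = L^{-1/2}\,\text{Var}[X]^{1/2}$, which is precisely the claimed inequality with $X = \nabla_\phi \text{W}_p^p(\theta \sharp \mu_\phi, \theta \sharp \mu_{k^\star_\theta})$.

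The only genuinely delicate point is the unbiasedness step, namely justifying the interchange of $\nabla_\phi$ and $\mathbb{E}_\theta$ and the application of Danskin's theorem so that the $\theta$-dependent $\argmax$ contributes nothing to the gradient. Because this interchange is already asserted and relied upon in the description of the us-MFSWB gradient estimator immediately above the proposition, I would simply invoke it rather than re-derive it; everything else is the routine i.i.d. variance identity and Jensen's inequality, so I do not anticipate a further obstacle. I would also note the implicit assumption $\text{Var}[X] < \infty$ needed for the bound to be meaningful, which holds whenever the per-direction gradients are square-integrable — in particular for the compactly or finitely supported marginals of the discrete settings considered here.
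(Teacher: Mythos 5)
Your proposal is correct and follows essentially the same route as the paper's proof: the paper likewise bounds $\mathbb{E}|\cdot|$ by the root second moment (it labels this step H\"older's inequality, your Jensen bound $\mathbb{E}[Y]\leq(\mathbb{E}[Y^2])^{1/2}$ is the same inequality), then interchanges $\nabla_\phi$ with the expectation to center the i.i.d.\ average and identifies the second moment as $\frac{1}{L}\mathrm{Var}[\nabla_\phi \text{W}_p^p(\theta\sharp\mu_\phi,\theta\sharp\mu_{k^\star_\theta})]$. Your explicit remarks on the vector-valued reading of $|\cdot|$ and $\mathrm{Var}$, the Danskin/Leibniz justification of unbiasedness, and the implicit finite-variance assumption are careful additions the paper leaves tacit, but they do not change the argument.
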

Proof of Proposition~\ref{prop:MCerror} is given in Appendix~\ref{subsec:proof:prop:MCerror}. From the proposition, we know that the approximation error of the gradient estimator of us-MFSWB reduces at the order of $\mathcal{O}(L^{-1/2})$. Therefore, increasing $L$ leads to a better gradient approximation. The approximation could be further improved via Quasi-Monte Carlo methods~\citep{nguyen2024quasimonte}.

\textbf{Third Surrogate Definition.} The us-MFSWB in Definition~\ref{def:usMFSWB} utilizes the uniform distribution as the slicing distribution, which is empirically shown to be non-optimal in statistical estimation~\citep{nguyen2021distributional}. Following the slicing distribution selection approach in~\citep{nguyen2023energy}, we propose the third surrogate with a new slicing distribution that focuses on unfair projecting directions.

\textit{Marginal Fairness energy-based Slicing distribution.} Since we want to encourage marginal fairness, it is natural to construct the slicing distribution based on fairness energy.
\begin{definition}
    \label{def:mfebsd}  
     Given $K\geq 2$ marginals $\mu_1,\ldots,\mu_K \in \mathcal{P}_p(\mathbb{R}^d)$, the Marginal Fairness energy-based Slicing distribution $\sigma(\theta;\mu,\mu_{1:K}) \in \mathcal{P}(\mathbb{S}^{d-1})$ is defined with the density function as follow: 
    \begin{align}
        f_\sigma(\theta;\mu,\mu_{1:K}) \propto \exp\left(\max_{k \in \{1,\ldots,K\}} W_p^p(\theta \sharp \mu,\theta \sharp \mu_k)\right),
    \end{align}
\end{definition}
We see that the marginal fairness energy-based slicing distribution in Definition~\ref{def:mfebsd} put more mass to a projecting direction $\theta$ that has the larger maximal one-dimensional Wasserstein distance to marginals. Therefore, it will penalize more marginally unfair projecting directions.

\textit{Energy-based surrogate MFSWB.} From the new proposed slicing distribution, we can define a new surrogate MFSWB problem, named Energy-based surrogate MFSWB.

\begin{definition}
    \label{def:us-EMFSWB}
    Given $K\geq 2$ marginals $\mu_1,\ldots,\mu_K \in \mathcal{P}_p(\mathbb{R}^d)$, the energy-based surrogate Marginal Fairness Sliced Wasserstein Barycenter (es-MFSWB) problem is defined as:
    \begin{align}
    \min_\mu \mathcal{ESF}(\mu;\mu_{1:K});\quad \mathcal{ESF}(\mu;\mu_{1:K})=\mathbb{E}_{\theta \sim \sigma(\theta;\mu,\mu_{1:K}) } \left[\max_{k \in \{1,\ldots,K\}} W_p^p(\theta \sharp \mu,\theta \sharp \mu_k)\right].
\end{align}
\end{definition}
Similar to the us-MFSWB, es-MFSWB also employs the implicit one-dimensional marginal fairness. Nevertheless, es-MFSWB utilizes the marginal fairness energy-based slicing distribution  to reweight the importance of each projecting direction instead of treating them equally. 
\begin{proposition}
    \label{prop:surrogate_bound2}
Given $K\geq 2$ marginals $\mu_{1:K} \in \mathcal{P}_p(\mathbb{R}^d)$, we have $\mathcal{USF}(\mu;\mu_{1:K}) \leq \mathcal{ESF}(\mu;\mu_{1:K})$.
\end{proposition}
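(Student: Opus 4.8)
The plan is to express both objectives as integrals against the uniform measure and reduce the claim to a single covariance inequality. First I would set $g(\theta) := \max_{k \in \{1,\ldots,K\}} W_p^p(\theta \sharp \mu, \theta \sharp \mu_k)$, a nonnegative measurable function on $\mathbb{S}^{d-1}$, so that $\mathcal{USF}(\mu;\mu_{1:K}) = \mathbb{E}_{\theta \sim \mathcal{U}(\mathbb{S}^{d-1})}[g(\theta)]$. By Definition~\ref{def:mfebsd}, the marginal fairness energy-based slicing distribution has density $f_\sigma(\theta) = e^{g(\theta)}/Z$ with respect to $\mathcal{U}(\mathbb{S}^{d-1})$, where $Z = \mathbb{E}_{\theta \sim \mathcal{U}(\mathbb{S}^{d-1})}[e^{g(\theta)}]$ is the normalizing constant. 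Rewriting the expectation in $\mathcal{ESF}$ as an expectation under the uniform base measure then gives $\mathcal{ESF}(\mu;\mu_{1:K}) = \mathbb{E}_{\mathcal{U}}[g\, e^{g}]\,/\,\mathbb{E}_{\mathcal{U}}[e^{g}]$, where I abbreviate $\mathbb{E}_{\mathcal{U}} := \mathbb{E}_{\theta \sim \mathcal{U}(\mathbb{S}^{d-1})}$.

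Next I would observe that, after clearing the strictly positive denominator $Z = \mathbb{E}_{\mathcal{U}}[e^{g}]$, the target inequality $\mathcal{USF} \le \mathcal{ESF}$ is equivalent to $\mathbb{E}_{\mathcal{U}}[g]\,\mathbb{E}_{\mathcal{U}}[e^{g}] \le \mathbb{E}_{\mathcal{U}}[g\, e^{g}]$, i.e.\ to the assertion that $\mathrm{Cov}_{\mathcal{U}}(g, e^{g}) \ge 0$. This is exactly the statement that the two quantities $g(\theta)$ and $e^{g(\theta)}$ are positively correlated under the uniform measure, which is intuitively clear because both are nondecreasing functions of the same underlying quantity $g(\theta)$: the energy reweighting upweights precisely the directions where $g$ is large.

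To establish the covariance bound rigorously I would use the standard two-copy coupling argument (the Chebyshev/Harris association inequality). Let $\theta, \theta'$ be i.i.d.\ uniform on $\mathbb{S}^{d-1}$; then $2\,\mathrm{Cov}_{\mathcal{U}}(g, e^{g}) = \mathbb{E}_{\theta,\theta'}\bigl[(g(\theta) - g(\theta'))(e^{g(\theta)} - e^{g(\theta')})\bigr]$. Since $t \mapsto e^{t}$ is nondecreasing, the two factors $g(\theta) - g(\theta')$ and $e^{g(\theta)} - e^{g(\theta')}$ always share the same sign, so the integrand is pointwise nonnegative; hence $\mathrm{Cov}_{\mathcal{U}}(g, e^{g}) \ge 0$, which yields the claim.

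The only point requiring care is finiteness and measurability: I would note that $g$ is measurable and, under the standing assumption $\mu_{1:K}, \mu \in \mathcal{P}_p(\mathbb{R}^d)$ together with boundedness of supports (as used elsewhere in the paper), $g$ is bounded, so $Z < \infty$ and all expectations above are well defined, justifying the manipulations. I do not anticipate any genuine obstacle here; the entire content of the proof is recognizing that the energy-based density is an increasing reweighting of $g$, so that the association inequality applies directly.
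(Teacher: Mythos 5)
Your proof is correct, and it takes a genuinely different route from the paper's. You prove the continuous statement directly: writing $\mathcal{ESF} = \mathbb{E}_{\mathcal{U}}[g e^{g}]/\mathbb{E}_{\mathcal{U}}[e^{g}]$ and reducing the claim to $\mathrm{Cov}_{\mathcal{U}}(g, e^{g}) \geq 0$, which you establish via the two-copy symmetrization $2\,\mathrm{Cov}_{\mathcal{U}}(g,e^{g}) = \mathbb{E}_{\theta,\theta'}[(g(\theta)-g(\theta'))(e^{g(\theta)}-e^{g(\theta')})] \geq 0$, the integrand being pointwise nonnegative by monotonicity of $t \mapsto e^{t}$. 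The paper instead proves the \emph{discrete} Chebyshev sum inequality $\frac{1}{L}(\sum_i a_i)(\sum_i b_i) \leq \sum_i a_i b_i$ for similarly ordered sequences (Lemma~\ref{lemma:inequality}, by induction using the rearrangement inequality), applies it to the $L$-sample Monte Carlo/importance-sampling estimators $\widehat{\mathcal{USF}}$ and $\widehat{\mathcal{ESF}}$, and then lets $L \to \infty$ via the law of large numbers. The two arguments rest on the same underlying principle --- positive association of comonotone functions, i.e.\ the continuous and discrete Chebyshev inequalities --- but yours is shorter and cleaner: it avoids the induction, the discretization, and the limiting step (which in the paper implicitly requires almost-sure convergence of a ratio of empirical means, a point the paper glosses). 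What the paper's route buys in exchange is a finite-sample statement of independent interest: $\widehat{\mathcal{USF}}(\mu;\mu_{1:K},L) \leq \widehat{\mathcal{ESF}}(\mu;\mu_{1:K},L)$ for \emph{every} $L \geq 1$, i.e.\ the inequality already holds for the quantities actually computed in the algorithms, not just in the limit --- your covariance argument does not yield this. One small remark on your integrability caveat: rather than invoking bounded supports, it is enough to observe that $\mathbb{E}_{\mathcal{U}}[e^{g}] < \infty$ is already implicitly required for the slicing distribution of Definition~\ref{def:mfebsd} to be well defined, so your manipulations are justified whenever $\mathcal{ESF}$ itself is (together with $\mathbb{E}_{\mathcal{U}}[g e^{g}] < \infty$, with the convention that the inequality is trivial when the right-hand side is infinite).
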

Proof of Proposition~\ref{prop:surrogate_bound2} is given in Appendix~\ref{subsec:proof:prop:surrogate_bound2}. According to the proposition, we see that minimizing the objective of es-MFSWB implicitly reduces the objective of us-MFSWB thereby decreasing the objective of s-MFSWB as well (Proposition~\ref{prop:surrogate_bound})."

\textit{Gradient estimator.} Let $\mu_\phi$ be parameterized by $\phi \in \Phi$, we want to estimate $\nabla_\phi \mathcal{ESF}(\mu_\phi;\mu_{1:K})$. Since the slicing distribution is unnormalized, we use importance sampling to form an estimation. 
With $\theta_1,\ldots,\theta_L \overset{i.i.d}{\sim} \mathcal{U}(\mathbb{S}^{d-1})$, we can form the importance sampling stochastic gradient estimation:
\begin{align*}
    \hat{\nabla}_\phi \mathcal{ESF}(\mu_\phi;\mu_{1:K},L) =\frac{1}{L}\sum_{l=1}^L\left[ \nabla_\phi  \left( W_p^p(\theta_l \sharp \mu,\theta_l \sharp \mu_{k^\star_{\theta_l} }) \frac{\exp\left(W_p^p(\theta_l \sharp \mu,\theta_l \sharp \mu_{k^\star_{\theta_l} })\right)}{\frac{1}{L}\sum_{i=1}^L\left[\exp\left(W_p^p(\theta_i \sharp \mu,\theta_i \sharp \mu_{k^\star_{\theta_i} })\right)\right]}\right)\right],
\end{align*}
which can be further derived by using the chain rule and previously discussed techniques. It is worth noting that the above estimation is only asymptotically unbiased. We refer the reader to Algorithm~\ref{alg:es-MFSWB} in Appendix~\ref{sec:add_materials} for the gradient estimation and optimization procedure.

\textbf{Computational complexities of proposed surrogates.} For the number of marginals $K$, the three proposed surrogates have a linear time complexity and space complexity i.e., $\mathcal{O}(K)$ which is the same as the conventional SWB and is better than $\mathcal{O}(K^2)$ of the formal MFSWB. For the number of projections $L$, the number of supports $n$, and the number of dimensions $d$, the proposed surrogates have the time complexity of $\mathcal{O}(Ln(\log n +d))$ and the space complexity of $\mathcal{O}(L(n+d))$ which are similar to the formal MFSWB and SWB.

\subsection{Sliced multi-marginal Wasserstein distance with maximal ground metric}
\label{subsec:SMW}
To shed some light on the proposed substrates, we connect them to a special variant of Sliced multi-marginal Wasserstein (SMW) (see Equation~\ref{eq:SMW}) i.e., SMW with the maximal ground metric $$c(\theta^\top x_1,\ldots,\theta^\top x_K) = \max_{i \in \{1,\ldots,K\}, j \in \{1,\ldots,K\}} |\theta^\top x_i -\theta^\top x_j|.$$ We first show that SMW with the maximal ground metric is a generalized metric on the space of probability measures.

\begin{proposition}
    \label{prop:metricity}
     Sliced multi-marginal Wasserstein distance with the maximal ground metric is a generalized metric i.e., it satisfies non-negativity, marginal exchangeability, generalized triangle inequality, and identity of indiscernibles. 
\end{proposition}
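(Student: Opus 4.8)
The plan is to verify the four defining properties of a generalized metric for the functional $SMW_p(\mu_{1:K}) := SMW_p^p(\mu_{1:K};c)^{1/p}$ with maximal ground cost $c(\theta^\top x_1,\ldots,\theta^\top x_K)=\max_{i,j}|\theta^\top x_i-\theta^\top x_j|$. The unifying observation is that $SMW_p^p$ is an expectation over $\theta\sim\mathcal{U}(\mathbb{S}^{d-1})$ of a one-dimensional multi-marginal transport cost: since $c$ sees the marginals only through their projections, the inner infimum equals $M_p^p(\theta\sharp\mu_{1:K}):=\inf_{\pi\in\Pi(\theta\sharp\mu_1,\ldots,\theta\sharp\mu_K)}\int c(t_1,\ldots,t_K)^p\,d\pi$, so any property that holds pointwise in $\theta$ for $M_p$ lifts to $SMW_p$ after integration (via Minkowski's inequality for the triangle case). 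First I would record the key structural fact that $c(t)=\max_i t_i-\min_i t_i$ is a seminorm on $\mathbb{R}^K$ vanishing exactly on the constant vectors $\mathrm{span}(\mathbf{1})$, and that it is invariant under adding a constant, $c(t)=c(t-u\mathbf{1})$; this drives both the identity of indiscernibles and the triangle inequality.

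Non-negativity is immediate since $c\ge 0$. Marginal exchangeability holds because $c$ is symmetric in its arguments and the coupling set $\Pi$ is mapped to itself when the marginals and the corresponding coordinates are permuted simultaneously, so $M_p$ and hence $SMW_p$ are unchanged. For the identity of indiscernibles, one direction uses the diagonal coupling: if $\mu_1=\cdots=\mu_K=\mu$, pushing $\mu$ through the diagonal map gives a coupling supported on $\{t_1=\cdots=t_K\}$, where $c\equiv 0$, so $SMW_p=0$. For the converse I would use that the maximal cost dominates every pairwise term, $c(t)\ge|t_i-t_j|$, so that $M_p^p(\theta\sharp\mu_{1:K})\ge \text{W}_p^p(\theta\sharp\mu_i,\theta\sharp\mu_j)$ for each pair; thus $SMW_p(\mu_{1:K})=0$ forces $\text{SW}_p(\mu_i,\mu_j)=0$ for all $i,j$, and injectivity of the Radon transform (equivalently, that $\text{SW}_p$ is a metric) yields $\mu_i=\mu_j$.

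The generalized triangle inequality is the crux. For an arbitrary auxiliary measure $\nu$ I would establish, for each fixed $\theta$, a bound of the form $M_p(\theta\sharp\mu_{1:K})\le\sum_{k=1}^K(\text{term involving }\nu\text{ in slot }k)$ by a gluing construction: take optimal one-dimensional couplings between $\theta\sharp\nu$ and each $\theta\sharp\mu_k$, glue them along the shared factor $\theta\sharp\nu$ into a single coupling $\Gamma^\theta$ of $(\theta\sharp\nu,\theta\sharp\mu_1,\ldots,\theta\sharp\mu_K)$, and take its $\mu$-marginal as an admissible element of $\Pi$. Writing $u$ for the shared $\nu$-coordinate and $t_k$ for the $\mu_k$-coordinate, the seminorm property and the invariance $c(t)=c(t-u\mathbf{1})$ give the pointwise estimate $c(t)\le 2\max_k|t_k-u|$; taking the $L^p(\Gamma^\theta)$-norm, then Minkowski across the $K$ displacements, and finally the $\theta$-expectation assembles the right-hand side from the slice-wise distances between $\nu$ and each $\mu_k$. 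I expect the obstacle to be twofold: first, carrying out the gluing when $\nu$ occupies a different slot in each summand, which I anticipate handling entirely at the one-dimensional level, where the monotone (quantile) coupling is simultaneously optimal for all pairs and renders the glued coupling explicit; and second, matching the intended form of the inequality while controlling the constant generated by the $\max_i-\min_i$ structure, so that the result is a genuine (possibly constant-relaxed) triangle inequality rather than a lossy estimate. Once the slice-wise inequality is secured, Minkowski's inequality in the $\theta$-average upgrades it to the stated inequality for $SMW_p$.
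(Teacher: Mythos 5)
Your handling of non-negativity, marginal exchangeability, and the identity of indiscernibles is correct and essentially identical to the paper's: the paper also gets the hard direction of indiscernibility from the pairwise domination $c(\theta^\top x_1,\ldots,\theta^\top x_K)\geq |\theta^\top x_i-\theta^\top x_j|$, which yields $SMW_p^p(\mu_{1:K};c)\geq \max_{i,j}SW_p^p(\mu_i,\mu_j)$ and then invokes the metricity of $SW_p$, with the diagonal coupling for the converse. The genuine gap is exactly where you hedged: the generalized triangle inequality. Your gluing-through-$\nu$ construction with the displacement bound $c(t)\leq 2\max_k|t_k-u|$ can only deliver $SMW_p^p(\mu_{1:K};c)\leq 2^p\sum_{k=1}^K SW_p^p(\nu,\mu_k)$, and this is deficient in two ways. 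First, the factor $2$ is tight for your pointwise estimate (take $t_1=-1$, $t_2=1$, $u=0$), so no refinement of the gluing removes it along this route; you end with a quasi-metric inequality, which is strictly weaker than the property being claimed. Second, the right-hand side has the wrong type: the generalized triangle inequality proved in the paper is the substitution form $SMW_p(\mu_{1:K};c)\leq \sum_{k=1}^K SMW_p(\mu_1,\ldots,\mu_{k-1},\mu,\mu_{k+1},\ldots,\mu_K;c)$, whose summands are $K$-marginal quantities, not pairwise sliced distances to the auxiliary measure; your bound does not imply this form with constant $1$ (only with constant $2$, via $SW_p(\nu,\mu_j)\leq SMW_p(\ldots,\nu \text{ in slot }k,\ldots;c)$ for $j\neq k$ and a derangement of indices).

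The idea your proof is missing is the paper's leave-one-out decomposition of the cost, which replaces your displacement bound and is what avoids the doubling. Pointwise one has $\max_{i,j}|\theta^\top x_i-\theta^\top x_j|^p\leq \sum_{k=1}^K\max_{i,j\neq k}|\theta^\top x_i-\theta^\top x_j|^p$, since (for $K\geq 3$) any $k$ outside the maximizing pair leaves that pair intact in the $k$-th summand; and each leave-$k$-out maximum is in turn dominated by the full maximal cost of the tuple in which $x_k$ is replaced by $y\sim\mu$, because it is one of the arguments of that larger maximum. The paper integrates the $k$-th summand against a plan obtained by gluing the optimal multi-marginal plan $\pi^\star$ with $\mu$ (asserting via the gluing lemma that these glued plans can be taken to realize the substituted infima), sums over $k$ to get the substitution-form inequality with constant $1$ in $p$-th powers, and concludes the unpowered statement from $(\sum_k b_k^p)^{1/p}\leq\sum_k b_k$. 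Note that no step here bounds $c$ by distances to the auxiliary point $u$, which is precisely where your factor $2$ and your degradation to pairwise $SW$ terms both enter; your comonotone one-dimensional gluing is a fine device for producing admissible plans, but the displacement estimate driving your argument must be replaced by this leave-one-out comparison for the proposition as stated to follow. (A minor additional remark: your identification of the inner infimum with the one-dimensional multi-marginal cost $M_p^p(\theta\sharp\mu_{1:K})$ needs, for the nontrivial direction, a lifting of a coupling of the projected measures to a coupling of the original ones by disintegration; this is standard but worth a sentence.)
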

Proof of Proposition~\ref{prop:metricity} is given in Appendix~\ref{subsec:proof:prop:metricity}. It is worth noting that SMW with the maximal ground metric has never been defined before. Since our work focuses on the MFSWB problem, we will leave the careful investigation of this variant of SMW to future work.

\begin{proposition}
    \label{prop:SMW}
     Given $K\geq 2$ marginals $\mu_1,\ldots,\mu_K \in \mathcal{P}_p(\mathbb{R}^d)$, the maximal ground metric $c(\theta^\top x_1,\ldots,\theta^\top x_K) = \max_{i \in \{1,\ldots,K\}, j \in \{1,\ldots,K\}} |\theta^\top x_i -\theta^\top x_j|$, we have: 
     \begin{align}
         \min_{\mu_1} \mathcal{USF}(\mu_1;\mu_{2:K}) \leq \min_{\mu_1}SMW_p^p(\mu_1,\mu_2,\ldots,\mu_{K};c).
     \end{align} 
\end{proposition}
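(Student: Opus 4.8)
The plan is to establish the stronger pointwise-in-$\mu_1$ inequality $\mathcal{USF}(\mu_1;\mu_{2:K}) \leq SMW_p^p(\mu_1,\mu_2,\ldots,\mu_K;c)$ for \emph{every} $\mu_1 \in \mathcal{P}_p(\mathbb{R}^d)$, and then pass to the minimum. The reduction to the minimum is immediate: if $\mu_1^\dagger$ denotes a minimizer of the right-hand side, then $\min_{\mu_1}\mathcal{USF}(\mu_1;\mu_{2:K}) \leq \mathcal{USF}(\mu_1^\dagger;\mu_{2:K}) \leq SMW_p^p(\mu_1^\dagger,\ldots,\mu_K;c) = \min_{\mu_1} SMW_p^p(\mu_1,\ldots,\mu_K;c)$. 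Moreover, since both $\mathcal{USF}$ and $SMW_p^p$ are expectations over the same slicing variable $\theta \sim \mathcal{U}(\mathbb{S}^{d-1})$, by monotonicity of the expectation it suffices to prove, for each fixed $\theta$ and fixed $\mu_1$, the inner inequality $\max_{k \in \{2,\ldots,K\}} W_p^p(\theta\sharp\mu_1, \theta\sharp\mu_k) \leq \inf_{\pi \in \Pi(\mu_1,\ldots,\mu_K)} \int c(\theta^\top x_1,\ldots,\theta^\top x_K)^p \, d\pi$.

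First I would fix $\theta$ and an arbitrary multi-marginal coupling $\pi \in \Pi(\mu_1,\ldots,\mu_K)$. The defining feature of the maximal ground metric is that it dominates every pairwise gap: for each index $k$ we have $\max_{i,j}|\theta^\top x_i - \theta^\top x_j| \geq |\theta^\top x_1 - \theta^\top x_k|$, hence $c(\theta^\top x_1,\ldots,\theta^\top x_K)^p \geq |\theta^\top x_1 - \theta^\top x_k|^p$ pointwise. Integrating against $\pi$ and noting that the resulting integrand depends only on the $(1,k)$ coordinates, I would pass to the bivariate marginal $\pi_{1,k} \in \Pi(\mu_1,\mu_k)$ to get $\int c^p\,d\pi \geq \int |\theta^\top x_1 - \theta^\top x_k|^p\, d\pi_{1,k}$. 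The pushforward of $\pi_{1,k}$ under $(x_1,x_k)\mapsto(\theta^\top x_1,\theta^\top x_k)$ is a valid coupling of the one-dimensional measures $\theta\sharp\mu_1$ and $\theta\sharp\mu_k$, so this last integral is at least $W_p^p(\theta\sharp\mu_1, \theta\sharp\mu_k)$ by the definition of the one-dimensional Wasserstein distance as an infimum over couplings.

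Since the bound $\int c^p\,d\pi \geq W_p^p(\theta\sharp\mu_1,\theta\sharp\mu_k)$ holds for every $k \in \{2,\ldots,K\}$ while its left-hand side is independent of $k$, I would take the maximum over $k$ to get $\int c^p\,d\pi \geq \max_{k} W_p^p(\theta\sharp\mu_1,\theta\sharp\mu_k)$, and then the infimum over $\pi$ to obtain the desired fixed-$\theta$ inequality. Taking expectation over $\theta$ yields $\mathcal{USF}(\mu_1;\mu_{2:K}) \leq SMW_p^p(\mu_1,\ldots,\mu_K;c)$ for every $\mu_1$, and the minimization argument above concludes. The argument is essentially self-contained; the only point requiring care is the coupling bookkeeping, namely that marginalizing a multi-marginal plan to two coordinates and then projecting via $\theta$ produces an admissible one-dimensional coupling, so that the comparison with $W_p^p$ is legitimate. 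This is routine but is the one spot where an invalid manipulation could slip in, so it is where I would be most careful.
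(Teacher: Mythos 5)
Your proof is correct and is essentially the paper's own argument in a cleaner arrangement: both lower-bound the ground cost pointwise by the pairwise gaps $|\theta^\top x_1-\theta^\top x_k|^p$, marginalize the multimarginal plan to a bivariate coupling whose $\theta$-projection couples $\theta\sharp\mu_1$ and $\theta\sharp\mu_k$, and thereby deduce $\inf_{\pi}\int c^p\,d\pi \geq \max_{k} W_p^p(\theta\sharp\mu_1,\theta\sharp\mu_k)$ before taking the expectation over $\theta$ and minimizing in $\mu_1$. Fixing $\pi$ first and noting the left side is independent of $k$ makes the paper's max--min exchange (and its superfluous appeal to Jensen's inequality, which is really just pointwise domination) automatic, so the two routes coincide in substance.
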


Proof of Proposition~\ref{prop:SMW} is given in Appendix~\ref{subsec:proof:prop:SMW} and the inequality holds when changing $\mu_1$ to any $\mu_i$ with $i=2,\ldots,K$. Combining Proposition~\ref{prop:surrogate_bound}, we have the corollary of $\min_{\mu_1} \mathcal{SF}(\mu_1;\mu_{2:K}) \leq \min_{\mu_1}SMW_p^p(\mu_1,\mu_2,\ldots,\mu_{K};c)$. From the proposition, we see that minimizing the us-MFSWB is equivalent to minimizing a lower bound of SMW with the maximal ground metric. Therefore, this proposition implies the us-MFSWB could try to minimize the multi-marginal distance. Moreover, this proposition can help to understand the proposed surrogates through the gradient flow of SMW. We can further extend the proposition to show the minimizing es-MFSWB objective is the same as minimizing a lower bound of energy-based SMW with the maximal ground metric, a new special variant of SMW. We refer the reader to Propositon~\ref{prop:ESMW} in Appendix~\ref{sec:add_materials} for more detail.

\begin{figure}[!t]
\begin{center}
    
  \begin{tabular}{ccccc}
  \widgraph{0.211\textwidth}{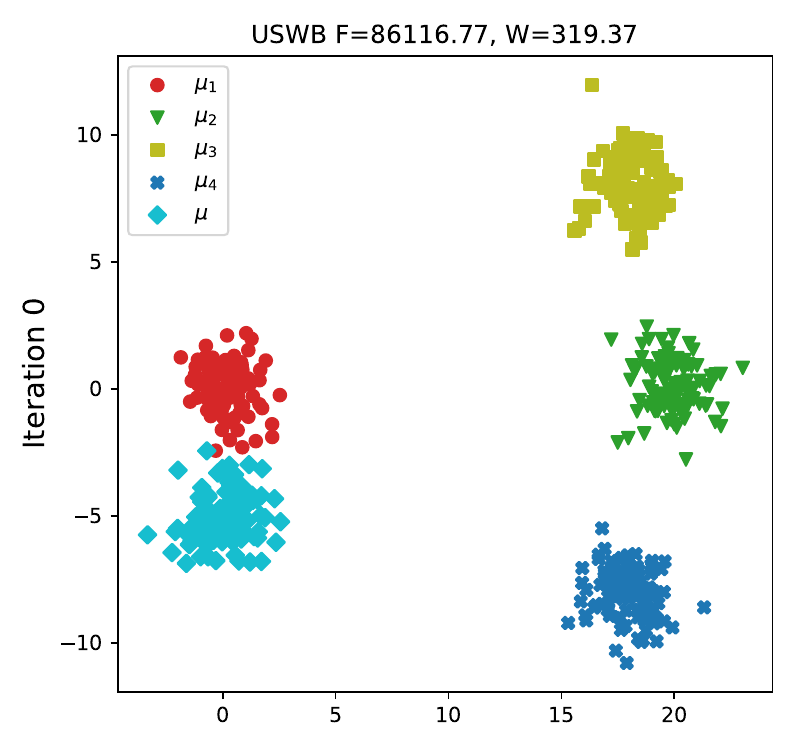} &\hspace{-0.2 in}\widgraph{0.2\textwidth}{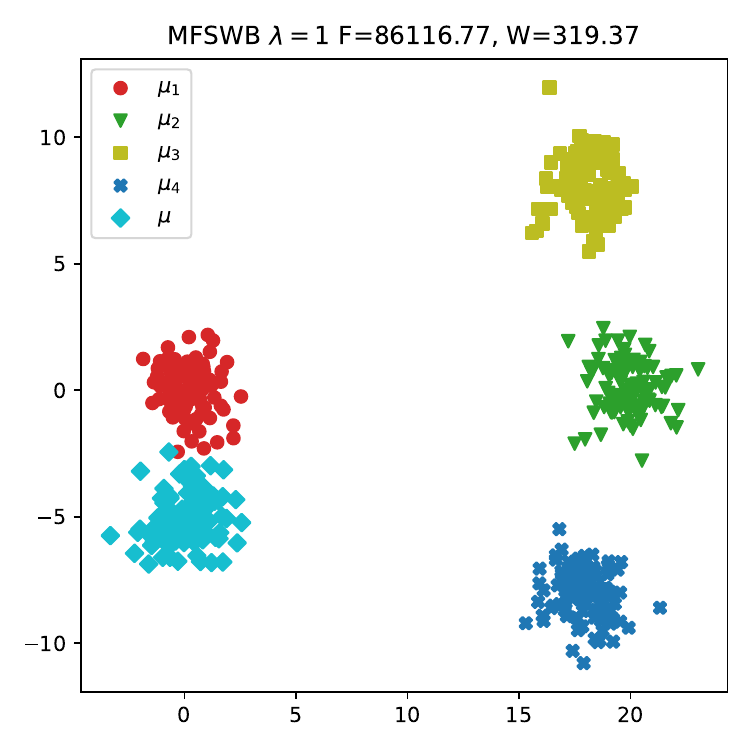}&\hspace{-0.2 in}\widgraph{0.2\textwidth}{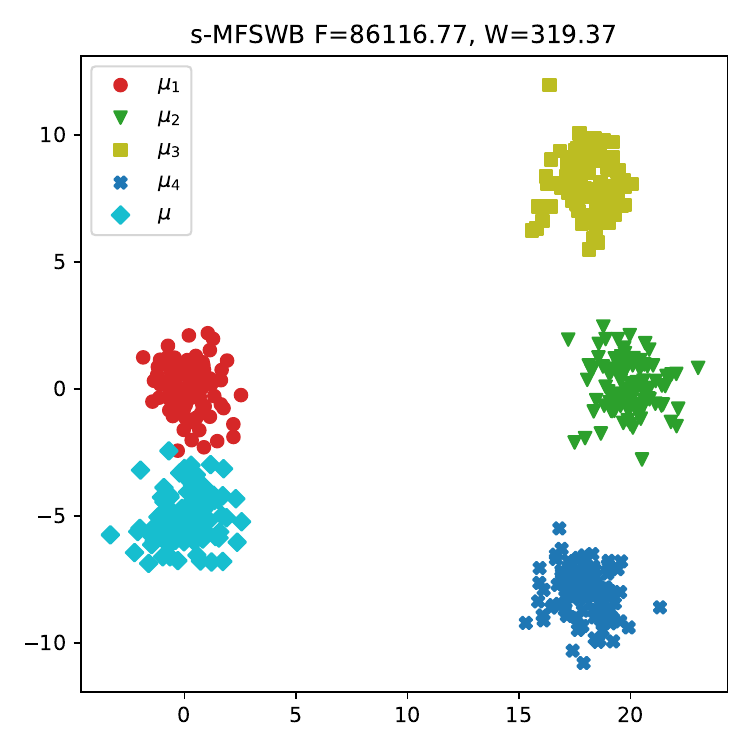}&\hspace{-0.2 in}\widgraph{0.2\textwidth}{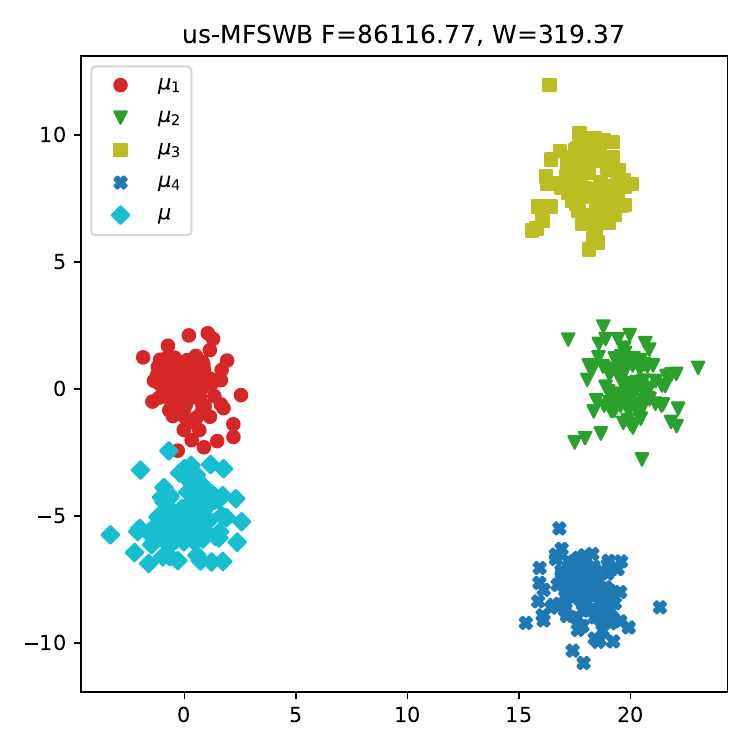} &\hspace{-0.2 in}\widgraph{0.2\textwidth}{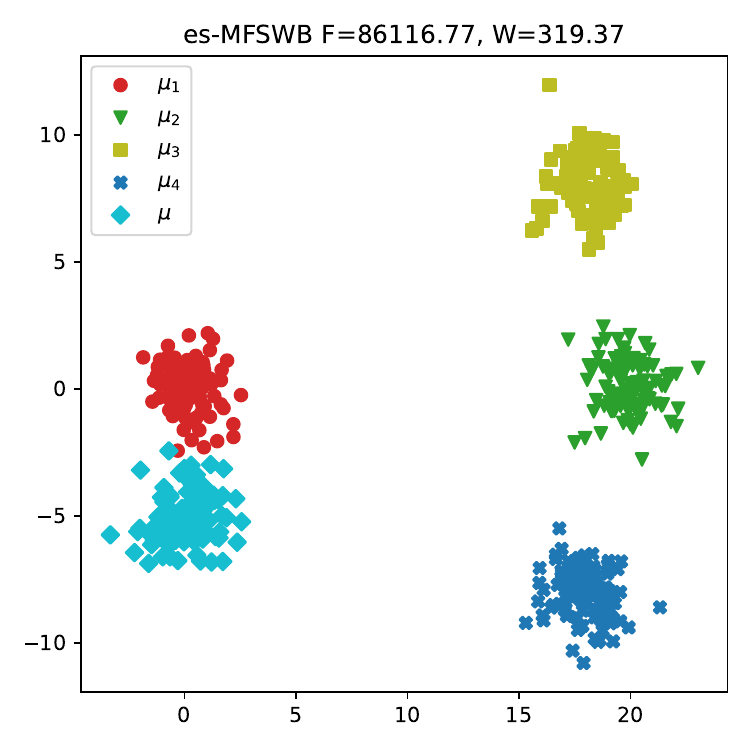}   \vspace{-0.05in}
  \\
   \widgraph{0.211\textwidth}{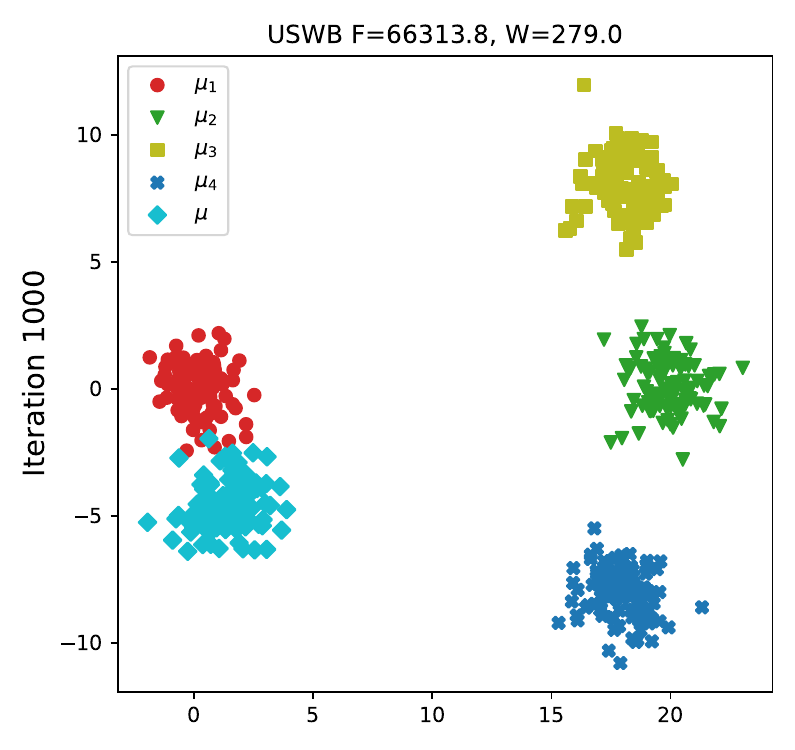} &\hspace{-0.2 in}\widgraph{0.2\textwidth}{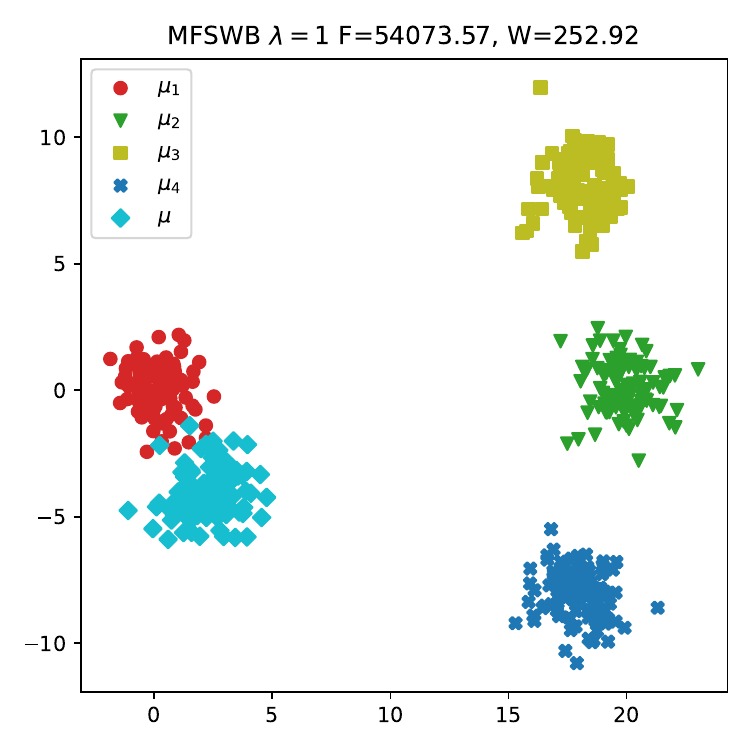}&\hspace{-0.2 in}\widgraph{0.2\textwidth}{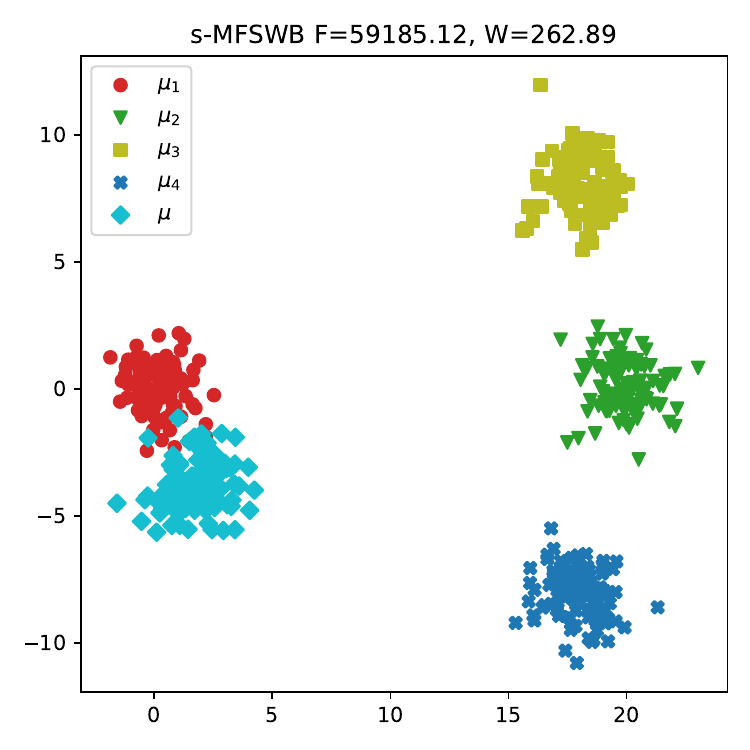}&\hspace{-0.2 in}\widgraph{0.2\textwidth}{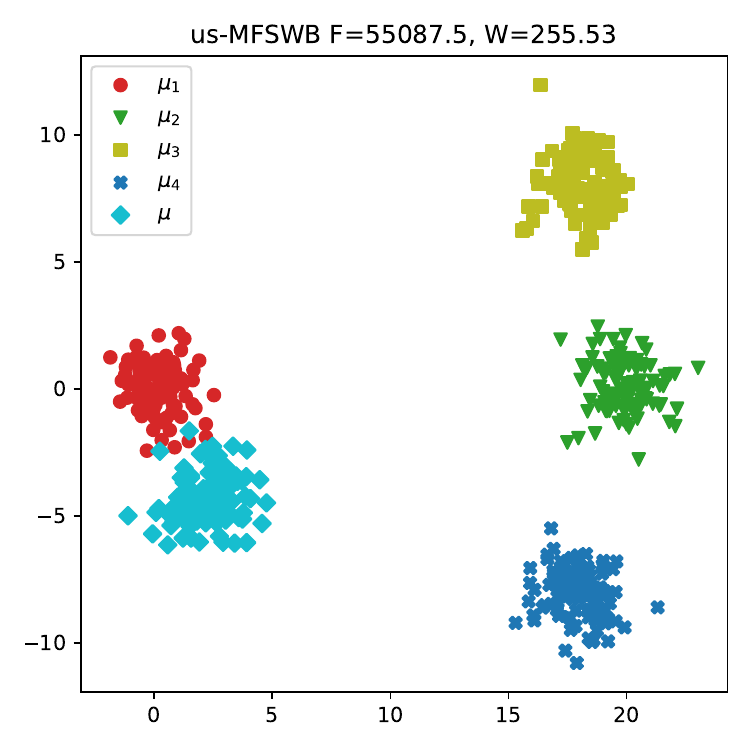} &\hspace{-0.2 in}\widgraph{0.2\textwidth}{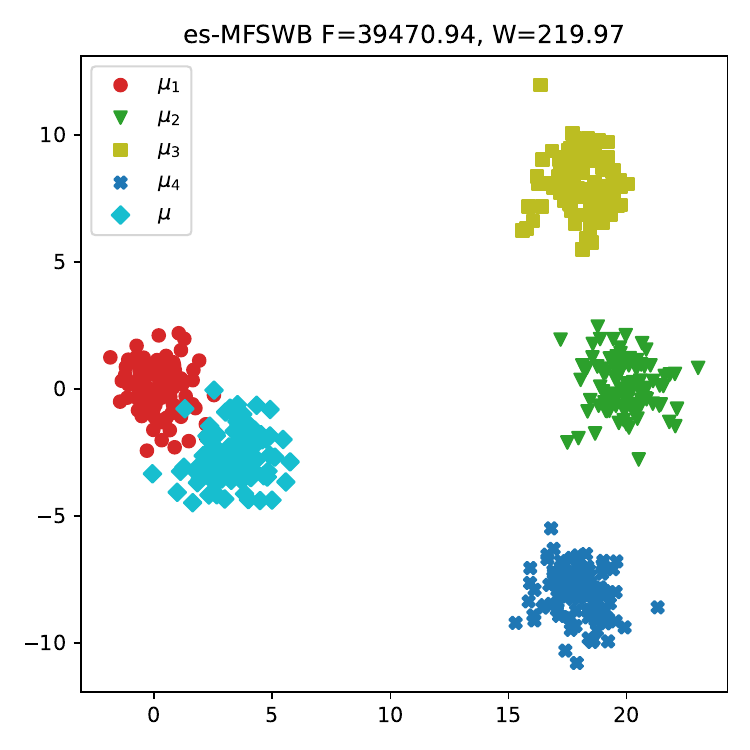}   \vspace{-0.05in}
  \\
  \widgraph{0.211\textwidth}{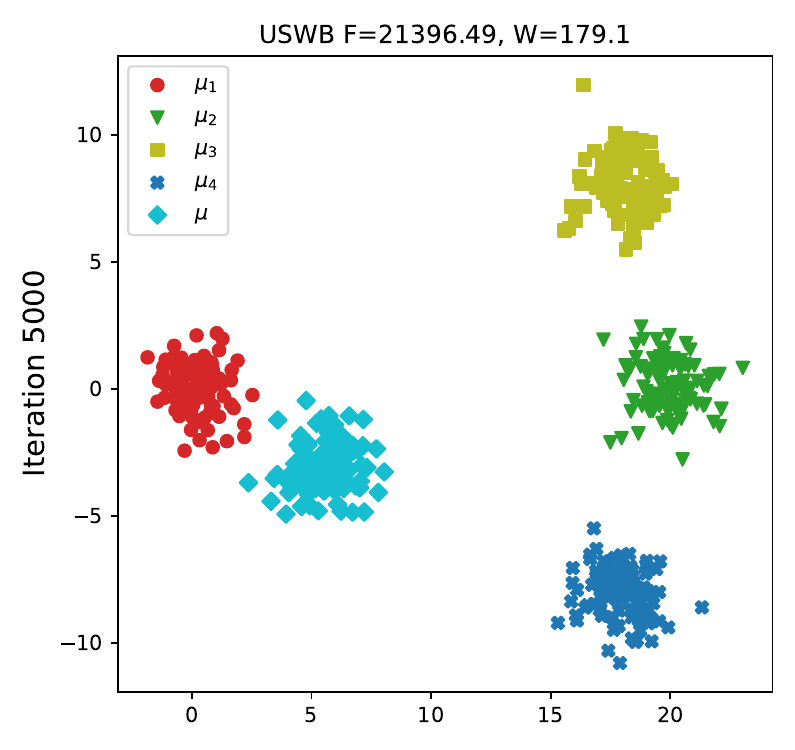} &\hspace{-0.2 in}\widgraph{0.2\textwidth}{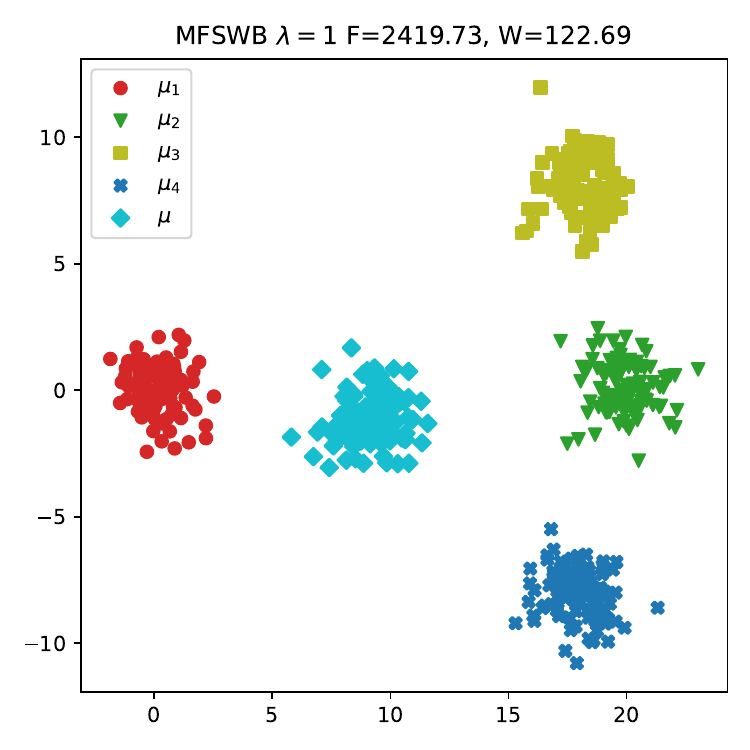}&\hspace{-0.2 in}\widgraph{0.2\textwidth}{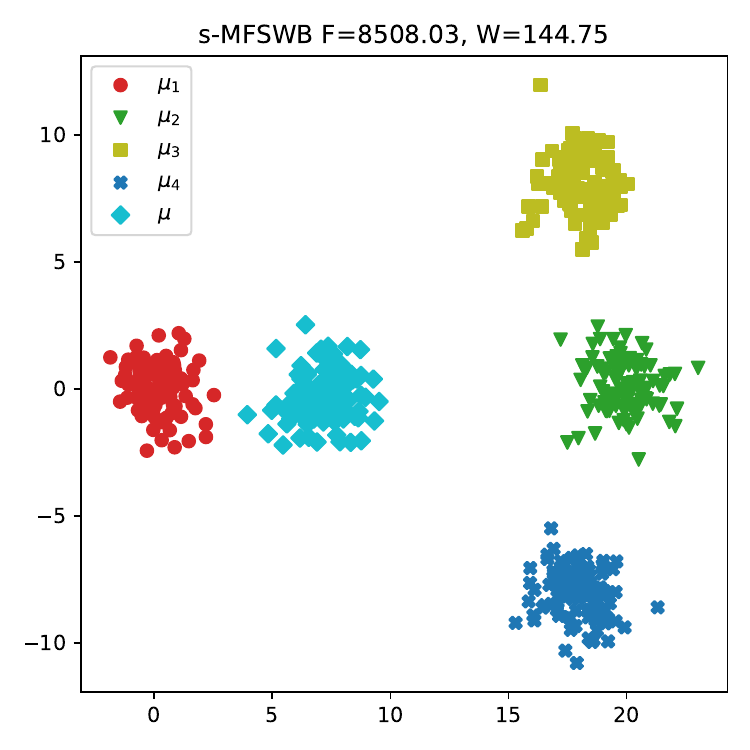}&\hspace{-0.2 in}\widgraph{0.2\textwidth}{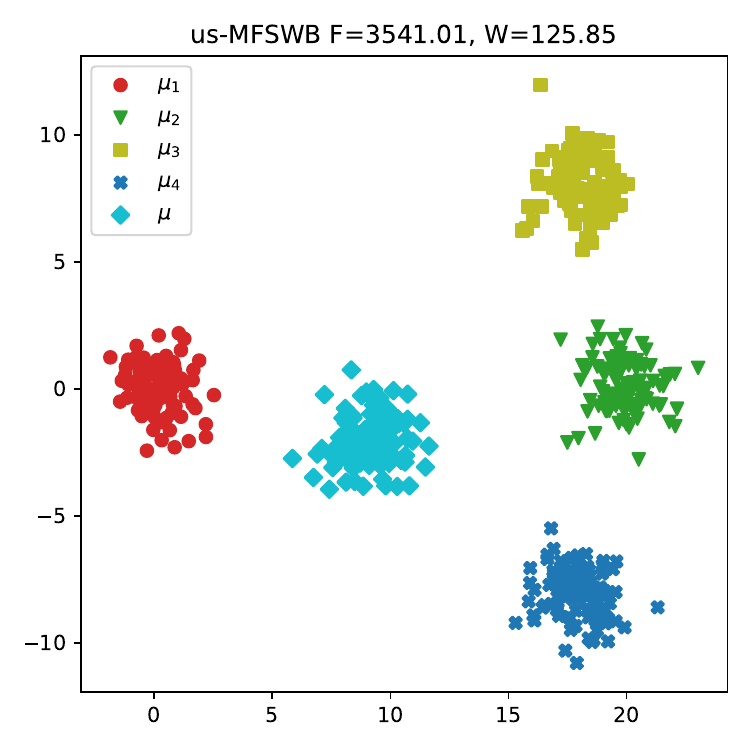} &\hspace{-0.2 in}\widgraph{0.2\textwidth}{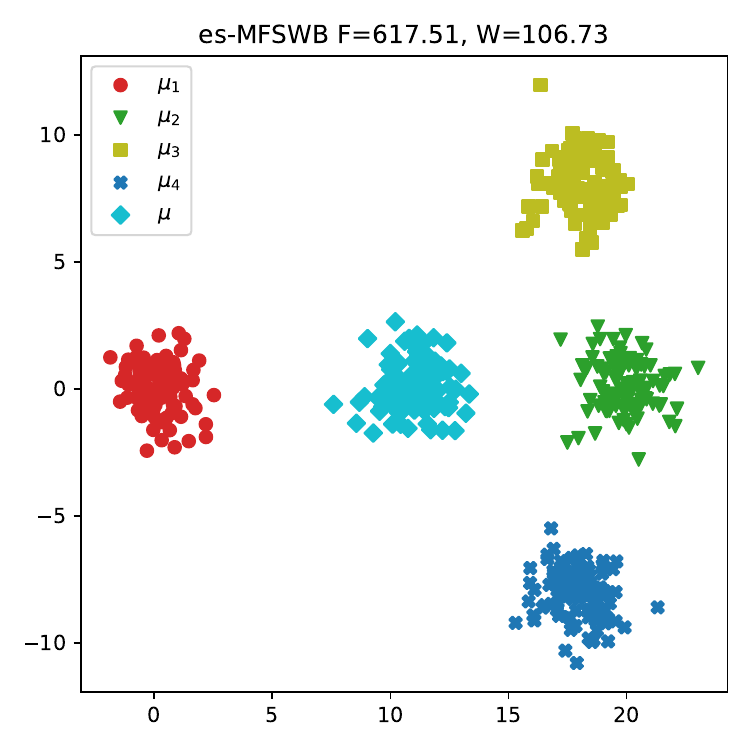}   \vspace{-0.05in}
  \\ 
   \widgraph{0.211\textwidth}{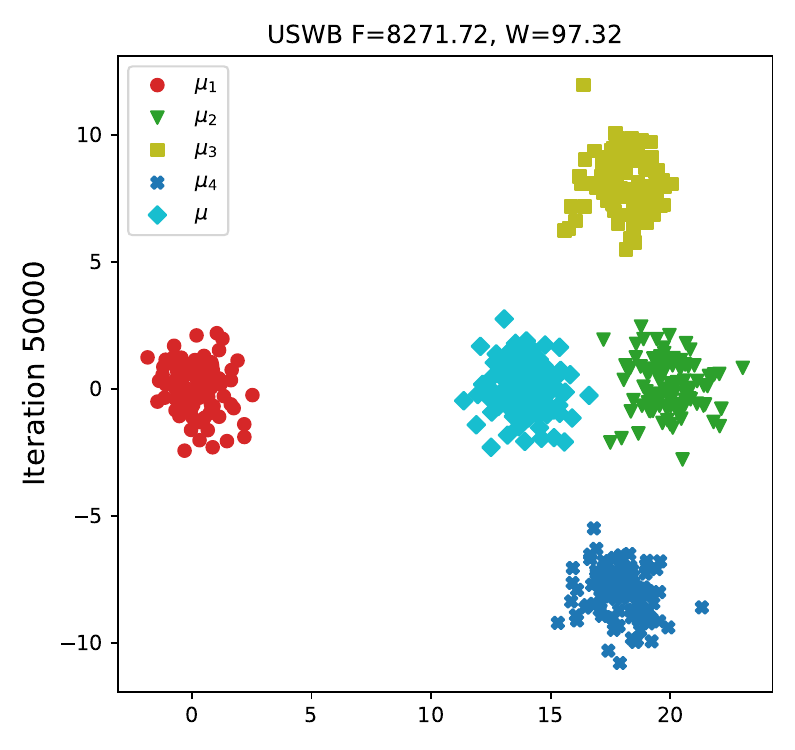} &\hspace{-0.2 in}\widgraph{0.2\textwidth}{images/MFSWB_1_50000.pdf}&\hspace{-0.2 in}\widgraph{0.2\textwidth}{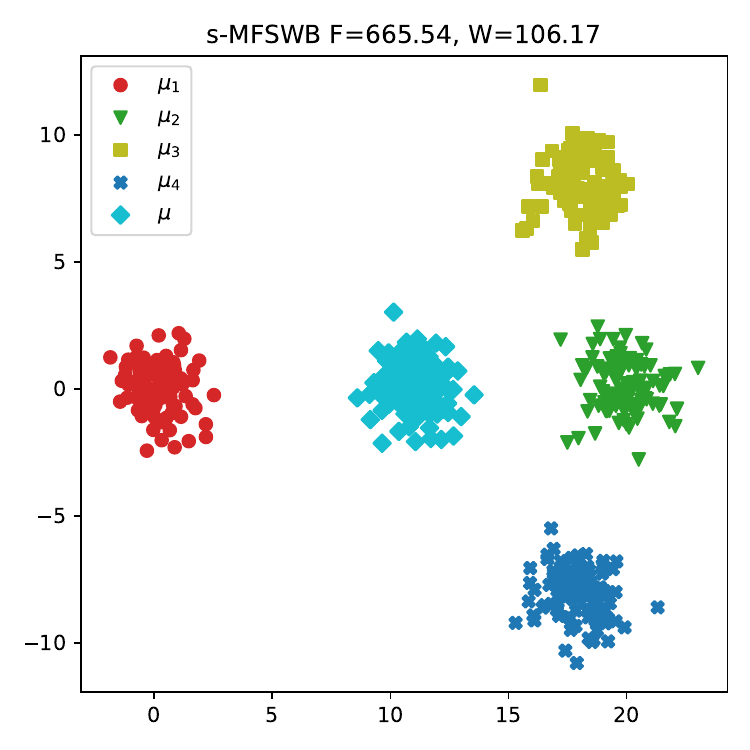}&\hspace{-0.2 in}\widgraph{0.2\textwidth}{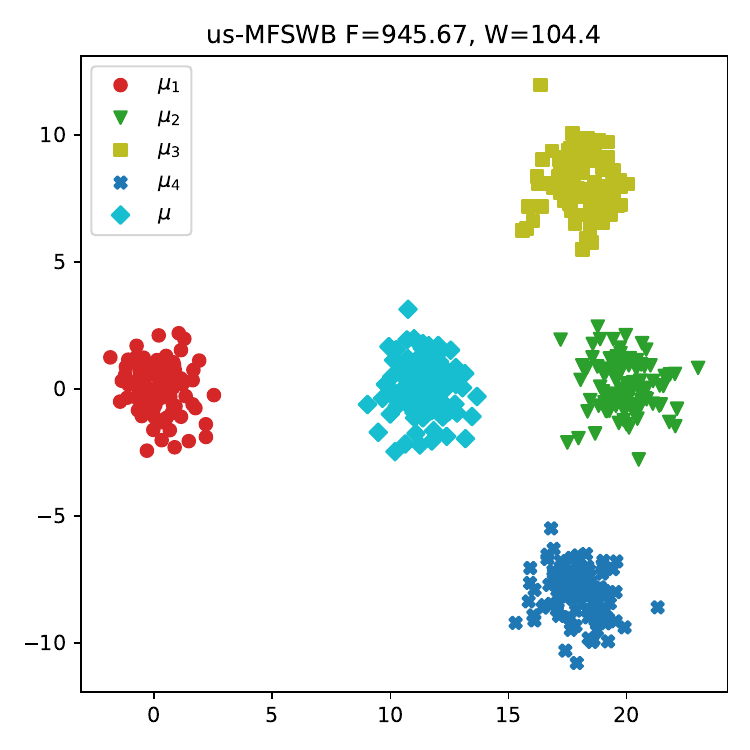} &\hspace{-0.2 in}\widgraph{0.2\textwidth}{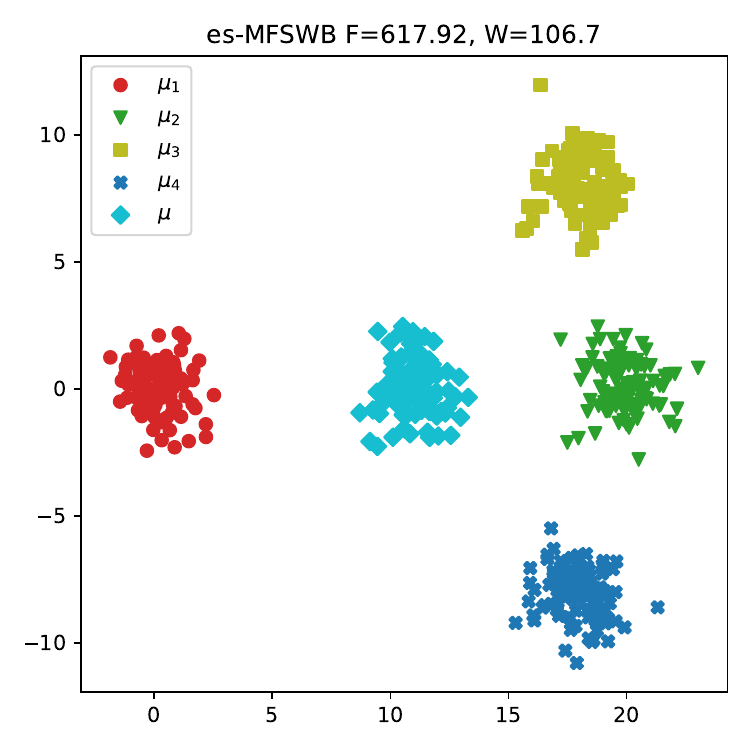}   \vspace{-0.05in}

  \end{tabular}
  \end{center}
  \vskip -0.1in
  \caption{
  \footnotesize{Barycenters from USWB, MFSWB with $\lambda=1$, s-MFSWB, us-MFSWB, and es-MFSWB along gradient iterations with the corresponding F-metric and W-metric.
}
} 
  \label{fig:gauss_exp}
   \vskip -0.1in
\end{figure}

\section{Experiments}
\label{sec:experiments}
In this section, we compare the barycenter found by our proposed surrogate problems i.e., s-MFSWB, us-MFSWB, and es-MFSWB with the barycenter found by USWB and the formal MFSWB. For evaluation, we use two metrics i.e., the F-metric (F) and the W-metric (W) which are defined as follows:
$$
    F = \frac{2}{K(K-1)} \sum_{i=1}^{K-1} \sum_{j=i+1}^{K} |W_p^p( \mu,\mu_i) - W_p^p( \mu,\mu_j)|, \\
    \quad W = \frac{1}{K} \sum_{i=1}^{K} W_p^p( \mu,\mu_i),
$$
where $\mu$ is the barycenter, $\mu_1,\ldots,\mu_K$ are the given marginals, and $W_p^p$ is the Wasserstein distance~\citep{flamary2021pot} of the order $p$. Here, the F-metric represents the marginal fairness degree of the barycenter and the  W-metric represents the centrality of the barycenter.  For all following experiments, we use $p=2$ for the Wasserstein distance and barycenter problems.

\subsection{Barycenter of Gaussians}
\label{subsec:Gaussian}

We first start with a simple simulation with 4 marginals which are empirical distributions with 100 i.i.d samples from 4 Gaussian distributions i.e., $\mathcal{N}((0,0),I)$, $\mathcal{N}((20,0),I)$, $\mathcal{N}((18,8),I)$, and $\mathcal{N}((18,-8),I)$. We then find the barycenter which is represented as an empirical distribution with 100 supports initialized by sampling i.i.d from $\mathcal{N}((0,-5),I)$. We use stochastic gradient descent with 50000 iterations of learning rate 0.01, the number of projections 100. We show the visualization of the found barycenters with the corresponding F-metric and W-metric by using USWB, s-MFSWB, us-MFSWB, and es-MFSWB at iterations 0, 1000, 5000, and 50000  in  Figure~\ref{fig:gauss_exp}. We observe that the USWB does not lead to a marginal fairness barycenter. The three proposed surrogate problems help to find a better barycenter faster in both two metrics than USWB. At convergence i.e., iteration 50000, we see that USWB does not give a fair barycenter while the three proposed surrogates lead to a more fair barycenter. Among the proposed surrogates, es-MFSWB gives the most marginal fairness barycenter with a competitive centerness. The formal MFSWB (dual form with $\lambda=1$) leads to the most fair barycenter. However, the performance of the formal MFSWB is quite sensitive to $\lambda$. We also observe the same phenomenon for different choices of learning rate in Figure~\ref{fig:gauss_exp_rebuttal} in Appendix~\ref{sec:add_exps}.  We show the visualization for $\lambda=0.1$ and $\lambda=10$ in Figure~\ref{fig:gauss_exp_appendix} in Appendix~\ref{sec:add_exps}.

\subsection{3D Point-cloud Averaging}
\label{subsec:pc_averging}

\begin{figure}[!t]
\begin{center}
    
  \begin{tabular}{c}
  \widgraph{0.5\textwidth}{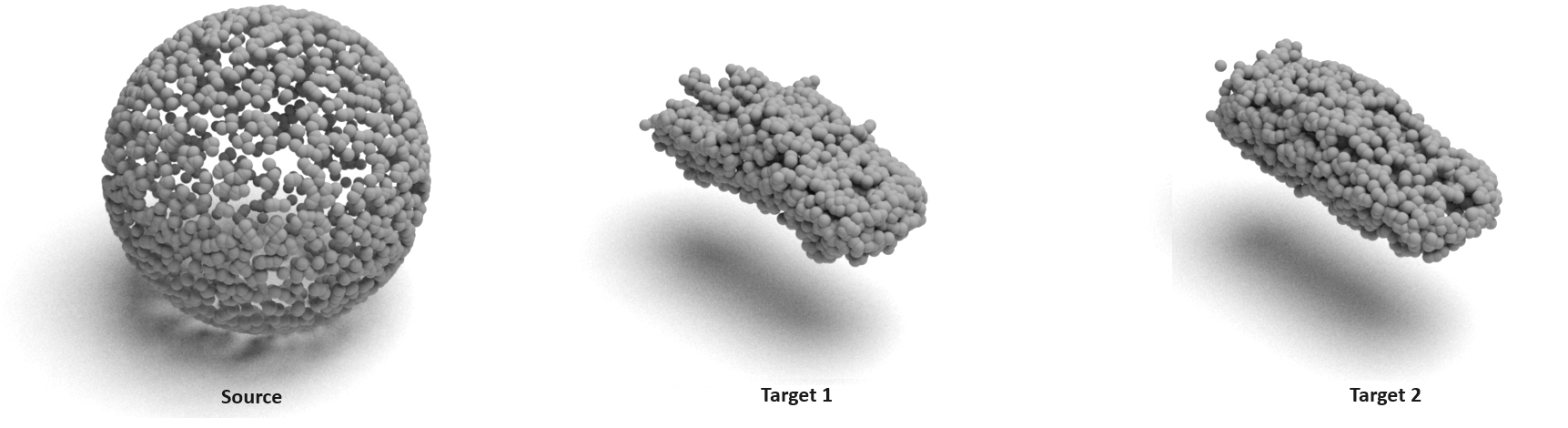} \\
\widgraph{0.8\textwidth}{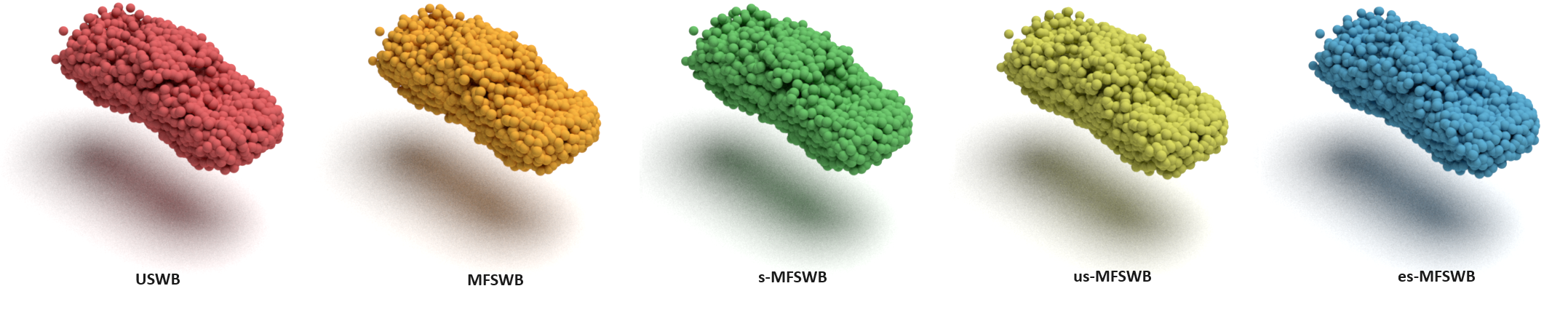}

  \end{tabular}
  \end{center}
  \vskip -0.15in
  \caption{
  \footnotesize{Averaging point-clouds with USWB, MFSWB  ($\lambda=1$), s-MFSWB, us-MFSWB, and es-MFSWB.
}
} 
  \label{fig:shape31}
   \vskip -0.1in
\end{figure}

\begin{table}[!t]
    \centering
    \caption{\footnotesize{F-metric and W-metric along iterations in point-cloud averaging application.}}
     \vskip -0.1in
    \scalebox{0.55}{
    \begin{tabular}{|l|cc|cc|cc|cc|}
    \toprule
     \multirow{2}{*}{Method}&\multicolumn{2}{c|}{Iteration 0}&\multicolumn{2}{c|}{Iteration 1000}&\multicolumn{2}{c|}{Iteration 5000}&\multicolumn{2}{c|}{Iteration 10000}\\
     \cmidrule{2-9}
     & F ($\downarrow$) &W ($\downarrow$) & F ($\downarrow$) &W ($\downarrow$)  & F ($\downarrow$) &W ($\downarrow$)  & F ($\downarrow$) &W ($\downarrow$)   \\
     \midrule
USWB&$252.24\pm0.0$&$3746.05\pm0.0$&$4.89\pm0.28$&$85.72\pm0.18$&$3.79\pm0.32$&$45.37\pm0.18$&$1.55\pm0.48$&$39.81\pm0.18$\\
MFSWB $\lambda=0.1$&$252.24\pm0.0$&$3746.05\pm0.0$&$4.76\pm0.27$&$84.86\pm0.17$&$3.78\pm0.2$&$45.2\pm0.11$&$1.32\pm0.22$&$39.73\pm0.16$\\
MFSWB $\lambda=1$&$252.24\pm0.0$&$3746.05\pm0.0$&$0.49\pm0.2$&$79.08\pm0.15$&$3.64\pm0.26$&$44.71\pm0.19$&$1.03\pm0.06$&$39.45\pm0.18$\\
MFSWB $\lambda=10$ &$252.24\pm0.0$&$3746.05\pm0.0$&$4.03\pm2.43$&$\mathbf{71.24\pm0.9}$&$7.32\pm2.5$&$45.21\pm0.2$&$4.13\pm2.48$&$42.56\pm0.36$\\
s-MFSWB&$252.24\pm0.0$&$3746.05\pm0.0$&$2.52\pm0.77$&$81.84\pm0.14$&$4.01\pm0.38$&$44.9\pm0.13$&$1.15\pm0.09$&$39.58\pm0.17$\\
us-MFSWB&$252.24\pm0.0$&$3746.05\pm0.0$&$0.3\pm0.18$&$78.69\pm0.17$&$3.74\pm0.26$&$44.38\pm0.1$&$0.87\pm0.18$&$39.26\pm0.1$\\
es-MFSWB&$252.24\pm0.0$&$3746.05\pm0.0$&$\mathbf{0.2\pm0.19}$&$78.1\pm0.16$&$\mathbf{3.5\pm0.29}$&$\mathbf{44.37\pm0.08}$&$\mathbf{0.84\pm0.22}$&$\mathbf{39.18\pm0.08}$\\
    \bottomrule
    \end{tabular}
}
    \label{tab:averaging1}
    \vskip -0.1in
\end{table}

We aim to find the mean shape of point-cloud shapes by casting a point cloud $X=\{x_1,\ldots,x_n\}$ into an empirical probability measures $P_X = \frac{1}{n} \sum_{i=1}^n \delta_{x_i}$. We select two point-cloud shapes which consist of 2048 points in ShapeNet Core-55 dataset~\citep{chang2015shapenet}. We initialize the barycenter with a spherical point-cloud. We use stochastic gradient descent with 10000 iterations of learning rate 0.01, the number of projections 10. We report the found barycenters for two car shapes in Figure~\ref{fig:shape31} at the final iteration and the corresponding F-metric and W-metric at iterations 0, 1000, 5000, and 10000 in Table~\ref{tab:averaging1} from three independent runs. As in the Gaussian simulation, s-MFSWB, us-MFSWB, and es-MFSWB help to reduce the two metrics faster than the USWB. With the slicing distribution selection, es-MFSWB performs the best at every iteration, even better than the formal MFSWB with three choices of $\lambda$ i.e., $0.1, 1, 10$.  We also observe a similar phenomenon for two plane shapes in Figure~\ref{fig:shape2} and Table~\ref{tab:averaging2} in Appendix~\ref{sec:add_exps}. We refer the reader to  Appendix~\ref{sec:add_exps} for a detailed discussion.

\subsection{Color Harmonization}
\label{subsec:color_harmonization}

We want to transform the color palette of a source image, denoted as $X=(x_1,\ldots,x_n)$ for $n$ is the number of pixels, to be an exact hybrid between two target images. Similar to the previous point-cloud averaging, we transform the color palette of an image into the empirical probability measure over colors (RGB) i.e., $P_X = \frac{1}{n} \sum_{i=1}^n \delta_{x_i}$. We then minimize barycenter losses i.e., USWB, MFSWB ($\lambda \in  \{0.1,1,10\}$), s-MFSWB, us-MFSWB, and es-MFSWB by using stochastic gradient descent with the learning rate $0.0001$ and 20000 iterations. We report both the transformed images and the corresponding F-metric and W-metric in Figure~\ref{fig:tower_20000}. We also report the full results  in Figure~\ref{fig:tower_5000}-~\ref{fig:tower_mfswb} in Appendix~\ref{sec:add_exps}. As in previous experiments, we see that the three proposed surrogates yield a better barycenter faster than USWB. The proposed es-MFSWB is the best variant among all surrogates since it has the lowest F-metric and W-metric at all iterations. We refer the reader to Figure~\ref{fig:flower_5000}-Figure~\ref{fig:flower_mfswb} in Appendix~\ref{sec:add_exps} for additional flowers-images example, where a similar relative comparison happens. For the formal MFSWB, it is worse than es-MFSWB in one setting and better than es-MFSWB in one setting with the right choice of $\lambda$. Therefore, it is more convenient to use us-MFSWB in practice.

\subsection{Sliced Wasserstein Autoencoder with Class-Fair Representation}
\label{subsec:SWAE}
\textbf{Problem.} We consider training the sliced Wasserstein autoencoder (SWAE)\citep{kolouri2018sliced} with a class-fairness regularization. In particular, we have the data distributions of $K\geq 1$ classes i.e., $\mu_k\in \mathcal{P}(\mathbb{R}^d)$ for $k=1,\ldots,K$ and we would like to estimate an encoder network $f_\phi:\mathbb{R}^d \to \mathbb{R}^h$ ($\phi \in \Phi$) and a decoder network $g_\psi:\mathbb{R}^h\to \mathbb{R}^d$ ($\psi \in \Psi$ with $\mathbb{R}^h$ is a low-dimensional latent space. Given a prior distribution $\mu_0 \in \mathcal{P}(\mathbb{R}^h)$, $p\geq 1$, $\kappa_1\in \mathbb{R}^+$, $\kappa_2 \in \mathbb{R}^+$, and a minibatch size $M\geq 1$, we perform the following optimization problem:
\begin{align*}
    \min_{\phi,\psi}  &\mathbb{E}\left[  \frac{1}{KM}\sum_{k=1}^K\sum_{i=1}^M c(X_{ki},g_\psi(f_\phi(X_{ki}))   + \kappa_1 SW_p^p(P_Z,P_{(f_\phi(X_k))_{k=1}^K}) +\kappa_2 \mathcal{B}(P_Z;P_{f_{\phi}(X_1)}:P_{f_{\phi}(X_K)} )\right] ,
\end{align*}
where $(X_1,\ldots,X_K) \sim \mu_1^{\otimes M} \otimes \ldots \otimes \mu_K^{\otimes M}$, $Z\sim \mu_0^{\otimes M}$, $c$ is a reconstruction loss, $P_Z =\frac{1}{M}
\sum_{i=1}^M\delta_{Z_i}$, $P_{(f_\phi(X_k))_{k=1}^K}=\frac{1}{KM}\sum_{k=1}^K\sum_{i=1}^M \delta_{f_\phi(X_{ki})}$, $P_{f_{{{\phi}}}(X_k)} = \frac{1}{M}\sum_{i=1}^M \delta_{f_{{\phi}}(X_{ki})}$ for $k=1,\ldots,K$, and $\mathcal{B}$ denotes a barycenter loss i.e., USWB, MFSWB, s-MFSWB, us-MFSWB, and es-MFSWB. This setting can be seen as an inverse barycenter problem i.e., the barycenter is fixed and the marginals are learnt under some constraints (e.g., the reconstruction loss and the aggregated distribution loss).

\textbf{Results.} We train the autoencoder on MNIST dataset~\citep{lecun1998gradient} ($d=28 \times 28$) with $\kappa_1=8.0$, $\kappa_2=0.5$, 250 epochs, using a uniform distribution on a 2D ball ($h=2$) as $\mu_0$ with differnt learning rates: $\{0.0001, 0.0005, 0.0008, 0.001\}$ and do grid search on each method, reporting their best score for each metric.  Following the training phase, we evaluate the trained autoencoders on the test set. Similar to previous experiments, we use the metrics F ($F_{\text{latent}}$) and W ($W_{\text{latent}}$) in the latent space distributions $f_\phi \sharp \mu_1,\ldots,f_\phi \sharp \mu_K$ and the barycenter $\mu_0$.  We use the reconstruction loss (binary cross-entropy, denoted as RL), the Wasserstein-2 distance between the prior and aggregated posterior distribution in latent space $\text{W}_{2,\text{latent}}^{2}:=W_2^2\left(\mu_0,  \frac{1}{K}\sum_{k=1}^K f_\phi \sharp \mu_k\right)$, as well as in image space $\text{W}_{2,\text{image}}^{2}:=W_2^2\left(g_\psi \sharp \mu_0,  \frac{1}{K}\sum_{k=1}^K \mu_k\right)$. Furthermore, we quantify the practical effect of the method by measuring Fairness metric in Image space. During evaluation, we approximate $\mu_0$ by its empirical version of 10000 samples. We report the quantitative result of grid search in Table~\ref{tab:mnist_0.5_250}, and reconstructed images, generated images, and images of latent codes in Figure~\ref{fig:images_2} in Appendix~\ref{sec:add_exps}. From the results, the proposed surrogate MFSWB generally yield better scores than USWB, except for the generative score i.e, $\text{W}_{2,\text{image}}^{2}$. The formal MFSWB performs well in reconstruction loss and $\text{W}_{2,\text{image}}^{2}$, though its F and W scores are high. The $\text{W}_{2,\text{latent}}^{2}$ varies slightly across runs, with minor differences in performance order, indicating relatively similar results. While $\text{us-MFSWB}$ achieves the best $F_{\text{images}}$ score, indicating the best fairness performance in image space, $\text{es-MFSWB}$ excels in fairness within the latent space. Compared to conventional SWAE, using a barycenter loss results in a more class-fair latent representation but sacrifices image reconstruction and generative quality.
\begin{figure}[!t]
\begin{center}
    
  \begin{tabular}{c}
  \widgraph{0.6\textwidth}{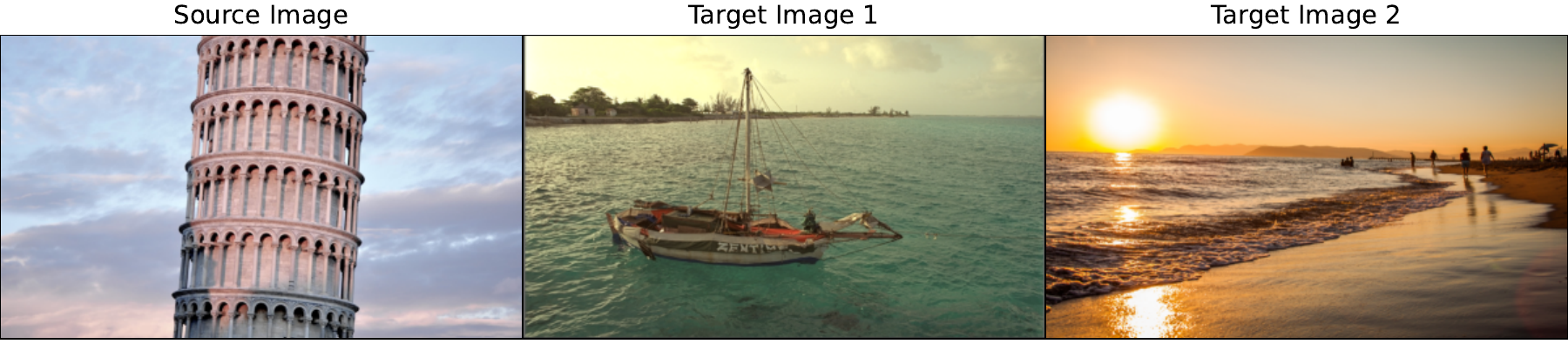} \\
\widgraph{1\textwidth}{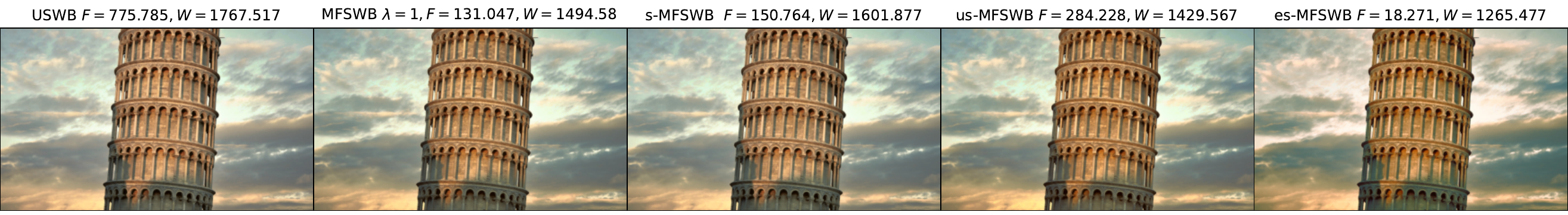}

  \end{tabular}
  \end{center}
  \vskip -0.2in
  \caption{
  \footnotesize{Harmonized images from USWB, MFSWB ($\lambda=1$), s-MFSWB, us-MFSWB, and es-MFSWB.
}
} 
  \label{fig:tower_20000}
   \vskip -0.1in
\end{figure}
\begin{table}[!t]
\footnotesize
\centering
\caption{\footnotesize{Results of grid search for learning rates in $\{0.0001, 0.0005, 0.001\}$ for training SWAE.}}
 \vskip -0.1in
\label{tab:mnist_0.5_250}
\scalebox{0.85}{
\begin{tabular}{|l|c|c|c|c|c|c|}
\toprule
Methods & $\text{RL}$ ($\downarrow$) & $\text{W}_{2,\text{latent}}^{2} \times 10^2$ ($\downarrow$) & $\text{W}_{2,\text{image}}^{2} \times 10^2$ ($\downarrow$) & $\text{F} \times 10^2$ ($\downarrow$) & $\text{W} \times 10^2$ ($\downarrow$) & $\text{F}_{\text{images}}$ ($\downarrow$) \\
\midrule
SWAE           & 3.002 & 9.949 & \textbf{26.572} & 17.661 & 28.512 & 7.787 \\
USWB           & 3.195 & 9.174 & 27.446 & 5.190  & 12.448 & 7.140 \\
MFSWB $\lambda = 0.1$   & \textbf{2.812} & 8.981 & 26.636 & 17.206 & 28.734 & 7.846 \\
MFSWB $\lambda = 1.0$   & 2.883 & 7.978 & 26.355 & 18.069 & 29.701 & 7.367 \\
MFSWB $\lambda = 10.0$  & 3.801 & 8.497 & 26.658 & 18.501 & 28.768 & 7.950 \\
s-MFSWB       & 3.170 & \textbf{7.806} & 28.277 & 2.037  & 8.699  & 7.419 \\
us-MFSWB      & 2.833 & 8.720 & 27.939 & 2.072  & \textbf{7.780} & \textbf{6.898} \\
es-MFSWB      & 3.056 & 9.154 & 28.012 & \textbf{1.760}  & 7.268  & 7.485 \\
\bottomrule
\end{tabular}
}
 \vskip -0.2in
\end{table}

\section{Conclusion}
\label{sec:conclusion}
We introduced marginal fairness sliced Wasserstein barycenter (MFSWB), a special case of sliced Wasserstein barycenter (SWB) which has approximately the same distance to marginals. We first defined the MFSWB as a constrainted uniform SWB problem. After that, to overcome the computational drawbacks of the original problem, we propose three surrogate definitions of MFSWB which are hyperparameter-free and easy to compute. We discussed the relationship of the proposed surrogate problems  and their connection to the sliced Multi-marginal Wasserstein distance with the maximal ground metric. Finally, we conduct simulations with Gaussian and experiments on 3D point-cloud averaging, color harmonization, and sliced Wasserstein autoencoder with class-fairness representation to show the benefits of the proposed surrogate MFSWB definitions. Future works will focus on replacing SW with other metrics such as generalized sliced Wasserstein~\citep{kolouri2019generalized} and augmented sliced Wasserstein~\citep{chen2022augmented}.

\clearpage

\section*{Acknowledgements}
We would like to thank Joydeep Ghosh for his insightful discussion during the course of this project. 
\bibliography{iclr2025_conference}
\bibliographystyle{iclr2025_conference}

\clearpage

\appendix

\begin{center}
{\bf{\Large{Supplement to ``Marginal Fairness Sliced Wasserstein Barycenter"}}}
\end{center}
We present skipped proofs in Appendix~\ref{sec:proofs}. We then provide some additional materials which are mentioned in the main paper in Appendix~\ref{sec:add_materials}. After that, related works are discussed in Appendix~\ref{sec:relatedworks}. We then provide additional experimental results in Appendix~\ref{sec:add_exps}. Finally, we report the used computational devices in Appendix~\ref{sec:devices}.

\section{Proofs}
\label{sec:proofs}

\subsection{Proof of Proposition~\ref{prop:surrogate_bound}}
\label{subsec:proof:prop:surrogate_bound}

\textit{Proof.} From Definition~\ref{def:sMFSWB}, we have 
\begin{align*}
    \mathcal{SF}(\mu,\mu_{1:K})&=\max_{k\in \{1,\ldots,K\}} SW_p^p(\mu,\mu_k)\\
    &= \max_{k\in \{1,\ldots,K\}} \mathbb{E}_{\theta \sim \mathcal{U}(\mathbb{S}^{d-1})} [W_p^p(\theta \sharp \mu,\theta \sharp \mu_k)]
\end{align*}

Let $k^\star = \argmax_{k\in \{1,\ldots,K\}} \mathbb{E}_{\theta \sim \mathcal{U}(\mathbb{S}^{d-1})} [W_p^p(\theta \sharp \mu,\theta \sharp \mu_k)]$, we have
\begin{align*}
    \mathcal{SF}(\mu,\mu_{1:K})&=\mathbb{E}_{\theta \sim \mathcal{U}(\mathbb{S}^{d-1})} [W_p^p(\theta \sharp \mu,\theta \sharp \mu_{k^\star})] \\
    &\leq \mathbb{E}_{\theta \sim \mathcal{U}(\mathbb{S}^{d-1})} \left[ \max_{k\in \{1,\ldots,K\}} W_p^p(\theta \sharp \mu,\theta \sharp \mu_{k})\right] \\
    &= \mathcal{USF}(\mu,\mu_{1:K}),
\end{align*}
as from Definition~\ref{def:usMFSWB}, which completes the proof.

\subsection{Proof of Proposition~\ref{prop:MCerror}}
\label{subsec:proof:prop:MCerror}
Using the Holder’s inequality, we have:
\begin{align*}
   & \mathbb{E}\left| \nabla_\phi \frac{1}{L}\sum_{l=1}^L\text{W}_p^p (\theta_l\sharp \mu_\phi,\theta_l \sharp \mu_{k^\star_{\theta_l}}) - \nabla_\phi\mathcal{USF}(\mu_\phi;\mu_{1:K}) \right| \\
   &\leq \left(\mathbb{E}\left| \nabla_\phi \frac{1}{L}\sum_{l=1}^L\text{W}_p^p (\theta_l\sharp \mu_\phi,\theta_l \sharp \mu_{k^\star_{\theta_l}}) - \nabla_\phi\mathcal{USF}(\mu_\phi;\mu_{1:K}) \right|^2\right)^{\frac{1}{2}} \\
   &= \left(\mathbb{E}\left( \nabla_\phi \frac{1}{L}\sum_{l=1}^L\text{W}_p^p (\theta_l\sharp \mu_\phi,\theta_l \sharp \mu_{k^\star_{\theta_l}}) - \nabla_\phi\mathbb{E}\left[\text{W}_p^p (\theta\sharp \mu_\phi,\theta \sharp \mu_{k^\star_\theta})\right] \right)^2\right)^{\frac{1}{2}} \\
   &= \left(\mathbb{E}\left( \frac{1}{L} \sum_{l=1}^L\nabla_\phi\text{W}_p^p (\theta_l\sharp \mu_\phi,\theta_l \sharp \mu_{k^\star_{\theta_l}}) - \mathbb{E}\left[\nabla_\phi\text{W}_p^p (\theta\sharp \mu_\phi,\theta \sharp \mu_{k^\star_\theta})\right] \right)^2\right)^{\frac{1}{2}} \\
   &= \left(\text{Var}\left[ \frac{1}{L} \sum_{l=1}^L\nabla_\phi\text{W}_p^p (\theta_l\sharp \mu_\phi,\theta_l \sharp \mu_{k^\star_{\theta_l}})  \right]\right)^{\frac{1}{2}}\\
   &= \frac{1}{\sqrt{L}} \text{Var}\left[ \nabla_\phi\text{W}_p^p (\theta\sharp \mu_\phi,\theta \sharp \mu_{k^\star_\theta})  \right]^{\frac{1}{2}},
\end{align*}
which completes the proof.

\subsection{Proof of Proposition~\ref{prop:surrogate_bound2}}
\label{subsec:proof:prop:surrogate_bound2}
We first restate the following Lemma from~\citep{nguyen2024sliced} and provide the proof for completeness.
\begin{lemma}
    \label{lemma:inequality} For any $L\geq 1$, $0\leq a_{1} \leq a_{2} \leq \ldots \leq a_{L}$ and $0< b_{1} \leq b_{2} \leq \ldots \leq b_{L}$, we have:
    \begin{align}
        \frac{1}{L} (\sum_{i = 1}^{L} a_{i}) (\sum_{i = 1}^{L} b_{i}) \leq \sum_{i = 1}^{L} a_{i} b_{i}. \label{eq:key_inequality_2}
    \end{align}
\end{lemma}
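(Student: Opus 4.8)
The plan is to recognize the stated inequality as the Chebyshev sum inequality for two similarly (non-decreasingly) ordered sequences, and to prove it by the standard symmetric-difference trick rather than by induction. The core object I would introduce is the double sum
\begin{align*}
S = \sum_{i=1}^{L}\sum_{j=1}^{L} (a_i - a_j)(b_i - b_j).
\end{align*}

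First I would argue that $S \geq 0$ term by term. For a fixed pair $(i,j)$, if $i \geq j$ then $a_i \geq a_j$ and $b_i \geq b_j$ by the assumed monotonicity, so both factors are non-negative; if $i \leq j$ then both factors are non-positive; in either case the product $(a_i - a_j)(b_i - b_j) \geq 0$ (and it equals $0$ when $i = j$). This co-monotonicity of the two sequences is the only place the ordering hypothesis enters, and it is the crux of the argument — everything else is bookkeeping.

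Next I would expand $S$ by multiplying out the four cross terms and collecting, using $\sum_{i,j} a_i b_i = L \sum_i a_i b_i$ (and likewise for the $a_j b_j$ term) together with $\sum_{i,j} a_i b_j = \left(\sum_i a_i\right)\left(\sum_j b_j\right)$. This yields
\begin{align*}
S = 2L \sum_{i=1}^{L} a_i b_i - 2\left(\sum_{i=1}^{L} a_i\right)\left(\sum_{i=1}^{L} b_i\right).
\end{align*}
Combining with $S \geq 0$ and dividing by $2L > 0$ gives exactly $\frac{1}{L}\left(\sum_i a_i\right)\left(\sum_i b_i\right) \leq \sum_i a_i b_i$, which is the claim. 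I would also note that positivity of the $b_i$ and non-negativity of the $a_i$ are not actually needed for this argument; only the common ordering matters, so the stated hypotheses are comfortably sufficient (they are presumably retained because that is the regime in which the lemma is applied, namely to sorted one-dimensional Wasserstein terms and importance weights).

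There is essentially no hard step here: the proof is elementary and self-contained, and I do not anticipate any real obstacle. If one preferred to avoid the double sum, an alternative route is Abel summation / induction on $L$, peeling off the largest term $a_L b_L$ and rearranging partial sums, but the symmetric-sum identity above is cleaner and makes the role of the monotonicity assumption fully transparent.
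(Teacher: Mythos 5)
Your proof is correct, but it takes a genuinely different route from the paper's. The paper proves the lemma by induction on $L$: assuming the inequality for $L$ terms, it expands $\bigl(\sum_{i=1}^{L+1} a_i\bigr)\bigl(\sum_{i=1}^{L+1} b_i\bigr)$, and controls the cross terms via the pairwise rearrangement inequality $a_{L+1}b_{L+1} + a_i b_i \geq a_{L+1} b_i + b_{L+1} a_i$, summed over $i = 1, \ldots, L$. You instead give the classical one-shot proof of Chebyshev's sum inequality through the symmetric double sum $S = \sum_{i,j} (a_i - a_j)(b_i - b_j) \geq 0$, whose expansion $S = 2L\sum_i a_i b_i - 2\bigl(\sum_i a_i\bigr)\bigl(\sum_i b_i\bigr)$ yields the claim immediately. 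Both arguments ultimately rest on the same co-monotonicity fact (your termwise sign analysis of $(a_i - a_j)(b_i - b_j)$ is exactly the rearrangement inequality the paper invokes pairwise), but your version avoids the induction bookkeeping entirely and makes the role of the ordering hypothesis transparent in a single step. Your side remark is also accurate and slightly sharpens the statement: the sign conditions $a_i \geq 0$, $b_i > 0$ are never used — only the common ordering matters — whereas the paper carries them along because that is the regime in which the lemma is applied (sorted maximal one-dimensional Wasserstein values and their exponentials in the proof of Proposition~\ref{prop:surrogate_bound2}). No gaps; if anything, your write-up is the cleaner and more general of the two.
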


\begin{proof}
    For $L=1$, we directly have $a_ib_i = a_ib_i$. Assuming that for $L$ the inequality holds i.e., $\frac{1}{L} (\sum_{i = 1}^{L} a_{i}) (\sum_{i = 1}^{L} b_{i}) \leq \sum_{i = 1}^{L} a_{i} b_{i}$ which is equivalent to  $
    (\sum_{i = 1}^{L} a_{i}) (\sum_{i = 1}^{L} b_{i}) \leq L \sum_{i = 1}^{L} a_{i} b_{i}.$
Now, we show that   $ \frac{1}{L} (\sum_{i = 1}^{L} a_{i}) (\sum_{i = 1}^{L} b_{i}) \leq \sum_{i = 1}^{L} a_{i} b_{i}$ i.e., the inequality holds for  $L+1$. We have
\begin{align*}
    (\sum_{i = 1}^{L + 1} a_{i}) (\sum_{i = 1}^{L + 1} b_{i}) &= (\sum_{i = 1}^{L } a_{i}) (\sum_{i = 1}^{L} b_{i})+ (\sum_{i = 1}^{L} a_{i}) b_{L + 1} + (\sum_{i = 1}^{L} b_{i}) a_{L + 1} + a_{L + 1} b_{L + 1} \\
    &\leq L \sum_{i = 1}^{L} a_{i} b_{i} + (\sum_{i = 1}^{L} a_{i}) b_{L + 1} + (\sum_{i = 1}^{L} b_{i}) a_{L + 1} + a_{L + 1} b_{L + 1}.
\end{align*}
Since $a_{L + 1} b_{L + 1} + a_{i} b_{i} \geq a_{L + 1} b_{i} + b_{L + 1} a_{i}$ for all $1 \leq i \leq L$ by rearrangement inequality. By taking the sum of these inequalities over $i$ from $1$ to $L$, we obtain:
\begin{align*}
    (\sum_{i = 1}^{L} a_{i}) b_{L + 1} + (\sum_{i = 1}^{L} b_{i}) a_{L + 1} \leq \sum_{i = 1}^{L} a_{i} b_{i} + L a_{L + 1} b_{L + 1}. \label{eq:key_inequality_3}
\end{align*}
Then, we have
\begin{align*}
    (\sum_{i = 1}^{L + 1} a_{i}) (\sum_{i = 1}^{L + 1} b_{i}) &\leq L \sum_{i = 1}^{L} a_{i} b_{i} + (\sum_{i = 1}^{L} a_{i}) b_{L + 1} + (\sum_{i = 1}^{L} b_{i}) a_{L + 1} + a_{L + 1} b_{L + 1} \\
    &\leq L \sum_{i = 1}^{L} a_{i} b_{i} + \sum_{i = 1}^{L} a_{i} b_{i} + L a_{L + 1} b_{L + 1} + a_{L + 1} b_{L + 1}  \\
    &=(L + 1) (\sum_{i = 1}^{L + 1} a_{i} b_{i}),
\end{align*}
which completes the proof.
\end{proof}

Now, we go back to the main inequality which is $\mathcal{USF}(\mu;\mu_{1:K}) \leq \mathcal{ESF}(\mu;\mu_{1:K})$. From Definition~\ref{def:us-EMFSWB}, we have:
\begin{align*}
   \mathcal{ESF}(\mu;\mu_{1:K})&= \mathbb{E}_{\theta \sim \sigma(\theta;\mu,\mu_{1:K}) } \left[\max_{k \in \{1,\ldots,K\}} W_p^p(\theta \sharp \mu,\theta \sharp \mu_k)\right] \\
   &= \mathbb{E}_{\theta \sim \mathcal{U}(\mathbb{S}^{d-1})} \left[\max_{k \in \{1,\ldots,K\}} W_p^p(\theta \sharp \mu,\theta \sharp \mu_k) \frac{f_\sigma(\theta;\mu,\mu_{1:K}) }{\frac{\Gamma(d/2)}{2\pi^{d/2}}}\right],
\end{align*}
 where $f_\sigma(\theta;\mu,\mu_{1:K}) \propto \exp \left(\max_{k \in \{1,\ldots,K\}} W_p^p(\theta \sharp \mu,\theta \sharp \mu_k)\right)$.  Now, we consider a Monte Carlo estimation of $\mathcal{ESF}(\mu;\mu_{1:K})$ by importance sampling:
\begin{align*}
    \widehat{\mathcal{ESF}}(\mu;\mu_{1:K},L)=\frac{1}{L}\sum_{l=1}^L \left[\max_{k \in \{1,\ldots,K\}} W_p^p(\theta_l \sharp \mu,\theta_l \sharp \mu_k) \frac{\exp\left(\max_{k \in \{1,\ldots,K\}} W_p^p(\theta_l \sharp \mu,\theta_l \sharp \mu_k)\right) }{\sum_{i=1}^L\exp\left(\max_{k \in \{1,\ldots,K\}} W_p^p(\theta_i \sharp \mu,\theta_i \sharp \mu_k) \right)}\right],
\end{align*}
where $\theta_1,\ldots,\theta_L \overset{i.i.d}{\sim} \mathcal{U}(\mathbb{S}^{d-1})$. Similarly, we consider a Monte Carlo estimation of $\mathcal{USF}(\mu;\mu_{1:K})$:
\begin{align*}
    \widehat{\mathcal{USF}}(\mu;\mu_{1:K},L) = \frac{1}{L}\sum_{l=1}^L \left[\max_{k \in \{1,\ldots,K\}} W_p^p(\theta_l \sharp \mu,\theta_l \sharp \mu_k) \right],
\end{align*}
for the same set of $\theta_1,\ldots,\theta_L$. Without losing generality, we assume that $\max_{k \in \{1,\ldots,K\}} W_p^p(\theta_1 \sharp \mu,\theta_1 \sharp \mu_k) \leq \ldots \leq \max_{k \in \{1,\ldots,K\}} W_p^p(\theta_L \sharp \mu,\theta_L \sharp \mu_k)$. Let $\max_{k \in \{1,\ldots,K\}} W_p^p(\theta_i \sharp \mu,\theta_i \sharp \mu_k)=a_i$ and $\exp\left(\max_{k \in \{1,\ldots,K\}} W_p^p(\theta_i \sharp \mu,\theta_i \sharp \mu_k)\right) =b_i$, applying Lemma~\ref{lemma:inequality}, we have:
\begin{align*}
      \widehat{\mathcal{USF}}(\mu;\mu_{1:K},L) \leq \widehat{\mathcal{ESF}}(\mu;\mu_{1:K},L) \quad \forall L \geq 1.
\end{align*}
By letting $L \to \infty$ and applying the law of large numbers, we obtain: 
\begin{align*}
    \mathcal{USF}(\mu;\mu_{1:K}) \leq \mathcal{ESF}(\mu;\mu_{1:K}),
\end{align*}
which completes the proof.

\subsection{Proof of Proposition~\ref{prop:metricity}}
\label{subsec:proof:prop:metricity}
We first recall the definition of the SMW with the maximal ground metric:
\begin{align*}
SMW_p^p(\mu_1,\ldots,\mu_{K};c)&=\mathbb{E}\left[\inf_{\pi \in \Pi(\mu_1,\ldots,\mu_K)}\int \max_{i \in \{1,\ldots,K\}, j \in \{1,\ldots,K\}} |\theta^\top x_i -\theta^\top x_j|^p d\pi(x_1,\ldots,x_K)\right].
\end{align*}

\textbf{Non-negativity.}  Since $ \max_{i \in \{1,\ldots,K\}, j \in \{1,\ldots,K\}} |\theta^\top x_i -\theta^\top x_j|^p\geq 0$ for any $x_1,\ldots,x_K$ and for any $\theta$, we can obtain the desired property  $SMW_p^p(\mu_1,\ldots,\mu_{K};c) \geq 0$ which implies $SMW_p(\mu_1,\ldots,\mu_{K};c) \geq 0$.

\textbf{Marginal Exchangeability.} For any permutation $\sigma:[[K]] \to [[K]]$, we have:
\begin{align*}
SMW_p^p(\mu_1,\ldots,\mu_{K};c)&=\mathbb{E}\left[\inf_{\pi \in \Pi(\mu_1,\ldots,\mu_K)}\int \max_{i \in \{1,\ldots,K\}, j \in \{1,\ldots,K\}} |\theta^\top x_i -\theta^\top x_j|^p d\pi(x_1,\ldots,x_K) \right] \\
&= \mathbb{E}\left[\inf_{\pi \in \Pi(\mu_{\sigma(1)},\ldots,\mu_{\sigma(K)})}\int \max_{i \in \{1,\ldots,K\}, j \in \{1,\ldots,K\}} |\theta^\top x_i -\theta^\top x_j|^p d\pi(x_{1},\ldots,x_{K})\right] \\
&= SMW_p^p(\mu_{\sigma(1)},\ldots,\mu_{\sigma(K)};c).
\end{align*}

\textbf{Generalized Triangle Inequality.} For $\mu \in \mathcal{P}_p(\mathbb{R}^d)$, we have :

\begin{align*}
&SMW_p^p(\mu_1,\ldots,\mu_{K};c)
\\
&=\mathbb{E}\left[\inf_{\pi \in \Pi(\mu_1,\ldots,\mu_K)}\int \max_{i \in \{1,\ldots,K\}, j \in \{1,\ldots,K\}} |\theta^\top x_i -\theta^\top x_j|^p d\pi(x_1,\ldots,x_K) \right] \\
    &\leq \mathbb{E}\left[\inf_{\pi \in \Pi(\mu_1,\ldots,\mu_K)}\int \sum_{k=1}^K \max_{i \in \{1,\ldots,K\} \setminus \{{k}\}, j \in \{1,\ldots,K\}\setminus \{{k}\}} |\theta^\top x_i -\theta^\top x_j|^p d\pi(x_1,\ldots,x_K) \right]  \\
    &= \mathbb{E}\left[\inf_{\pi \in  \Pi(\mu_1,\ldots,\mu_K)} \sum_{k=1}^K\int  \max_{i \in \{1,\ldots,K\} \setminus \{{k}\}, j \in \{1,\ldots,K\}\setminus \{{k}\}} |\theta^\top x_i -\theta^\top x_j|^p d\pi(x_1,\ldots,x_K) \right] \\
    &= \mathbb{E}\left[ \sum_{k=1}^K\int  \max_{i \in \{1,\ldots,K\} \setminus \{{k}\}, j \in \{1,\ldots,K\}\setminus \{{k}\}} |\theta^\top x_i -\theta^\top x_j|^p d\pi^\star (x_1,\ldots,x_{k-1},x_{k+1},\ldots x_K ) \right]
\end{align*}
for $\pi^\star$ is the optimal multi-marginal transportation plan and $\pi^\star (x_1,\ldots,x_{k-1},x_{k+1},x_K\ )$ is the marginal joint distribution  by integrating out $x_k$. By the gluing lemma~\citep{peyre2020computational}, there exists optimal plans $\pi^\star (x_1,\ldots,x_{k-1},y,x_{k+1},x_K\ )$ for any $k\in [[K]]$ and $y$ follows $\mu$. We further have:
\begin{align*}
    &SMW_p^p(\mu_1,\ldots,\mu_{K};c)
\\
 &\leq \mathbb{E}\left[ \sum_{k=1}^K\int \max\left( \max_{i \in \{1,\ldots,K\} \setminus \{{k}\}, j \in \{1,\ldots,K\}\setminus \{{k}\}}  |\theta^\top x_i -\theta^\top x_j|^p, \right. \right. \\
 &\quad \quad \max_{i \in \{1,\ldots,K\} \setminus \{{k}\}} |\theta^\top x_i -\theta^\top y|^p  \left.\left.\right)d\pi^\star (x_1,\ldots,x_{k-1},y, x_{k+1},\ldots x_K ) \right] \\
 &= \sum_{k=1}^K \mathbb{E}\left[ \inf_{\pi \in \Pi(\mu_1,\ldots,\mu_{k-1},\mu,\mu_{k+1},\ldots,\mu_K) } \int \max_{i \in \{1,\ldots,K\}, j \in \{1,\ldots,K\}} |\theta^\top x_i -\theta^\top x_j|^p d\pi(x_1,\ldots,x_K) \right ]  \\
 &= \sum_{k=1}^K  SMW_p^p(\mu_1,\ldots,\mu_{k-1},\mu,\mu_{k+1},\ldots,\mu_K;c).
\end{align*}
Applying the Minkowski's inequality, we obtain the desired property:
\begin{align*}
    SMW_p(\mu_1,\ldots,\mu_{K};c) \leq \sum_{k=1}^K  SMW_p(\mu_1,\ldots,\mu_{k-1},\mu,\mu_{k+1},\ldots,\mu_K;c).
\end{align*}

\textbf{Identity of Indiscernibles.} From the proof in Appendix~\ref{subsec:proof:prop:SMW}, we have:
\begin{align*}
    SMW_p^p(\mu_1,\ldots,\mu_{K};c)& \geq \mathbb{E}\left[\max_{i \in \{1,\ldots,K\}, j \in \{1,\ldots,K\}}W_p^p(\theta \sharp \mu_i,\theta \sharp \mu_j)\right] \\
    &\geq \max_{i \in \{1,\ldots,K\}, j \in \{1,\ldots,K\}}\mathbb{E}\left[W_p^p(\theta \sharp \mu_i,\theta \sharp \mu_j)\right] \\
    &= \max_{i \in \{1,\ldots,K\}, j \in \{1,\ldots,K\}} SW_p^p(\mu_i,\mu_j).
\end{align*}
Therefore, when $SMW_p(\mu_1,\ldots,\mu_{K};c)=0$, we have $SW_p^p(\mu_i,\mu_j)=0$  which implies $\mu_i=\mu_j$ for any $i,j\in[[K]]$. As a result, $\mu_1=\ldots =\mu_K$ from the metricity of the SW distance. For the other direction, it is easy to see that if $\mu_1=\ldots \mu_K$, we have $SMW_p(\mu_1,\ldots,\mu_{K};c)=0$ based on the definition and the metricity of the Wasserstein distance.

\subsection{Proof of Proposition~\ref{prop:SMW}}
\label{subsec:proof:prop:SMW}

Given the maximal ground metric $c(\theta^\top x_1,\ldots,\theta^\top x_K) = \max_{i \in \{1,\ldots,K\}, j \in \{1,\ldots,K\}} |\theta^\top x_i -\theta^\top x_j|$, from Equation~\ref{eq:SMW}
\begin{align*}
    SMW_p^p(\mu_1,\ldots,\mu_{K};c)&= \mathbb{E}\left[\inf_{\pi \in \Pi(\mu_1,\ldots,\mu_K)}\int c(\theta^\top x_1,\ldots,\theta^\top x_K)^p d\pi(x_1,\ldots,x_K)\right]\\
    &=\mathbb{E}\left[\inf_{\pi \in \Pi(\mu_1,\ldots,\mu_K)}\int \max_{i \in \{1,\ldots,K\}, j \in \{1,\ldots,K\}} |\theta^\top x_i -\theta^\top x_j|^p d\pi(x_1,\ldots,x_K)\right] 
\end{align*}
By Jensen inequality i.e., $(x_1,\ldots,x_K) \to  \max_{i \in \{1,\ldots,K\}, j \in \{1,\ldots,K\}} |\theta^\top x_i -\theta^\top x_j|^p$ is a convex function, we have:
\begin{align*}
    SMW_p^p(\mu_1,\ldots,\mu_{K};c)& \geq \mathbb{E}\left[\inf_{\pi \in \Pi(\mu_1,\ldots,\mu_K)}\max_{i \in \{1,\ldots,K\}, j \in \{1,\ldots,K\}}\int  |\theta^\top x_i -\theta^\top x_j|^p d\pi(x_1,\ldots,x_K)\right].
\end{align*}
Using max-min inequality, we have:
\begin{align*}
SMW_p^p(\mu_1,\ldots,\mu_{K};c)& \geq \mathbb{E}\left[\max_{i \in \{1,\ldots,K\}, j \in \{1,\ldots,K\}}\inf_{\pi \in \Pi(\mu_1,\ldots,\mu_K)}\int  |\theta^\top x_i -\theta^\top x_j|^p d\pi(x_1,\ldots,x_K)\right] \\
&\geq  \mathbb{E}\left[\max_{i \in \{1,\ldots,K\}, j \in \{1,\ldots,K\}}\inf_{\pi \in \Pi(\mu_i,\mu_j)}\int  |\theta^\top x_i -\theta^\top x_j|^p d\pi(x_i,x_j)\right] \\
&= \mathbb{E}\left[\max_{i \in \{1,\ldots,K\}, j \in \{1,\ldots,K\}}W_p^p(\theta \sharp \mu_i,\theta \sharp \mu_j)\right].
\end{align*}
Therefore, minimizing two sides with respect to $\mu_1$, we have:
\begin{align*}
    \min_{\mu_1} SMW_p^p(\mu_1,\ldots,\mu_{K};c) &\geq \min_{\mu_1}\mathbb{E}\left[ \max_{i \in \{1,\ldots,K\}, j \in \{1,\ldots,K\}}W_p^p(\theta \sharp \mu_i,\theta \sharp \mu_j)\right] \\
    &\geq \min_{\mu_1}\mathbb{E}\left[ \max_{i \in \{2,\ldots,K\}}W_p^p(\theta \sharp \mu_1,\theta \sharp \mu_i)\right] \\
    &=\min_{\mu_1} \mathcal{USF}(\mu_1;\mu_{2:K}),
\end{align*}
which completes the proof.
\section{Additional Materials}
\label{sec:add_materials}

\textbf{Algorithms.} As mentioned in the main paper, we present the computational algorithm for SWB in Algorithm~\ref{alg:SWB}, for s-MFSWB in Algorithm~\ref{alg:s-MFSWB}, for us-MFSWB in Algorithm~\ref{alg:us-MFSWB}, and for es-MFSWB in Algorithm~\ref{alg:es-MFSWB}.

\begin{algorithm}[!t]
\caption{Computational algorithm of the SWB problem}
\begin{algorithmic}
\label{alg:SWB}
\STATE \textbf{Input:} Marginals $\mu_1,\ldots,\mu_K$, $p\geq 1$, weights $\omega_1,\ldots,\omega_K$,  the number of projections $L$, step size $\eta$, the number of iterations $T$.
  \STATE Initialize the barycenter $\mu_\phi$
  \FOR{$t=1$ to $T$}
  \STATE Set $\nabla_\phi=0$
  \STATE Sample $\theta_l \sim \mathcal{U}(\mathbb{S}^{d-1})$
  \FOR{$l=1$ to $L$}
  \FOR{$k=1$ to $K$}
  \STATE Set $\nabla_\phi = \nabla_\phi+ \nabla_\phi \frac{\omega_k}{L} \text{W}_p^p(\theta_l \sharp \mu_\phi,\theta_l \sharp  \mu_k)$
  \ENDFOR
  \ENDFOR
  \STATE $\phi = \phi - \eta \nabla_\phi$ 
  \ENDFOR
 \STATE \textbf{Return:} $\mu_\phi$
\end{algorithmic}
\end{algorithm}

\begin{algorithm}[!t]
\caption{Computational algorithm of the s-MFSWB problem}
\begin{algorithmic}
\label{alg:s-MFSWB}
\STATE \textbf{Input:} Marginals $\mu_1,\ldots,\mu_K$, $p\geq 1$ the number of projections $L$, step size $\eta$, the number of iterations $T$.
  \STATE Initialize the barycenter $\mu_\phi$
  \FOR{$t=1$ to $T$}
  \STATE Set $\nabla_\phi=0$
  \STATE Sample $\theta_l \sim \mathcal{U}(\mathbb{S}^{d-1})$
  \STATE $k^\star=1$
  \FOR{$k=1$ to $K$}
  \FOR{$l=1$ to $L$}
  \IF{$\frac{1}{L}\sum_{l=1}^L\text{W}_p^p(\theta_l \sharp \mu_\phi,\theta_l \sharp  \mu_{k}) > \frac{1}{L}\sum_{l=1}^L\text{W}_p^p(\theta_l \sharp \mu_\phi,\theta_l \sharp  \mu_{k^\star})$}
  \STATE $k^\star = k$
  \ENDIF
  \ENDFOR
  \ENDFOR
  \STATE $\nabla_\phi=  \nabla_\phi + \frac{1}{L}\sum_{l=1}^L \nabla_\phi \text{W}_p^p(\theta_l \sharp \mu_\phi,\theta_l \sharp  \mu_{k^\star})$
  \STATE $\phi = \phi - \eta \nabla_\phi$ 
  \ENDFOR
 \STATE \textbf{Return:} $\mu_\phi$
\end{algorithmic}
\end{algorithm}

\begin{algorithm}[!t]
\caption{Computational algorithm of the us-MFSWB problem}
\begin{algorithmic}
\label{alg:us-MFSWB}
\STATE \textbf{Input:} Marginals $\mu_1,\ldots,\mu_K$, $p\geq 1$ the number of projections $L$, step size $\eta$, the number of iterations $T$.
  \STATE Initialize the barycenter $\mu_\phi$
  \FOR{$t=1$ to $T$}
  \STATE Set $\nabla_\phi=0$
  \STATE Sample $\theta_l \sim \mathcal{U}(\mathbb{S}^{d-1})$
  \FOR{$l=1$ to $L$}
  \STATE $k^\star_l=1$
  \FOR{$k=2$ to $K$}
  \IF{$\text{W}_p^p(\theta_l \sharp \mu_\phi,\theta_l \sharp  \mu_{k}) > \text{W}_p^p(\theta_l \sharp \mu_\phi,\theta_l \sharp  \mu_{k^\star_l})$}
  \STATE $k^\star_l = k$
  \ENDIF
  \ENDFOR
  \STATE $\nabla_\phi=  \nabla_\phi + \nabla_\phi \frac{1}{L}\text{W}_p^p(\theta_l \sharp \mu_\phi,\theta_l \sharp  \mu_{k^\star_l})$
  \ENDFOR
  \STATE $\phi = \phi - \eta \nabla_\phi$ 
  \ENDFOR
 \STATE \textbf{Return:} $\mu_\phi$
\end{algorithmic}
\end{algorithm}

\begin{algorithm}[!t]
\caption{Computational algorithm of the es-MFSWB problem}
\begin{algorithmic}
\label{alg:es-MFSWB}
\STATE \textbf{Input:} Marginals $\mu_1,\ldots,\mu_K$, $p\geq 1$ the number of projections $L$, step size $\eta$, the number of iterations $T$.
  \STATE Initialize the barycenter $\mu_\phi$
  \FOR{$t=1$ to $T$}
  \STATE Set $\nabla_\phi=0$
  \STATE Sample $\theta_l \sim \mathcal{U}(\mathbb{S}^{d-1})$
  \FOR{$l=1$ to $L$}
  \STATE $k^\star_l=1$
  \FOR{$k=2$ to $K$}
  \IF{$\text{W}_p^p(\theta_l \sharp \mu_\phi,\theta_l \sharp  \mu_{k}) > \text{W}_p^p(\theta_l \sharp \mu_\phi,\theta_l \sharp  \mu_{k^\star_l})$}
  \STATE $k^\star_l = k$
  \ENDIF
  \ENDFOR

  \ENDFOR
  \FOR{$l=1$ to $L$}
  \STATE $w_{l,\phi} = \frac{\exp(\text{W}_p^p(\theta_l \sharp \mu_\phi,\theta_l \sharp  \mu_{k^\star_l}))}{\sum_{j=1}^L \exp(\text{W}_p^p(\theta_j \sharp \mu_\phi,\theta_j \sharp  \mu_{k^\star_j}))}$
  \ENDFOR
  \STATE $\nabla_\phi=  \nabla_\phi + \nabla_\phi \frac{w_{l,\phi}}{L}\text{W}_p^p(\theta_l \sharp \mu_\phi,\theta_l \sharp  \mu_{k^\star_l})$
  \STATE $\phi = \phi - \eta \nabla_\phi$ 
  \ENDFOR
 \STATE \textbf{Return:} $\mu_\phi$
\end{algorithmic}
\end{algorithm}

\textbf{Energy-based Sliced Multi-marginal Wasserstein.} As shown in Proposition~\ref{prop:SMW}, us-MFSWB is equivalent to minimizing a lower bound of SMW with the maximal ground metric. We now show that es-MFSWB is also equivalent to minimizing a lower bound of a variant of SMW  i.e., Energy-based sliced Multi-marginal Wasserstein with the maximal ground metric. We refer the reader to Proposition~\ref{prop:ESMW} for a detailed definition. The proof of Proposition~\ref{prop:ESMW} is similar to the proof of Proposition~\ref{prop:SMW} in Appendix~\ref{subsec:proof:prop:SMW}.

\begin{proposition}
    \label{prop:ESMW}
     Given $K\geq 2$ marginals $\mu_1,\ldots,\mu_K \in \mathcal{P}_p(\mathbb{R}^d)$, the maximal ground metric $c(\theta^\top x_1,\ldots,\theta^\top x_K) = \max_{i \in \{1,\ldots,K\}, j \in \{1,\ldots,K\}} |\theta^\top x_i -\theta^\top x_j|$, we have: 
     \begin{align}
         \min_{\mu_1} \mathcal{ESF}(\mu_1;\mu_{2:K}) \leq \min_{\mu_1}ESMW_p^p(\mu_1,\mu_2,\ldots,\mu_{K};c),
     \end{align} 
     where 
     \begin{align*}
         ESMW_p^p(\mu_1,\mu_2,\ldots,\mu_{K};c) = \mathbb{E}\left[\inf_{\pi \in \Pi(\mu_1,\ldots,\mu_K)}\int c(\theta^\top x_1,\ldots,\theta^\top x_K)^p d\pi(x_1,\ldots,x_K)\right],
     \end{align*}
     and the expectation is with respect to $\sigma(\theta)$ i.e.,  $$f_\sigma(\theta;\mu_1,\mu_{2:K}) \propto \exp\left(\max_{k \in \{2,\ldots,K\}} W_p^p(\theta \sharp \mu_1,\theta \sharp \mu_k)\right).$$
\end{proposition}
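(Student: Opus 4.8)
\textit{Proof proposal.} The plan is to reproduce the argument of Appendix~\ref{subsec:proof:prop:SMW} essentially verbatim, the only modification being that every expectation over the projecting direction $\theta$ is now taken against the energy-based slicing distribution $\sigma(\theta;\mu_1,\mu_{2:K})$ instead of the uniform distribution $\mathcal{U}(\mathbb{S}^{d-1})$. The crucial structural observation is that, for a fixed $\mu_1$, both $ESMW_p^p(\mu_1,\ldots,\mu_K;c)$ and $\mathcal{ESF}(\mu_1;\mu_{2:K})$ integrate against one and the \emph{same} measure $\sigma(\theta;\mu_1,\mu_{2:K})$, and that all the inequalities in the proof of Proposition~\ref{prop:SMW} are established pointwise in $\theta$, hence remain valid after integrating against any probability measure over $\theta$.

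Concretely, I would first fix $\mu_1$ and invoke the pointwise-in-$\theta$ chain obtained in the proof of Proposition~\ref{prop:SMW} by the Jensen and max--min inequalities together with the relaxation of the multi-marginal constraint to a pairwise one:
\begin{align*}
    \inf_{\pi \in \Pi(\mu_1,\ldots,\mu_K)}\int \max_{i,j \in \{1,\ldots,K\}} |\theta^\top x_i -\theta^\top x_j|^p \, d\pi
    &\geq \max_{i,j \in \{1,\ldots,K\}} W_p^p(\theta \sharp \mu_i,\theta \sharp \mu_j) \\
    &\geq \max_{i \in \{2,\ldots,K\}} W_p^p(\theta \sharp \mu_1,\theta \sharp \mu_i),
\end{align*}
where the last step restricts the outer maximum to pairs containing the index $1$. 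Taking the expectation of the two extreme sides with respect to $\theta \sim \sigma(\theta;\mu_1,\mu_{2:K})$---which is legitimate precisely because the bound holds for each $\theta$ individually---gives $ESMW_p^p(\mu_1,\ldots,\mu_K;c) \geq \mathcal{ESF}(\mu_1;\mu_{2:K})$ for that fixed $\mu_1$.

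Since this inequality holds for every $\mu_1$, taking the minimum over $\mu_1$ on both sides preserves it, yielding the claim $\min_{\mu_1}\mathcal{ESF}(\mu_1;\mu_{2:K}) \leq \min_{\mu_1}ESMW_p^p(\mu_1,\ldots,\mu_K;c)$. The one point requiring care---and the main, if mild, obstacle---is that the slicing distribution $\sigma(\theta;\mu_1,\mu_{2:K})$ itself depends on $\mu_1$, so unlike the uniform case one cannot treat it as a fixed reference measure and pull the minimization inside a single expectation. The resolution is exactly the pointwise-in-$\theta$ nature of the inequalities: they hold before any integration, so integrating against the $\mu_1$-specific measure establishes $ESMW_p^p \geq \mathcal{ESF}$ separately for each $\mu_1$, after which elementary monotonicity of the minimum closes the argument without ever comparing $\sigma$ across different values of $\mu_1$.
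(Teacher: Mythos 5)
Your proposal is correct and takes essentially the same route as the paper, which gives no separate argument for Proposition~\ref{prop:ESMW} and simply states that the proof mirrors that of Proposition~\ref{prop:SMW} in Appendix~\ref{subsec:proof:prop:SMW}: since the chain of inequalities there (Jensen, max--min, and relaxing the multi-marginal coupling to pairwise couplings) holds pointwise in $\theta$, it survives integration against the energy-based slicing distribution exactly as you argue. Your explicit observation that $\sigma(\theta;\mu_1,\mu_{2:K})$ depends on $\mu_1$, so the comparison must be made at each fixed $\mu_1$ before minimizing, correctly fills in the one subtlety the paper leaves implicit.
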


\section{Related Works}
\label{sec:relatedworks}

\textbf{Fair Learning with Wasserstein Barycenter.}  A connection between fair regression and one-dimensional Wasserstein barycenter is established by deriving the expression for the optimal function minimizing squared risk under Demographic Parity constraints~\citep{chzhen2020fair}. Similarly, Demographic Parity fair classification is connected to one-dimensional Wasserstein-1 distance barycenter in~\citep{jiang2020wasserstein}. The work~\citep{hu2023fairness} extends the Demographic Parity constraint to multi-task problems for regression and classification and connects them to the one-dimensional Wasserstein-2 distance barycenters. A method to augment the input so that predictability of the protected attribute is impossible, by using  Wasserstein-2 distance Barycenters to repair the data is proposed in~\citep{gordaliza2019obtaining}. 
A general approach for using one-dimensional Wasserstein-1 distance barycenter to obtain Demographic Parity in classification and regression is proposed in~\citep{silvia2020general}. 
Overall, all discussed works define fairness in terms of Demographic Parity constraints in applications with a response variable (classification and regression) in one dimension. In contrast, we focus on marginal fairness barycenter i.e., using a set of measures only, in any dimensions. 

\textbf{Other possible applications.} Wasserstein barycenter has been used to cluster measures in~\citep{zhuang2022wasserstein}. In particular, a K-mean algorithm for measures is proposed with Wasserstein barycenter as the averaging operator. Therefore, our MFSWB can be directly used to enforce the fairness for averaging inside each cluster. The proposed MFSWB can be also used to average meshes by changing the SW to H2SW which is proposed in~\citep{nguyen2024hybrid}.

\section{Additional Experiments}
\label{sec:add_exps}

\begin{figure}[!t]
\begin{center}
    
  \begin{tabular}{ccccc}

   \widgraph{0.211\textwidth}{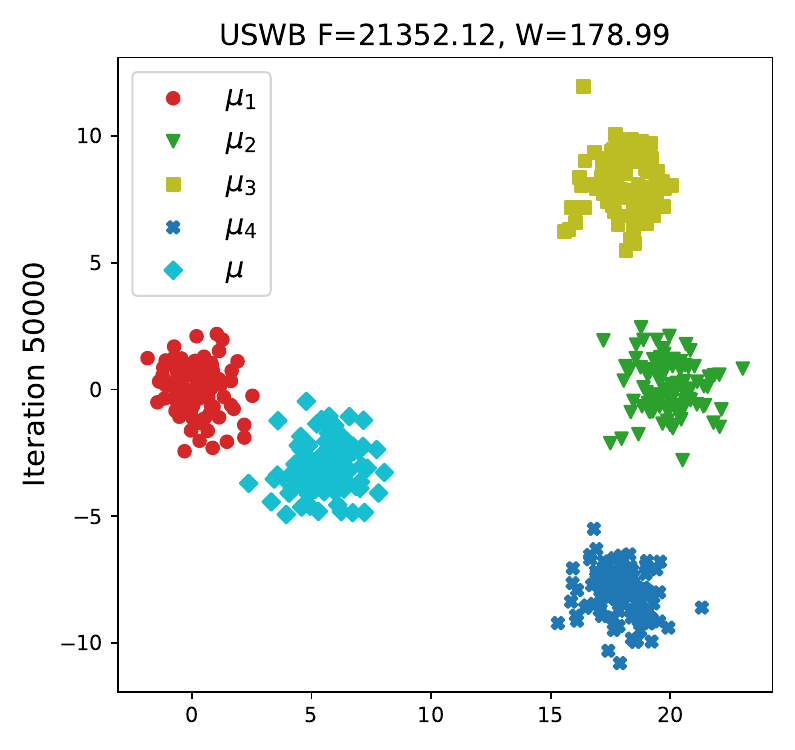} &\hspace{-0.2 in}\widgraph{0.2\textwidth}{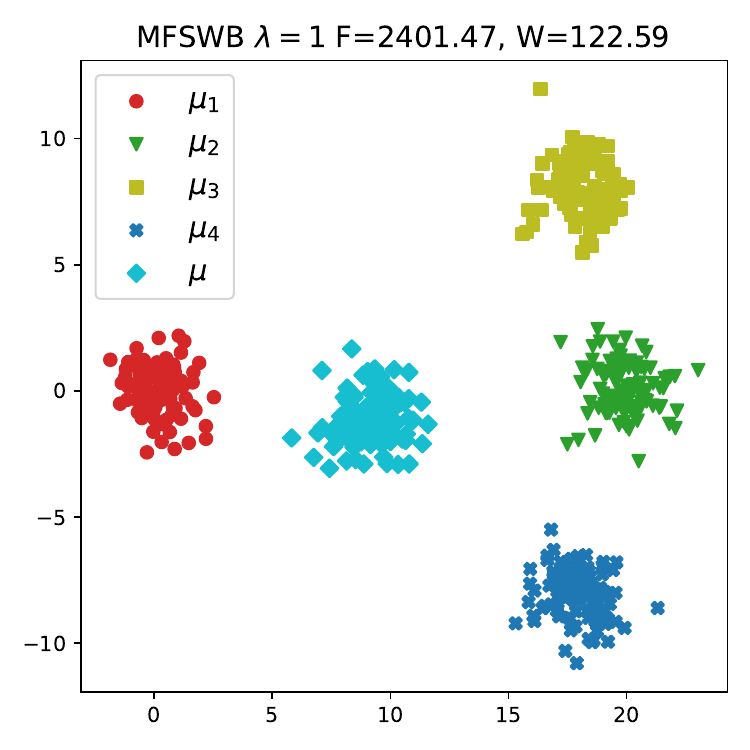}&\hspace{-0.2 in}\widgraph{0.2\textwidth}{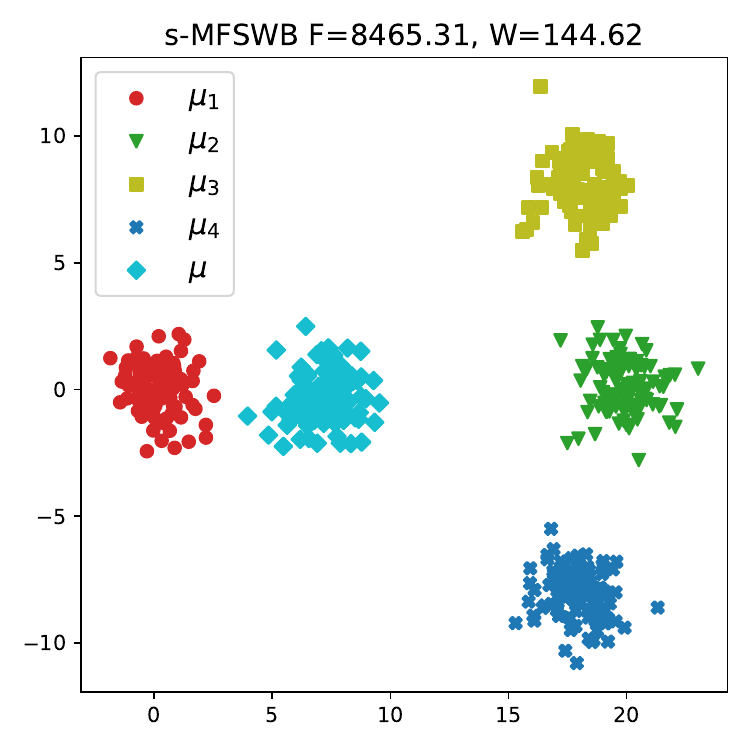}&\hspace{-0.2 in}\widgraph{0.2\textwidth}{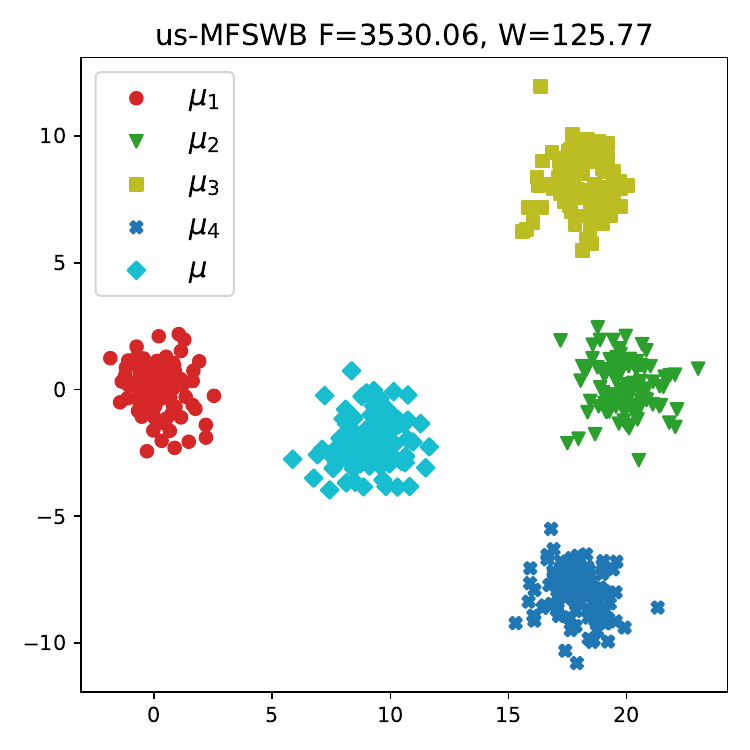} &\hspace{-0.2 in}\widgraph{0.2\textwidth}{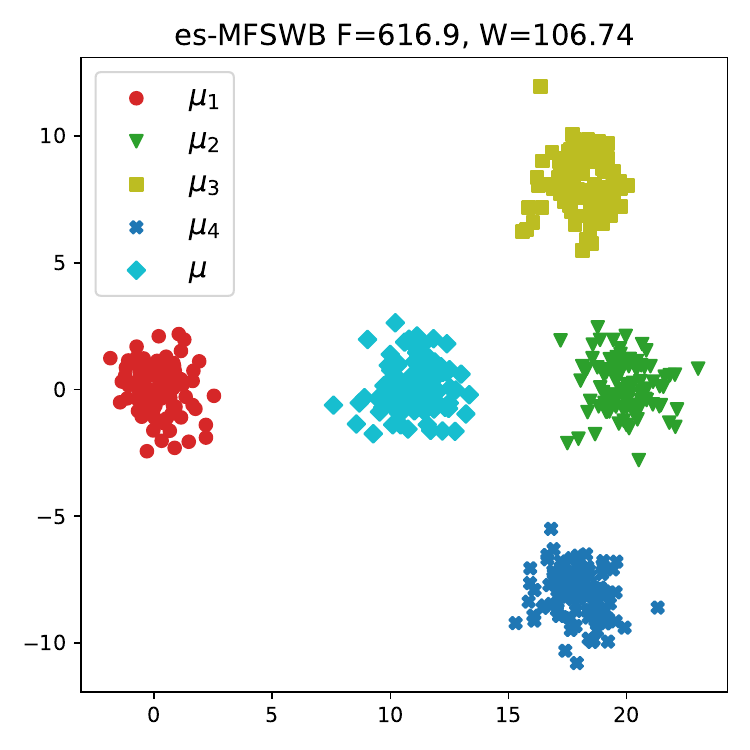}   \vspace{-0.05in}
\\
\widgraph{0.211\textwidth}{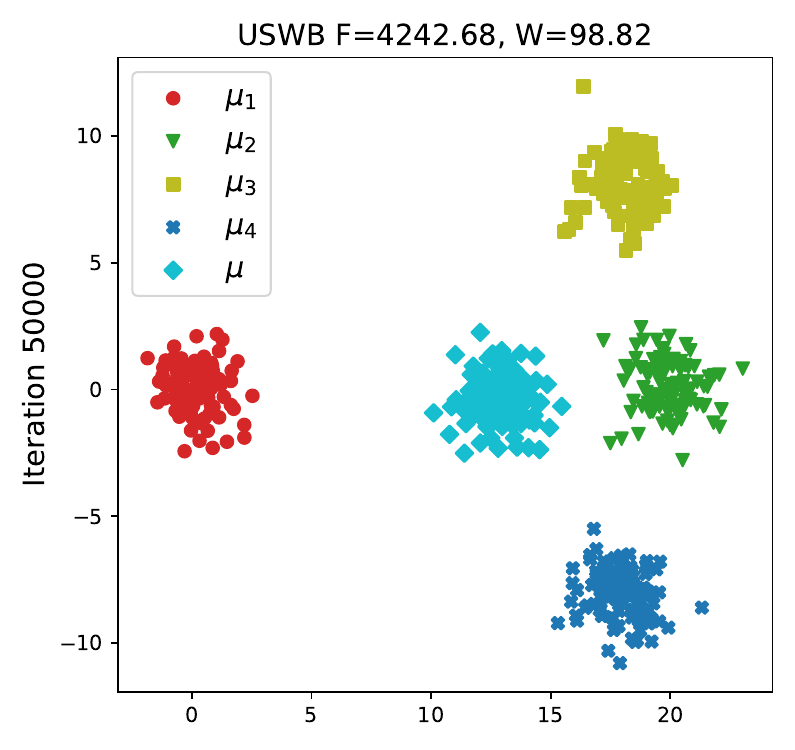} &\hspace{-0.2 in}\widgraph{0.2\textwidth}{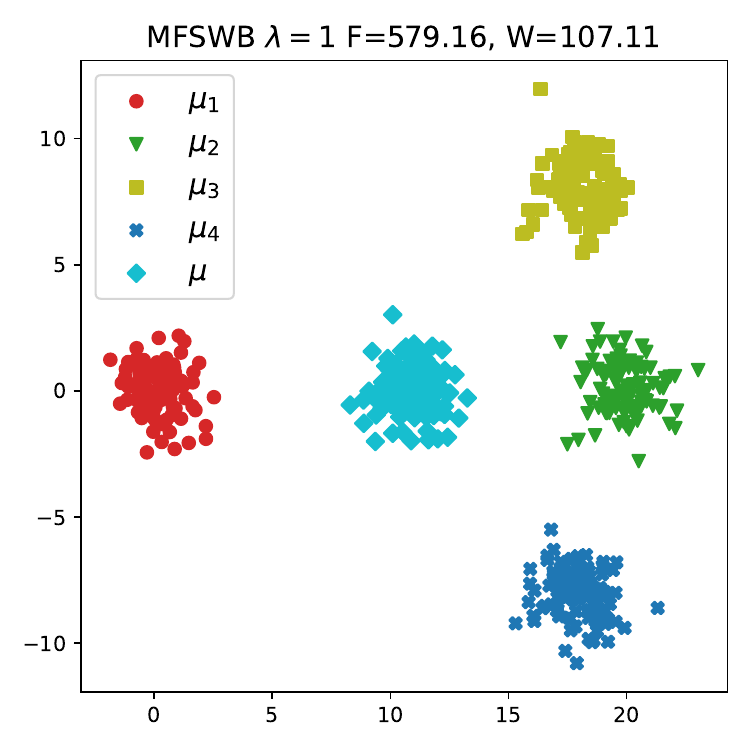}&\hspace{-0.2 in}\widgraph{0.2\textwidth}{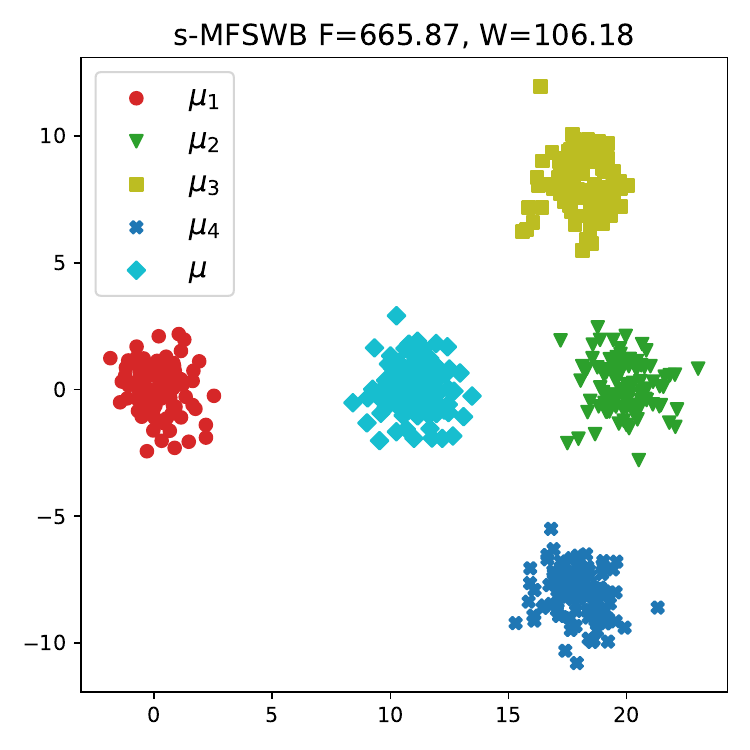}&\hspace{-0.2 in}\widgraph{0.2\textwidth}{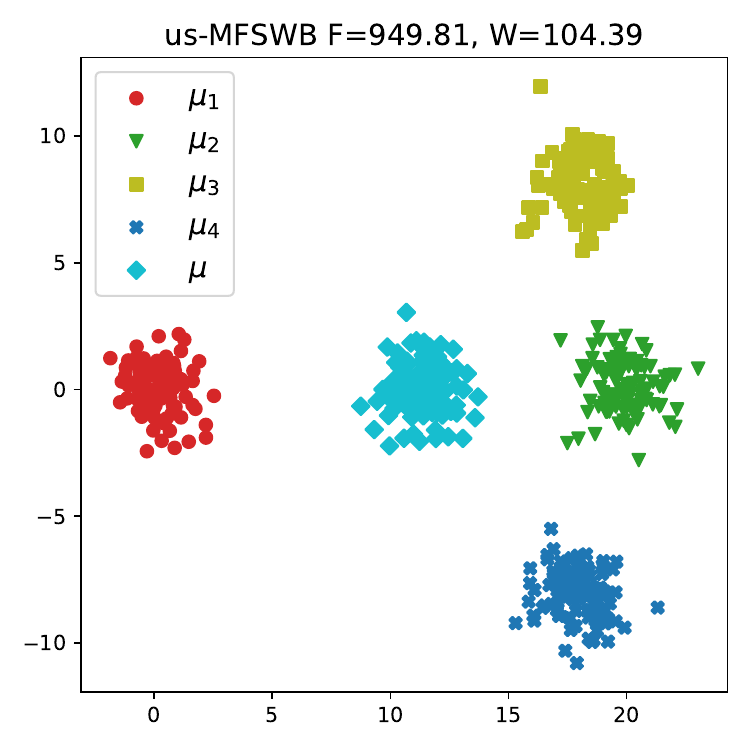} &\hspace{-0.2 in}\widgraph{0.2\textwidth}{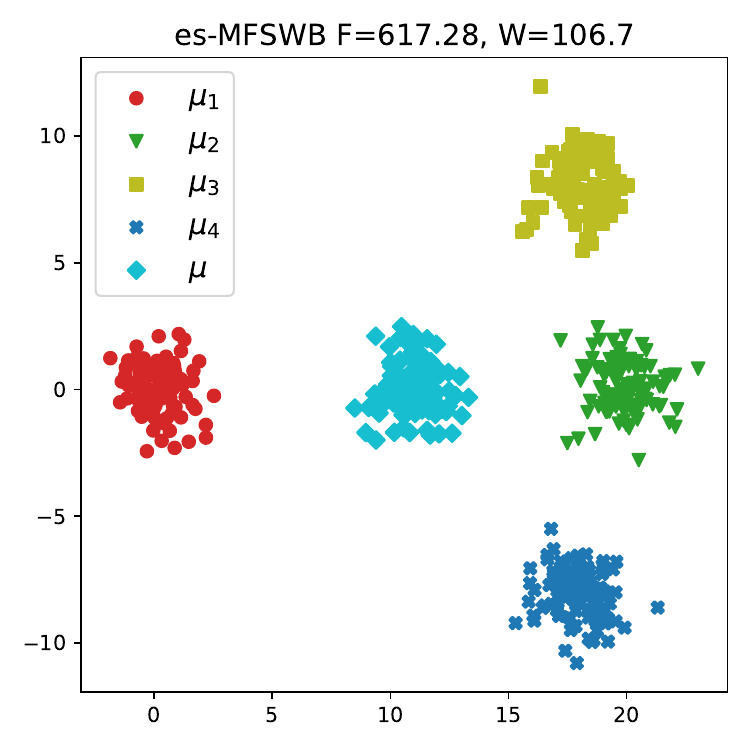}   \vspace{-0.05in}
\\
\widgraph{0.211\textwidth}{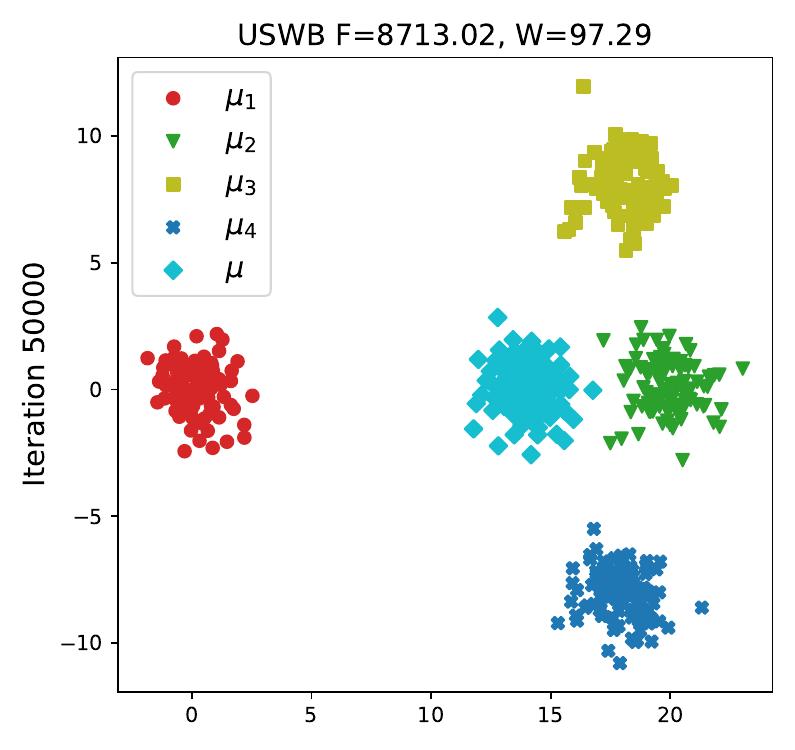} &\hspace{-0.2 in}\widgraph{0.2\textwidth}{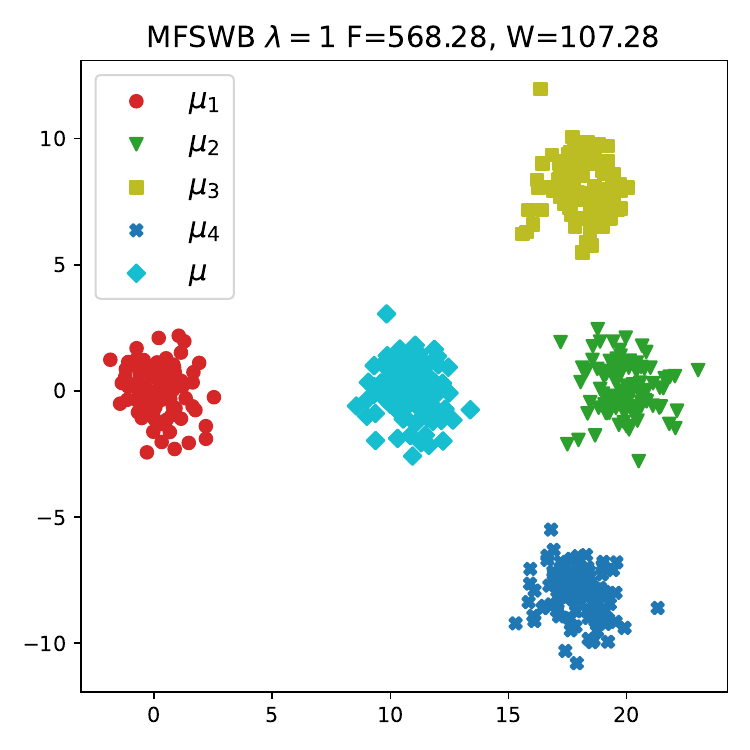}&\hspace{-0.2 in}\widgraph{0.2\textwidth}{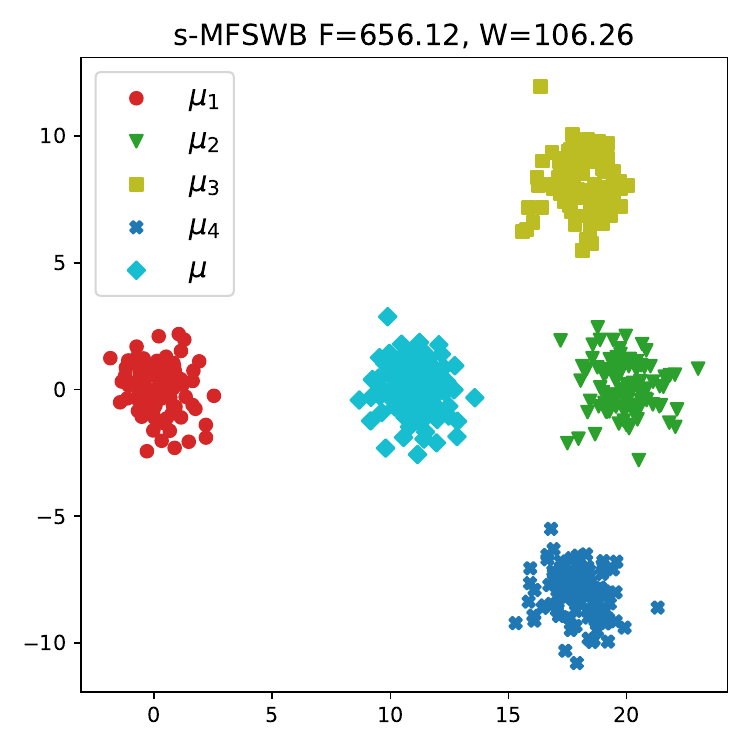}&\hspace{-0.2 in}\widgraph{0.2\textwidth}{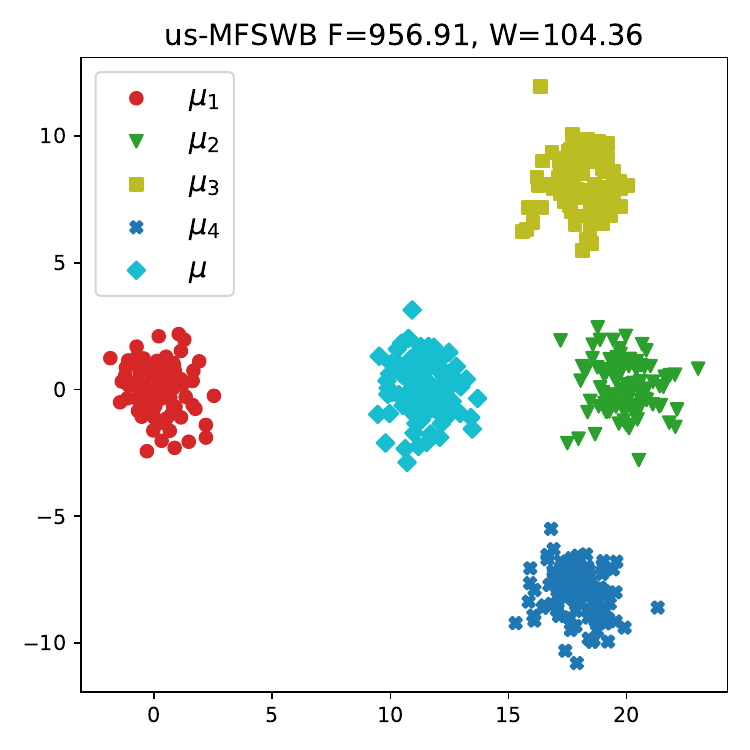} &\hspace{-0.2 in}\widgraph{0.2\textwidth}{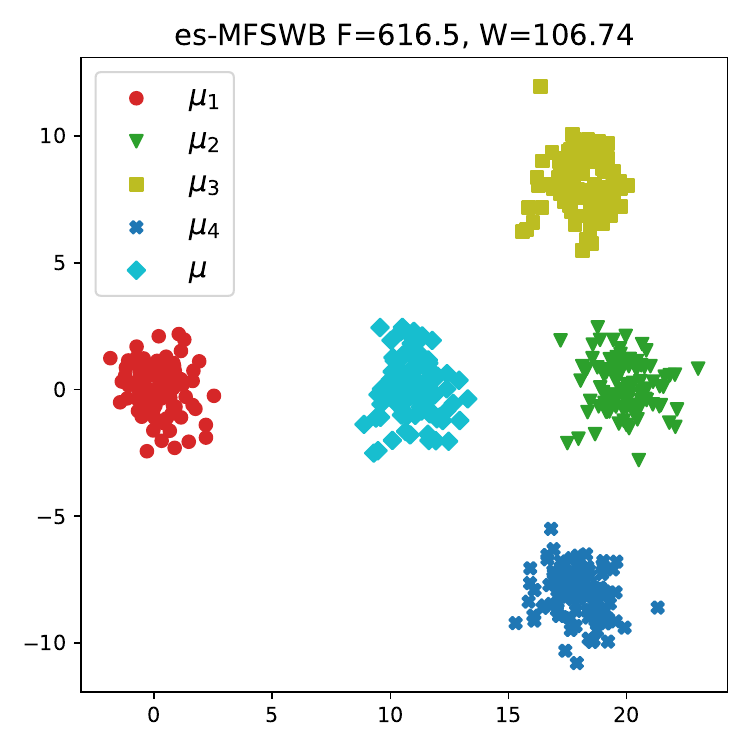}   \vspace{-0.05in}
  \end{tabular}
  \end{center}
  \vskip -0.1in
  \caption{
  \footnotesize{Barycenters from USWB, MFSWB with $\lambda=1$, s-MFSWB, us-MFSWB, and es-MFSWB with learning rate $0.001$ (first row), $0.005$ (second row), and $0.05$ (third row).
}
} 
  \label{fig:gauss_exp_rebuttal}
   \vskip -0.2in
\end{figure}

\begin{figure}[!t]
\begin{center}
    
  \begin{tabular}{cccc}
  \widgraph{0.25\textwidth}{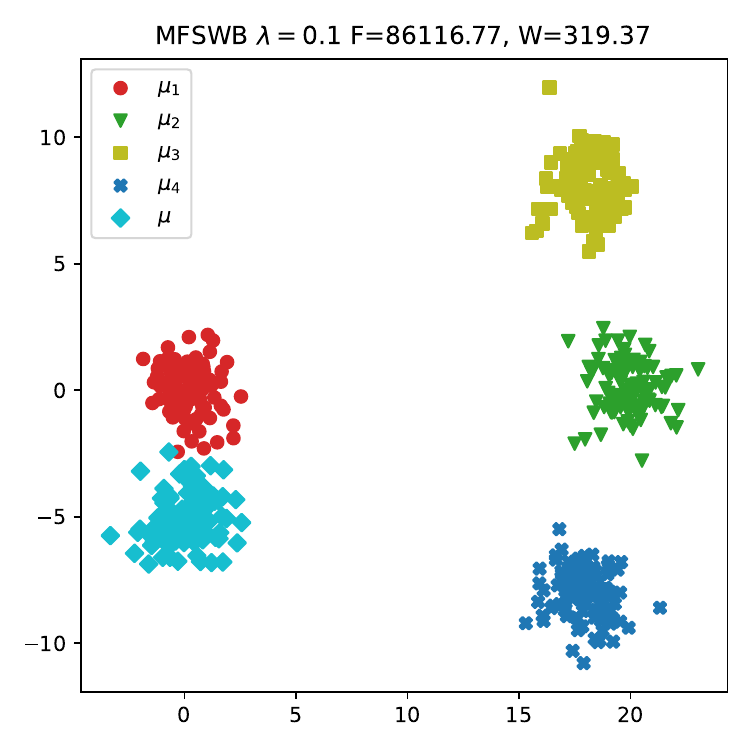} &\hspace{-0.2 in}\widgraph{0.25\textwidth}{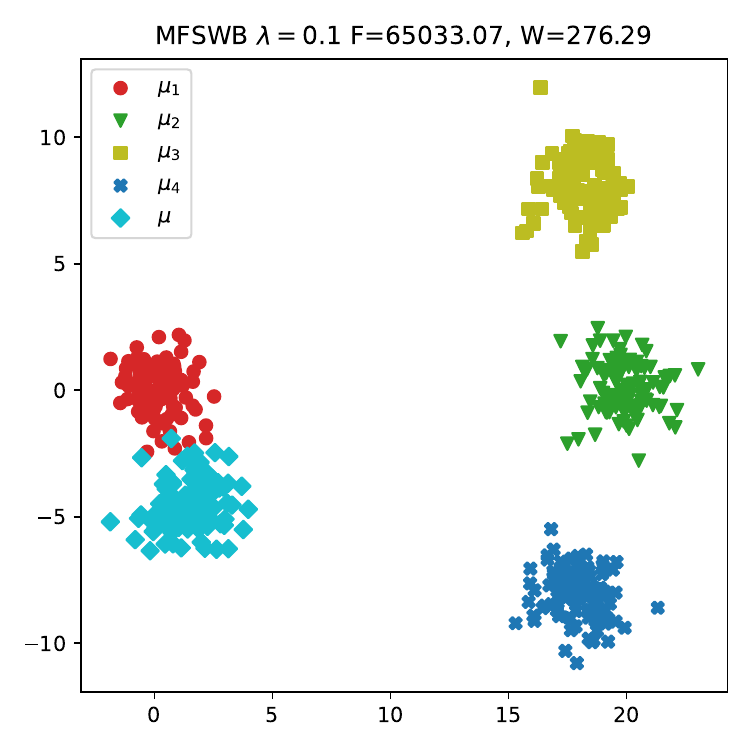}&\hspace{-0.2 in}\widgraph{0.25\textwidth}{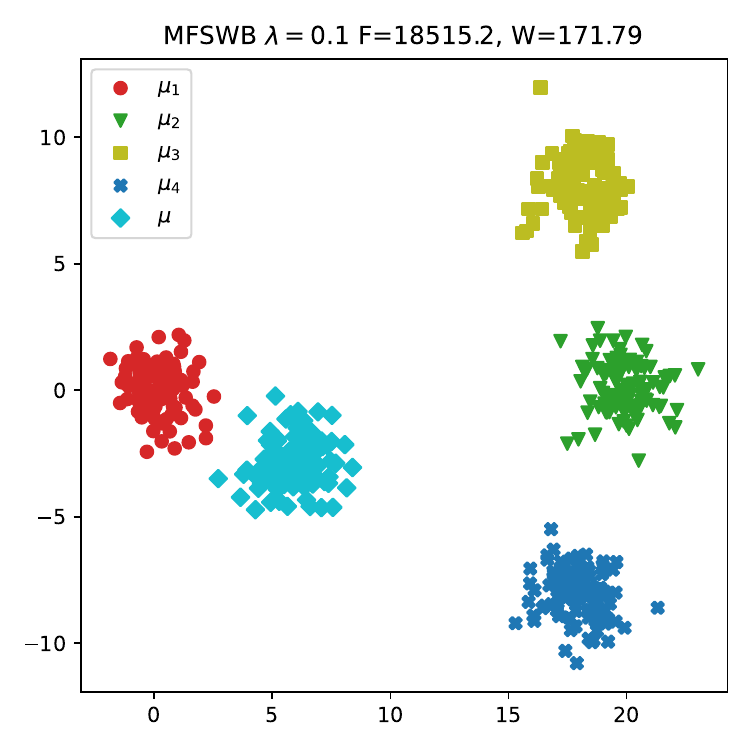}&\hspace{-0.2 in}\widgraph{0.25\textwidth}{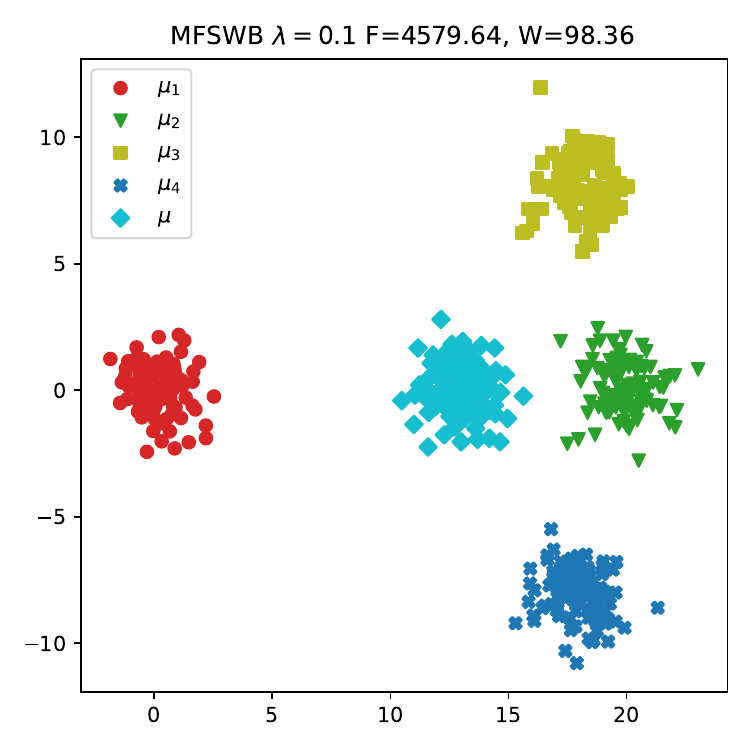}  \vspace{-0.05in} \\
  \widgraph{0.25\textwidth}{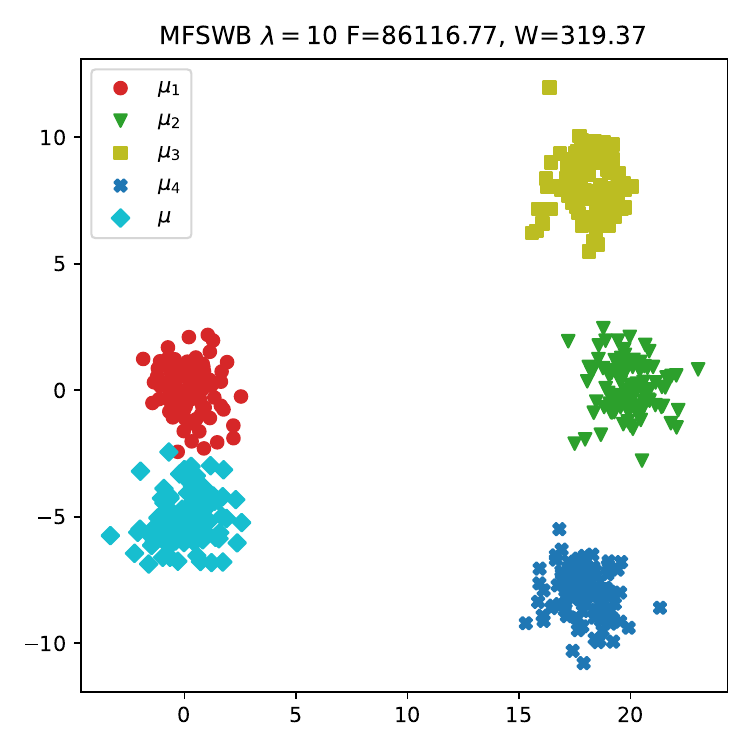} &\hspace{-0.2 in}\widgraph{0.25\textwidth}{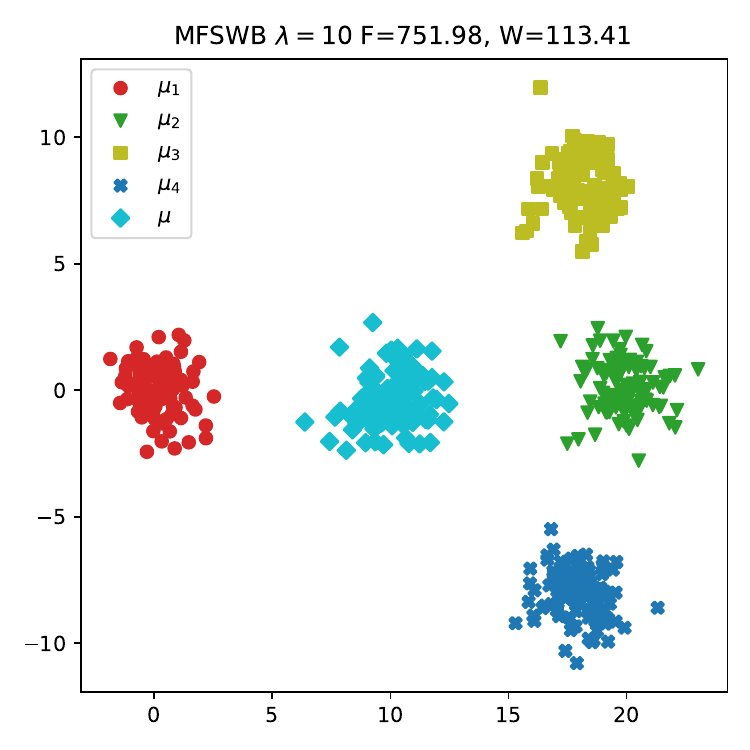}&\hspace{-0.2 in}\widgraph{0.25\textwidth}{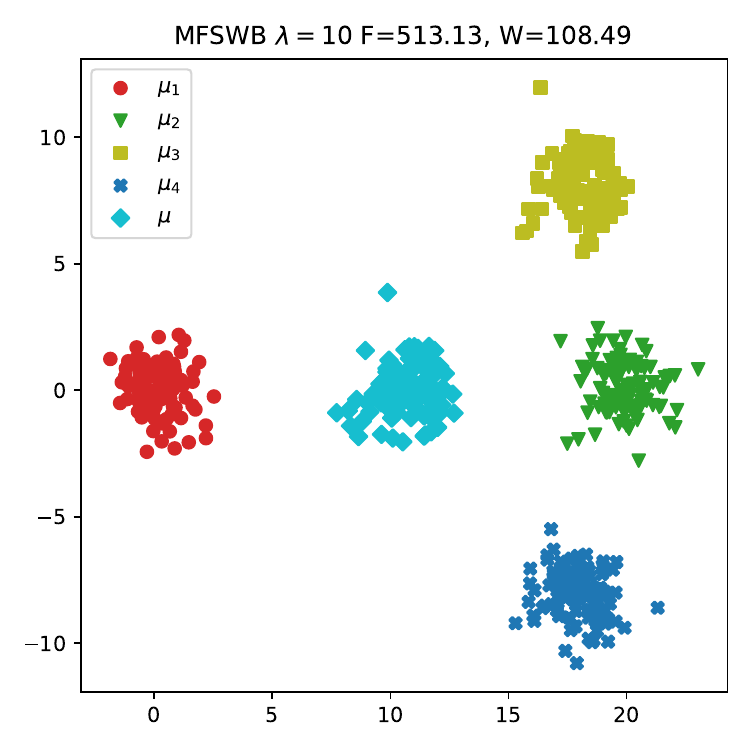}&\hspace{-0.2 in}\widgraph{0.25\textwidth}{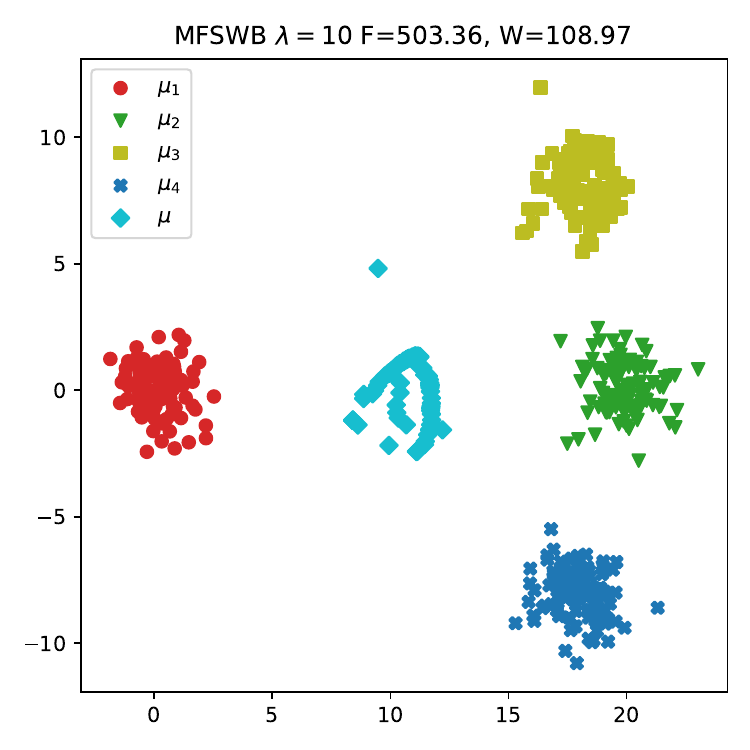}  \vspace{-0.05in}

  \end{tabular}
  \end{center}
  \vskip -0.1in
  \caption{
  \footnotesize{Barycenters from MFSWB with $\lambda=0.1$ and $\lambda=10$ along gradient iterations with the corresponding F-metric and W-metric.
}
} 
  \label{fig:gauss_exp_appendix}
   \vskip -0.1in
\end{figure}

\textbf{Gaussians barycenter with the formal MFSWB.} We report the result of finding barycenters from USWB, MFSWB with $\lambda=1$, s-MFSWB, us-MFSWB, and es-MFSWB with learning rate $0.001$, $0.005$, and $0.05$ in Figure~\ref{fig:gauss_exp_rebuttal}. We present the result of finding barycenters of Gaussian distributions with MFSWB $\lambda=0.1$ and $\lambda=10$ in Figure~\ref{fig:gauss_exp_appendix}.

\begin{figure}[!t]
\begin{center}
    
  \begin{tabular}{c}
  \widgraph{0.8\textwidth}{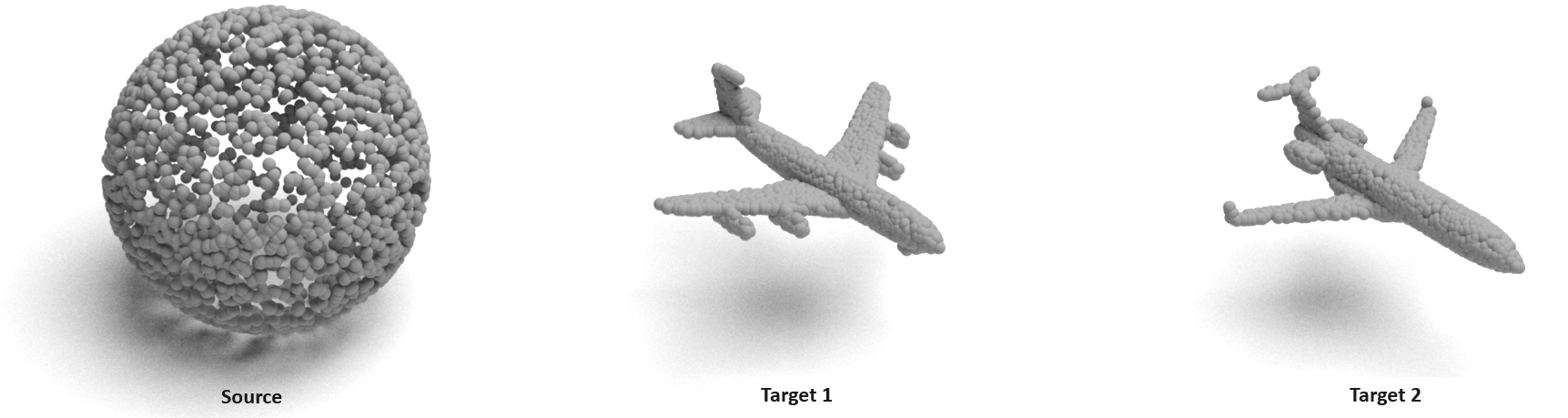} \\
\widgraph{1\textwidth}{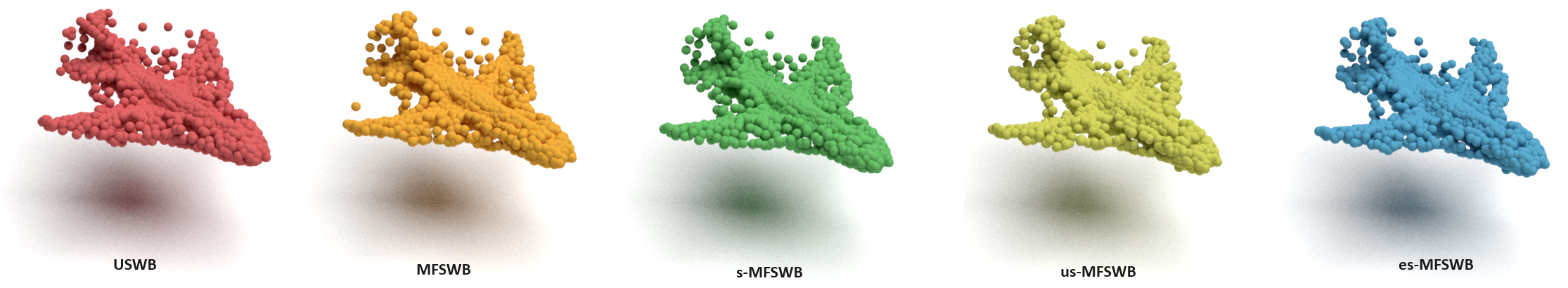}

  \end{tabular}
  \end{center}
  \vskip -0.2in
  \caption{
  \footnotesize{Averaging point-clouds with USWB, MFSWB  ($\lambda=1$), s-MFSWB, us-MFSWB, and es-MFSWB.
}
} 
  \label{fig:shape2}
   \vskip -0.1in
\end{figure}

\begin{table}[!t]
    \centering
    \caption{\footnotesize{F-metric and W-metric along iterations in point-cloud averaging application.}}
    \scalebox{0.6}{
    \begin{tabular}{|l|cc|cc|cc|cc|}
    \toprule
     \multirow{2}{*}{Method}&\multicolumn{2}{c|}{Iteration 0}&\multicolumn{2}{c|}{Epoch 1000}&\multicolumn{2}{c|}{Epoch 5000}&\multicolumn{2}{c|}{Epoch 10000}\\
     \cmidrule{2-9}
     & F ($\downarrow$) &W ($\downarrow$) & F ($\downarrow$) &W ($\downarrow$)  & F ($\downarrow$) &W ($\downarrow$)  & F ($\downarrow$) &W ($\downarrow$)   \\
     \midrule
USWB&$746.67\pm0.0$&$4814.71\pm0.0$&$35.22\pm1.04$&$161.11\pm0.54$&$7.82\pm0.26$&$109.82\pm0.28$&$11.08\pm0.06$&$108.52\pm0.17$\\
MFSWB $\lambda=0.1$&$746.67\pm0.0$&$4814.71\pm0.0$&$35.15\pm0.36$&$159.84\pm0.55$&$4.95\pm0.23$&$109.14\pm0.33$&$6.95\pm0.8$&$107.83\pm0.16$\\
MFSWB $\lambda=1$&$746.67\pm0.0$&$4814.71\pm0.0$&$33.21\pm2.72$&$151.24\pm0.64$&$2.54\pm1.5$&$109.66\pm0.26$&$4.66\pm2.1$&$\mathbf{108.1\pm0.05}$\\
MFSWB $\lambda=10$&$746.67\pm0.0$&$4814.71\pm0.0$&$34.03\pm22.6$&$158.66\pm1.39$&$29.19\pm14.29$&$122.66\pm0.88$&$20.55\pm13.57$&$123.65\pm1.52$\\

s-MFSWB&$746.67\pm0.0$&$4814.71\pm0.0$&$36.23\pm1.88$&$154.4\pm0.67$&$\mathbf{0.66\pm0.44}$&$\mathbf{109.17\pm0.34}$&$2.54\pm2.06$&$107.57\pm0.19$\\
us-MFSWB&$746.67\pm0.0$&$4814.71\pm0.0$&$28.65\pm1.37$&$144.27\pm0.65$&$1.02\pm0.8$&$109.67\pm0.1$&$\mathbf{1.35\pm0.77}$&$108.2\pm0.19$ \\
es-MFSWB&$746.67\pm0.0$&$4814.71\pm0.0$&$\mathbf{28.05\pm1.16}$&$\mathbf{143.24\pm0.76}$&$0.99\pm0.32$&$109.68\pm0.14$&$1.36\pm0.62$&$108.28\pm0.07$\\
    \bottomrule
    \end{tabular}
}
    \label{tab:averaging2}
    \vskip -0.1in
\end{table}

\textbf{Point-cloud averaging.} We report the averaging results of two point-clouds of plane shapes n Figure~\ref{fig:shape2} and the corresponding F-metrics and W-metric along iterations in Table~\ref{tab:averaging2}. We see that the proposed surrogates achieve better F-metric and W-metric than the USWB. In this case, us-MFSWB gives the best F-metric at the final epoch, however, es-MFSWB also gives a comparable performance and performs better at earlier epochs. For the formal MFSWB, it does not perform well with the chosen set of $\lambda$.

\begin{figure}[!t]
\begin{center}
    
  \begin{tabular}{c}
  \widgraph{1\textwidth}{images/tower.pdf} \\
\widgraph{1\textwidth}{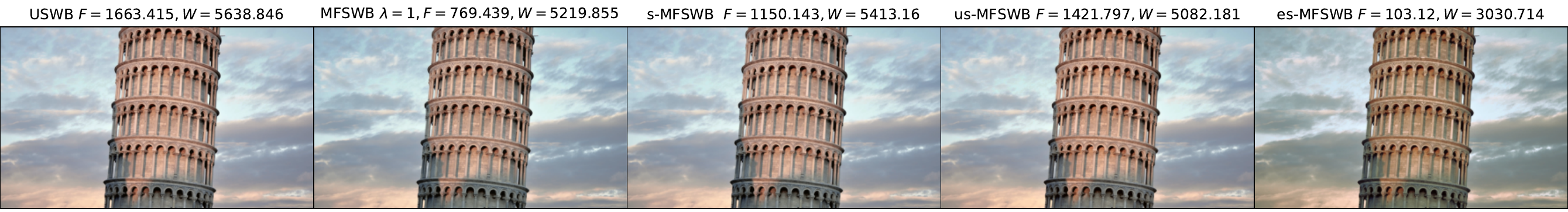}

  \end{tabular}
  \end{center}
  \vskip -0.1in
  \caption{
  \footnotesize{Harmonized images from USWB, MFSWB ($\lambda=1$), s-MFSWB, us-MFSWB, and es-MFSWB at iteration 5000.
}
} 
  \label{fig:tower_5000}
   \vskip -0.1in
\end{figure}

\begin{figure}[!t]
\begin{center}
    
  \begin{tabular}{ccc}
  \widgraph{1\textwidth}{images/tower.pdf} \\
\widgraph{1\textwidth}{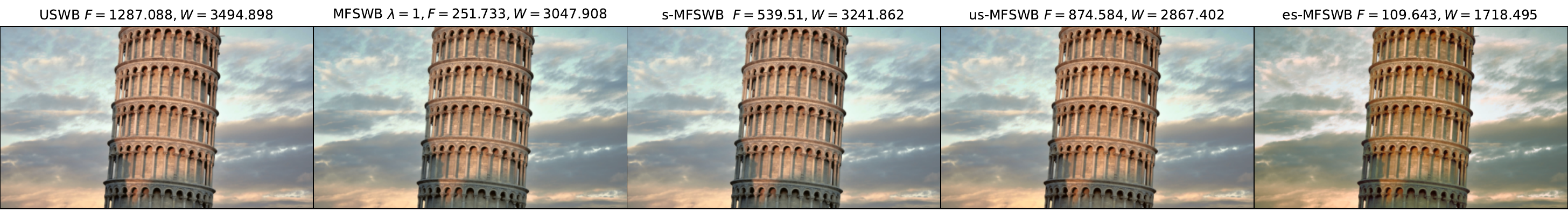}

  \end{tabular}
  \end{center}
  \vskip -0.1in
  \caption{
  \footnotesize{Harmonized images from USWB, MFSWB ($\lambda=1$), s-MFSWB, us-MFSWB, and es-MFSWB at iteration 10000.
}
} 
  \label{fig:tower_10000}
   \vskip -0.1in
\end{figure}

\begin{figure}[!t]
\begin{center}
    
  \begin{tabular}{cc}
\widgraph{0.5\textwidth}{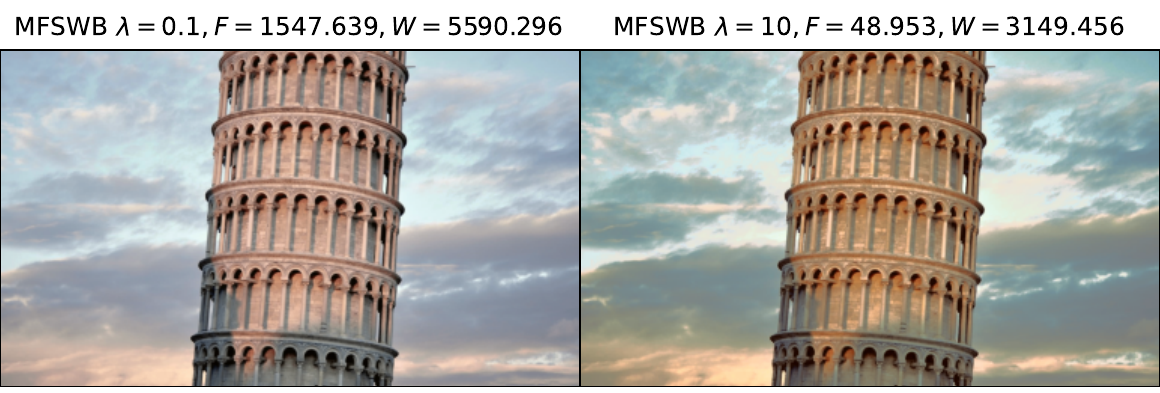} &\widgraph{0.5\textwidth} {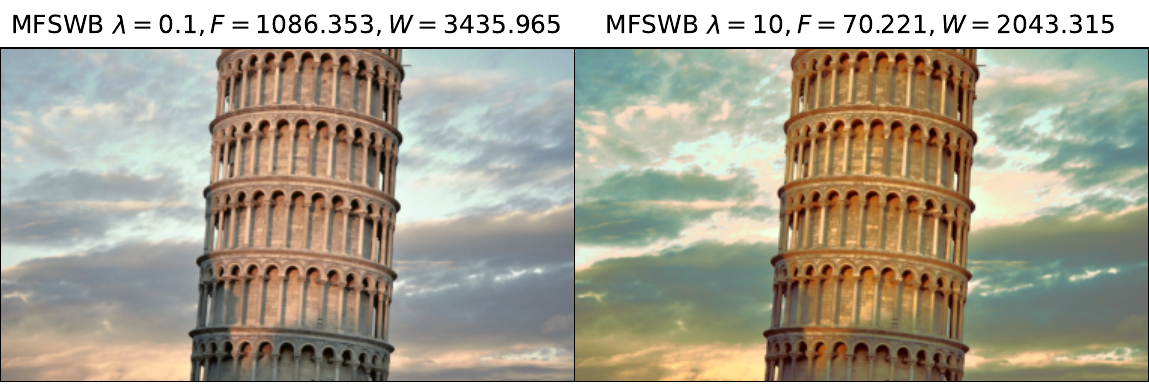} \\\widgraph{0.5\textwidth}{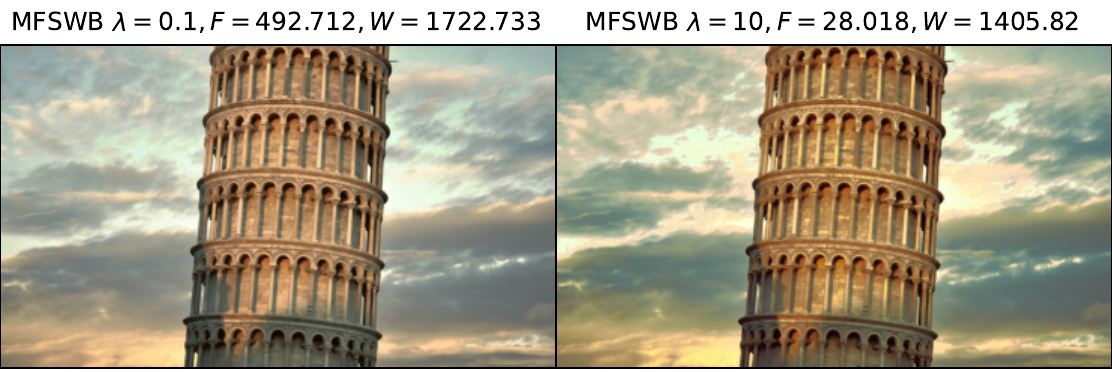}

  \end{tabular}
  \end{center}
  \vskip -0.1in
  \caption{
  \footnotesize{Harmonized images from MFSWB with $\lambda=0.1$ and $\lambda=10$ at iterations 5000, 10000, and 20000.
}
} 
  \label{fig:tower_mfswb}
   \vskip -0.1in
\end{figure}

\begin{figure}[!t]
\begin{center}
    
  \begin{tabular}{c}
  \widgraph{1\textwidth}{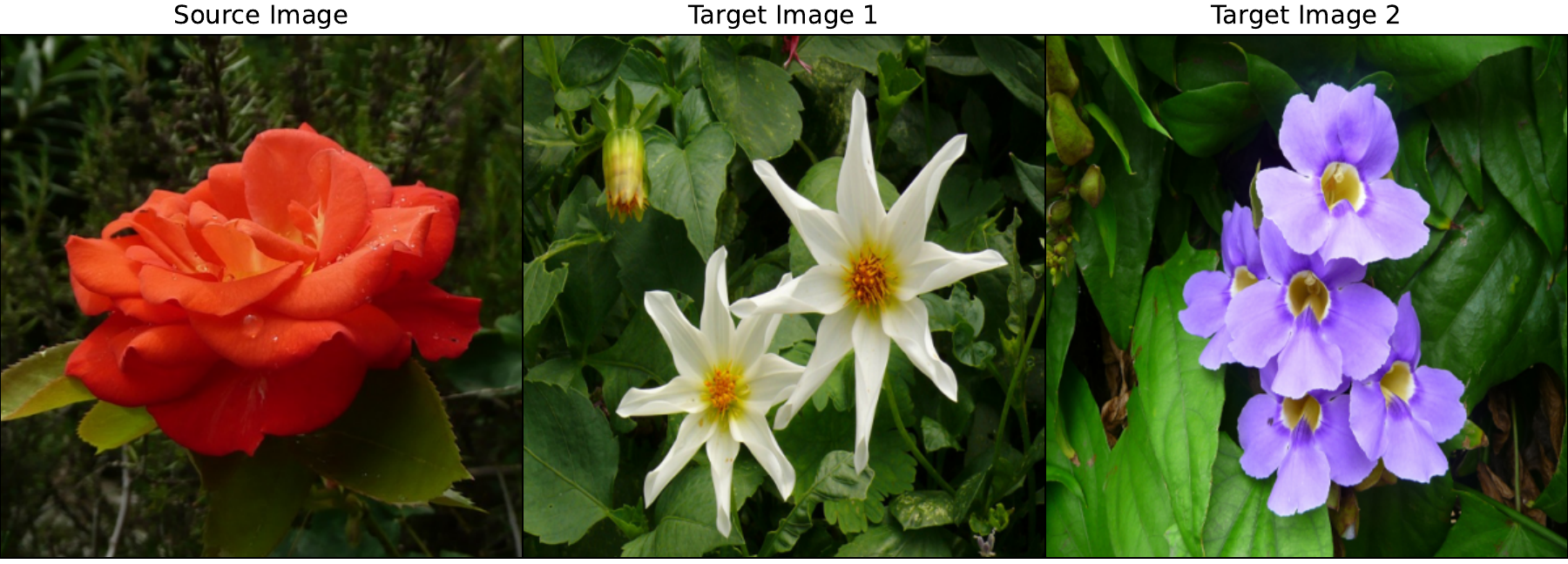} \\
\widgraph{1\textwidth}{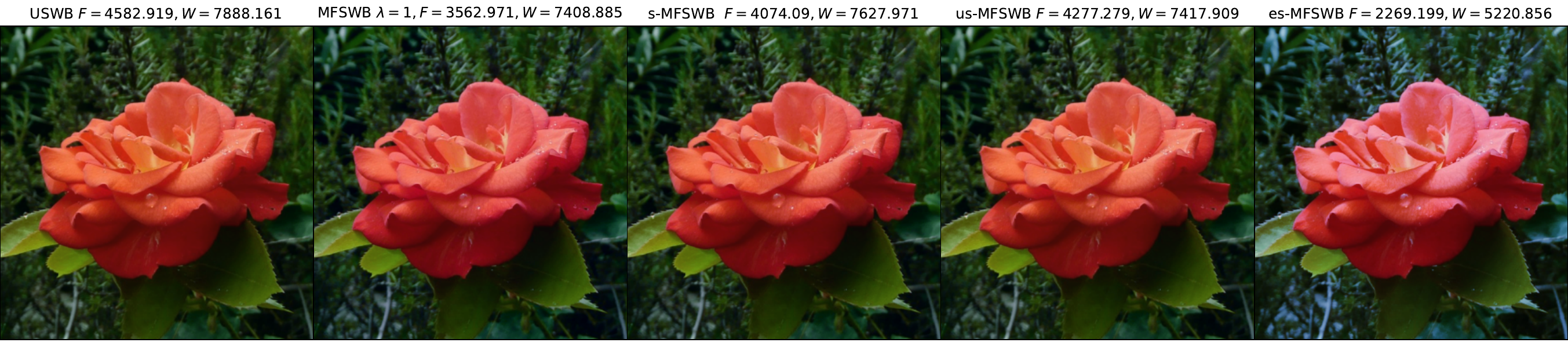}

  \end{tabular}
  \end{center}
  \vskip -0.1in
  \caption{
  \footnotesize{Harmonized images from USWB, MFSWB ($\lambda=1$) s-MFSWB, us-MFSWB, and es-MFSWB at iteration 5000.
}
} 
  \label{fig:flower_5000}
\end{figure}

\begin{figure}[!h]
\begin{center}
    
  \begin{tabular}{c}
  \widgraph{1\textwidth}{images/flower.pdf} \\
\widgraph{1\textwidth}{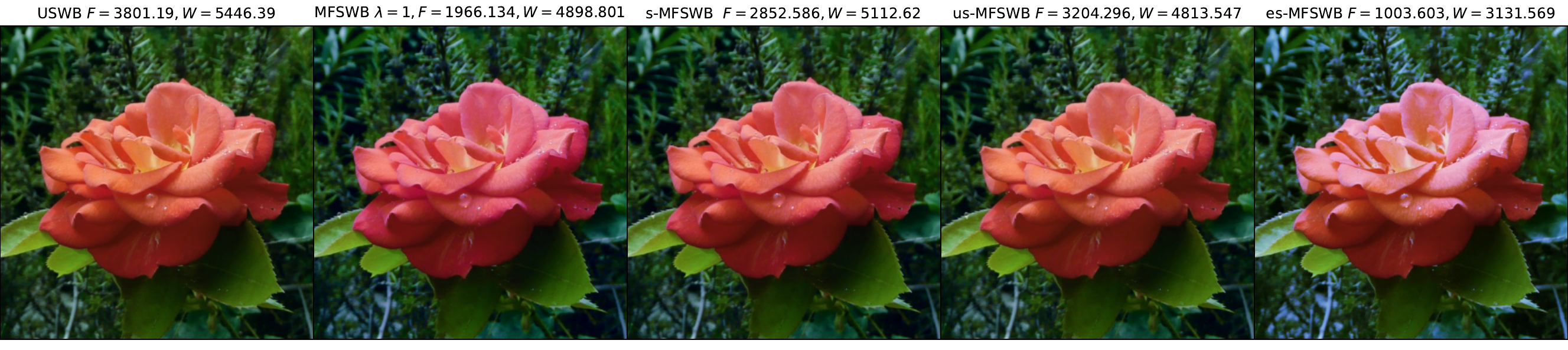}

  \end{tabular}
  \end{center}
  \vskip -0.1in
  \caption{
  \footnotesize{Harmonized images from USWB, MFSWB ($\lambda=1$), s-MFSWB, us-MFSWB, and es-MFSWB at iteration 10000.
}
} 
  \label{fig:flower_10000}
\end{figure}

\begin{figure}[!h]
\begin{center}
    
  \begin{tabular}{c}
  \widgraph{1\textwidth}{images/flower.pdf} \\
\widgraph{1\textwidth}{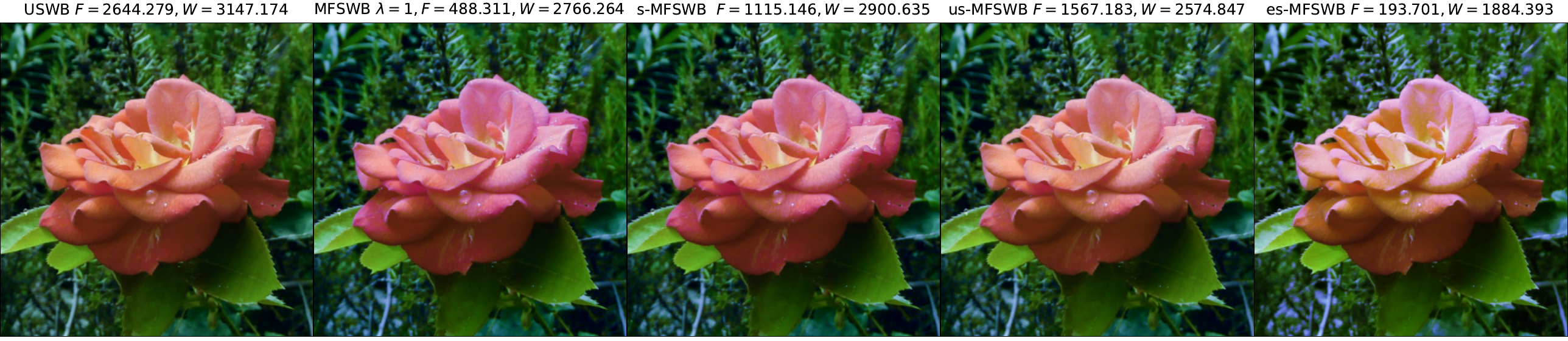}

  \end{tabular}
  \end{center}
  \vskip -0.1in
  \caption{
  \footnotesize{Harmonized images from USWB, MFSWB ($\lambda=1$), s-MFSWB, us-MFSWB, and es-MFSWB at iterations 20000.
}
} 
  \label{fig:flower_20000}
\end{figure}
\begin{figure}[!t]
\begin{center}
    
  \begin{tabular}{cc}
\widgraph{0.5\textwidth}{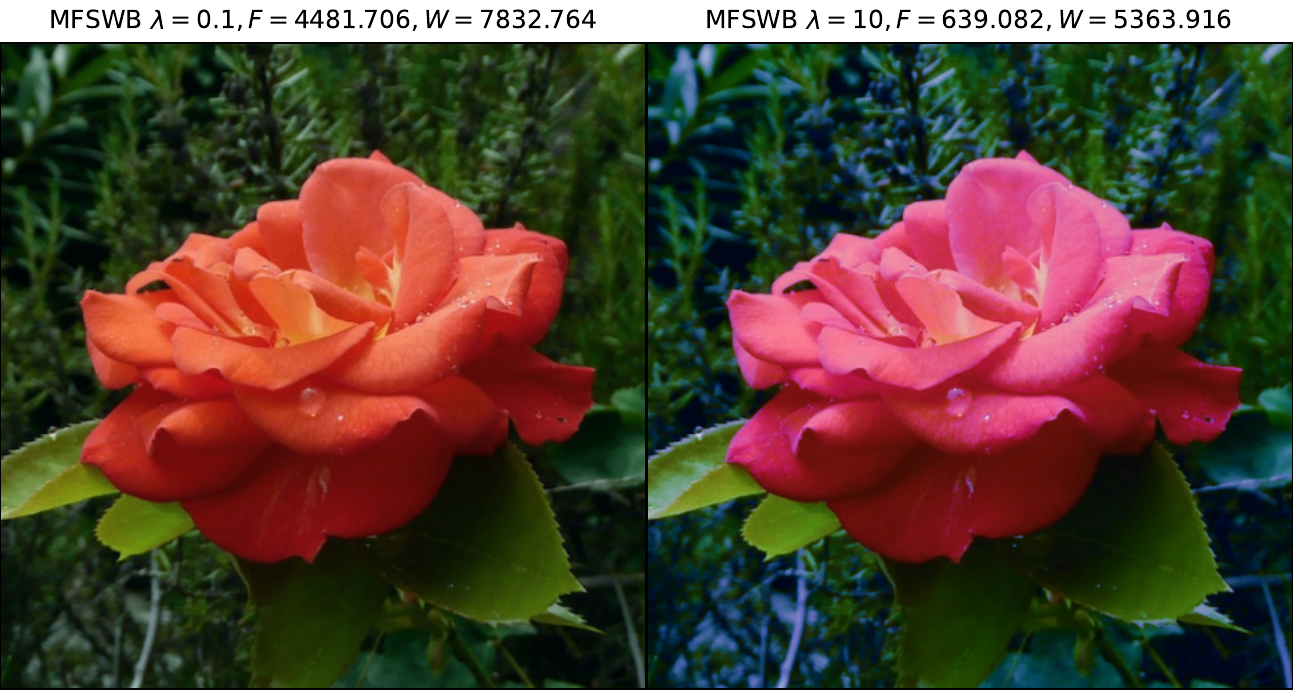} &\widgraph{0.5\textwidth} {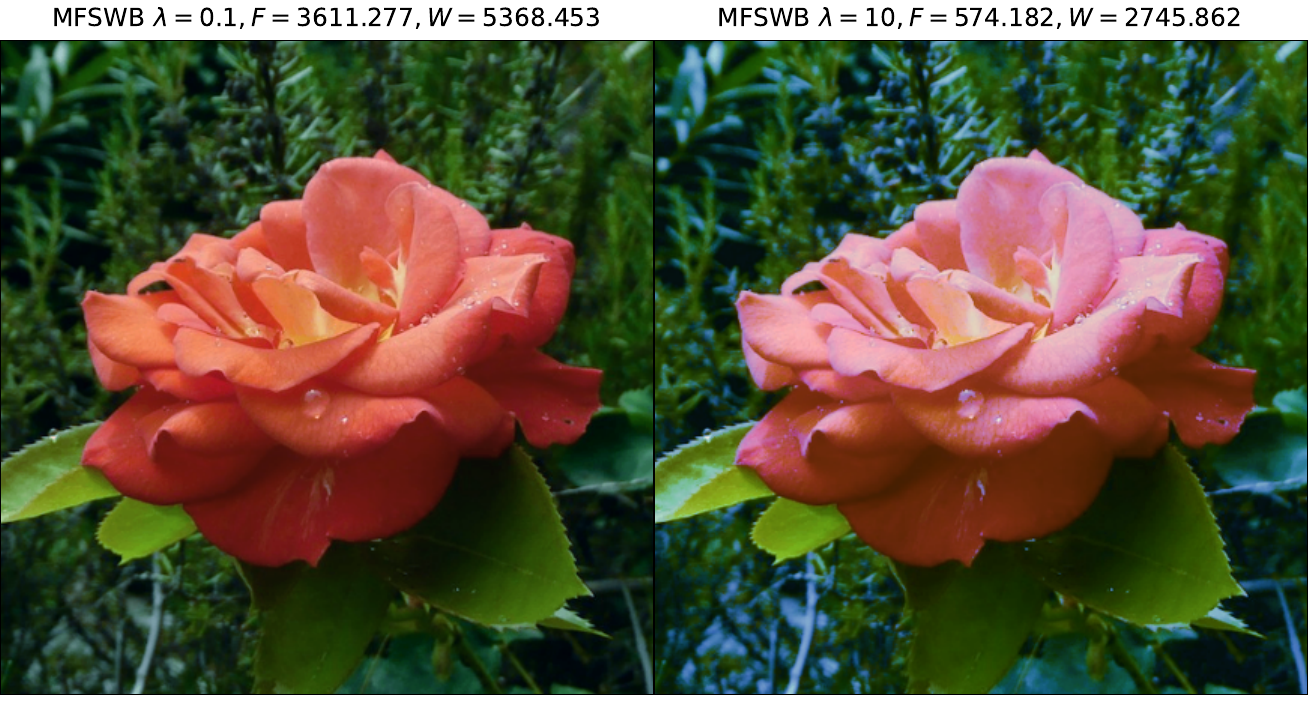} \\\widgraph{0.5\textwidth}{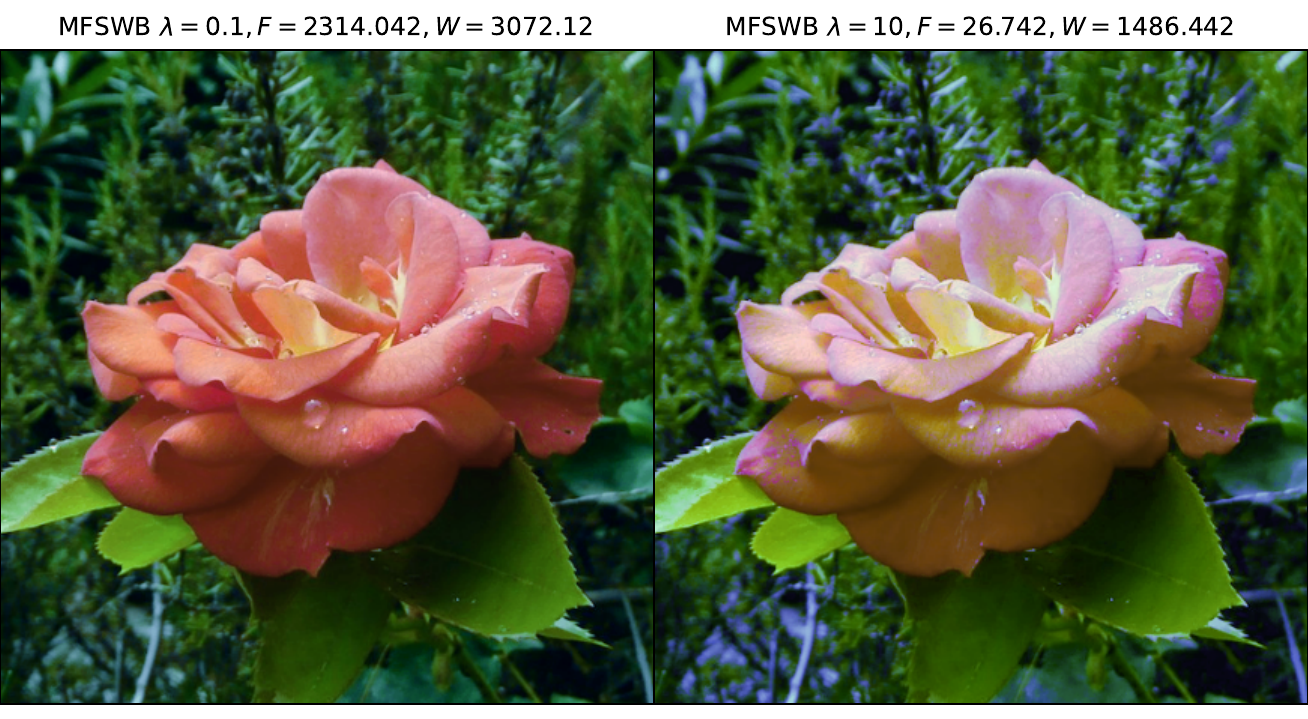}

  \end{tabular}
  \end{center}
  \vskip -0.1in
  \caption{
  \footnotesize{Color harmonized images from MFSWB with $\lambda=0.1$ and $\lambda=10$ at iterations 5000, 10000, and 20000.
}
} 
  \label{fig:flower_mfswb}
   \vskip -0.1in
\end{figure}

\textbf{Color Harmonization.} We first present the harmonized images of different methods including USWB, MFSWB $(\lambda=1)$, s-MFSWB, us-MFSWB, and es-MFSWB at iteration 5000 and 10000 for the demonstrated images in the main text in Figure~\ref{fig:tower_5000}-Figure~\ref{fig:tower_10000}. Moreover, we report the results of MFSWB $(\lambda=0.1,10)$ at iteration 5000, 10000, and 20000 in Figure~\ref{fig:tower_mfswb}. Similarly, we repeat the same experiments with flower images in Figure~\ref{fig:flower_5000}-~\ref{fig:flower_mfswb}. Overall, we see that es-MFSWB helps to reduce both F-metric and W-metric faster than USWB and other surrogates. For the formal MFSWB, the performance depends significantly on the choice of $\lambda$.

\textbf{Sliced Wasserstein autoencoder with  class-fairness representation.} We use the RMSprop optimizer with learning rate $0.01$, alpha=$0.99$, eps=$1e-8$. As mentioned in the main text, we report the used neural network architectures:

\begin{table}[h!]
\centering
\begin{tabular}{ll}
\toprule
\textbf{Layer} & \textbf{Description} \\
\midrule
\multicolumn{2}{l}{\textbf{MNISTAutoencoder}} \\
\midrule
\textbf{Encoder} & \\
\quad Conv2d & (1, 16, kernel size=3, stride=1, padding=1) \\
\quad LeakyReLU & (negative slope=0.2, inplace=True) \\
\quad Conv2d & (16, 16, kernel size=3, stride=1, padding=1) \\
\quad LeakyReLU & (negative slope=0.2, inplace=True) \\
\quad AvgPool2d & (kernel size=2) \\
\quad Conv2d & (16, 32, kernel size=3, stride=1, padding=1) \\
\quad LeakyReLU & (negative slope=0.2, inplace=True) \\
\quad Conv2d & (32, 32, kernel size=3, stride=1, padding=1) \\
\quad LeakyReLU & (negative slope=0.2, inplace=True) \\
\quad AvgPool2d & (kernel size=2) \\
\quad Conv2d & (32, 64, kernel size=3, stride=1, padding=1) \\
\quad LeakyReLU & (negative slope=0.2, inplace=True) \\
\quad Conv2d & (64, 64, kernel size=3, stride=1, padding=1) \\
\quad LeakyReLU & (negative slope=0.2, inplace=True) \\
\quad AvgPool2d & (kernel size=2, padding=1) \\
\quad Linear & (in\_features=1024, out\_features=128) \\
\quad ReLU & (inplace=True) \\
\quad Linear & (in\_features=128, out\_features=2) \\
\midrule
\textbf{Decoder} & \\
\quad Linear & (in\_features=2, out\_features=128) \\
\quad Linear & (in\_features=128, out\_features=1024) \\
\quad ReLU & (inplace=True) \\
\quad Upsample & (scale\_factor=2, mode=nearest) \\
\quad Conv2d & (64, 64, kernel size=3, stride=1, padding=1) \\
\quad LeakyReLU & (negative slope=0.2, inplace=True) \\
\quad Conv2d & (64, 64, kernel size=3, stride=1, padding=1) \\
\quad LeakyReLU & (negative slope=0.2, inplace=True) \\
\quad Upsample & (scale\_factor=2, mode=nearest) \\
\quad Conv2d & (64, 64, kernel size=3, stride=1) \\
\quad LeakyReLU & (negative slope=0.2, inplace=True) \\
\quad Conv2d & (64, 64, kernel size=3, stride=1, padding=1) \\
\quad LeakyReLU & (negative slope=0.2, inplace=True) \\
\quad Upsample & (scale\_factor=2, mode=nearest) \\
\quad Conv2d & (64, 32, kernel size=3, stride=1, padding=1) \\
\quad LeakyReLU & (negative slope=0.2, inplace=True) \\
\quad Conv2d & (32, 32, kernel size=3, stride=1, padding=1) \\
\quad LeakyReLU & (negative slope=0.2, inplace=True) \\
\quad Conv2d & (32, 1, kernel size=3, stride=1, padding=1) \\
\bottomrule
\end{tabular}
\caption{MNIST Autoencoder Architecture}
\end{table}

We report some randomly selected reconstructed images, some randomly generated images, and the test latent codes of trained autoencoders in Figure~\ref{fig:images_2}. Overall, we observe that the qualitative results are consistent with the quantitive results in Table~\ref{tab:mnist_0.5_250}. From the latent spaces, we see that the proposed surrogates helps to make the codes of classes have approximately the same structure which do appear in the conventional SWAE's latent codes.

\begin{figure}
    \centering
    \footnotesize
    \setlength{\tabcolsep}{0.2cm}
    \begin{tabular}{c *{7}{c}}
        \toprule
        Method & Reconstructed Images & Generated Images & Latent Space \\
        \midrule
        \raisebox{-0.5\height}{SWAE} & 
        \raisebox{-0.5\height}{\includegraphics[height=3.5cm,width=0.25\textwidth]{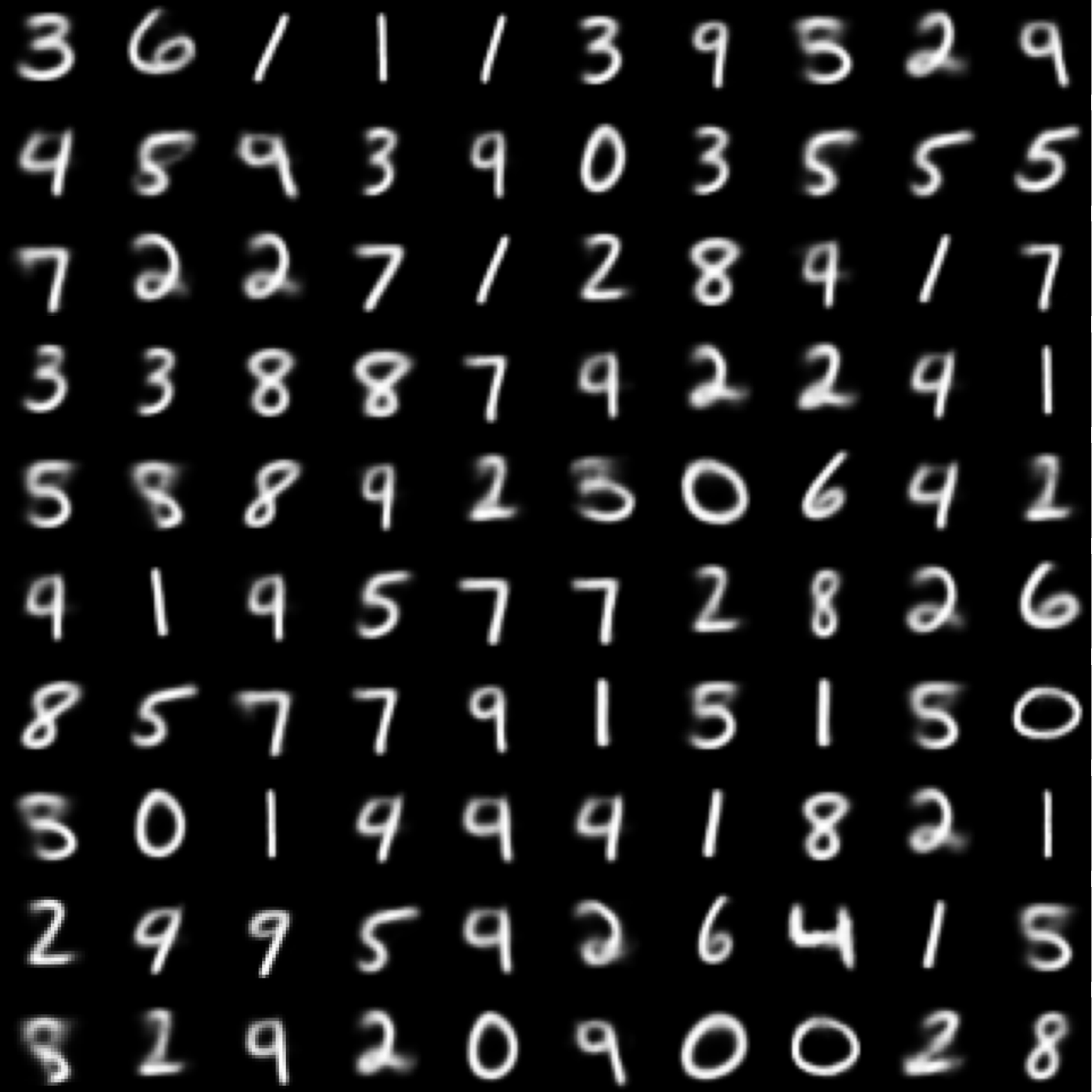}} &
        \raisebox{-0.5\height}{\includegraphics[height=3.5cm,width=0.25\textwidth]{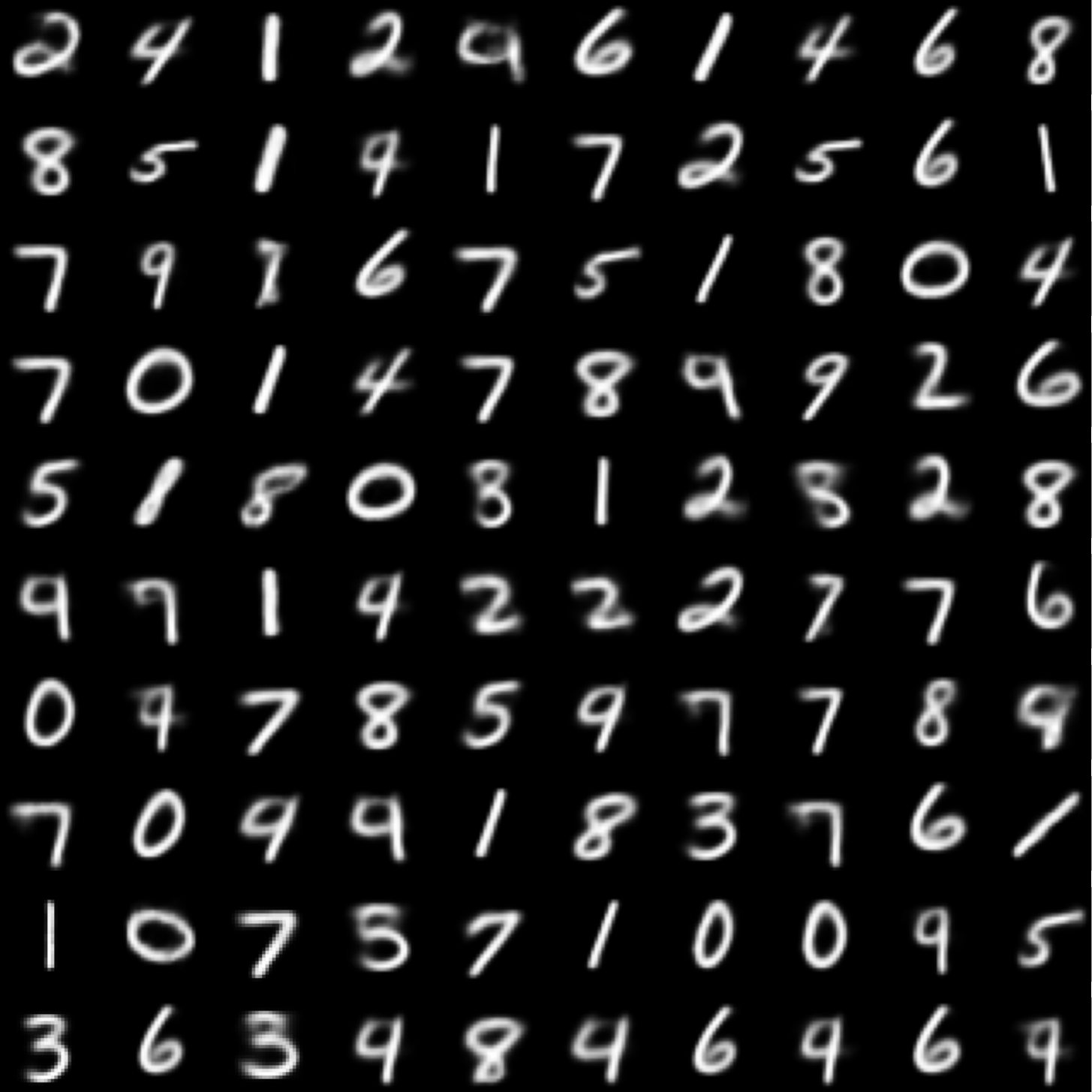}} &
        \raisebox{-0.5\height}{\includegraphics[height=5.0cm,width=0.35\textwidth]{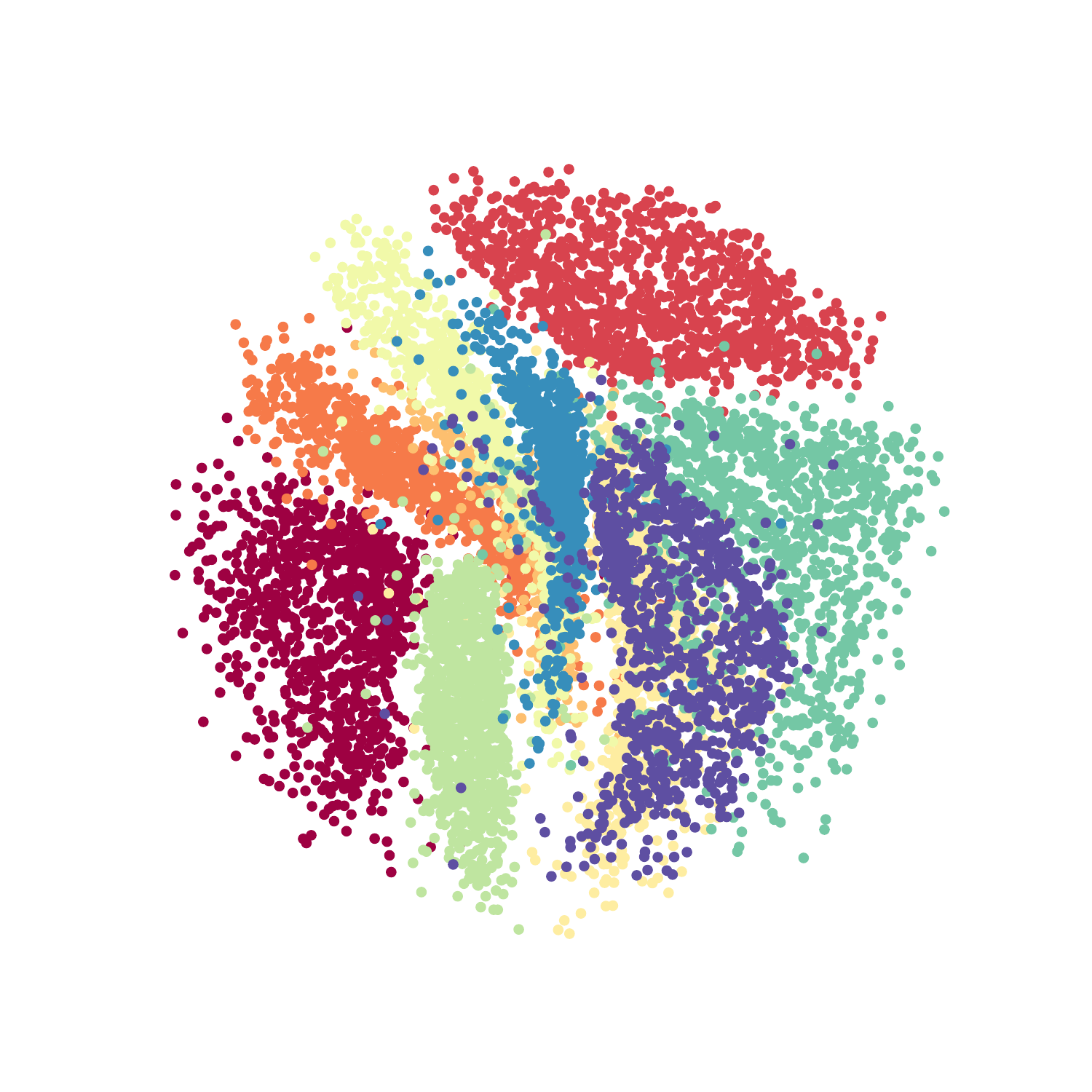}} \\
        \raisebox{-0.5\height}{USWB} & 
        \raisebox{-0.5\height}{\includegraphics[height=3.5cm,width=0.25\textwidth]{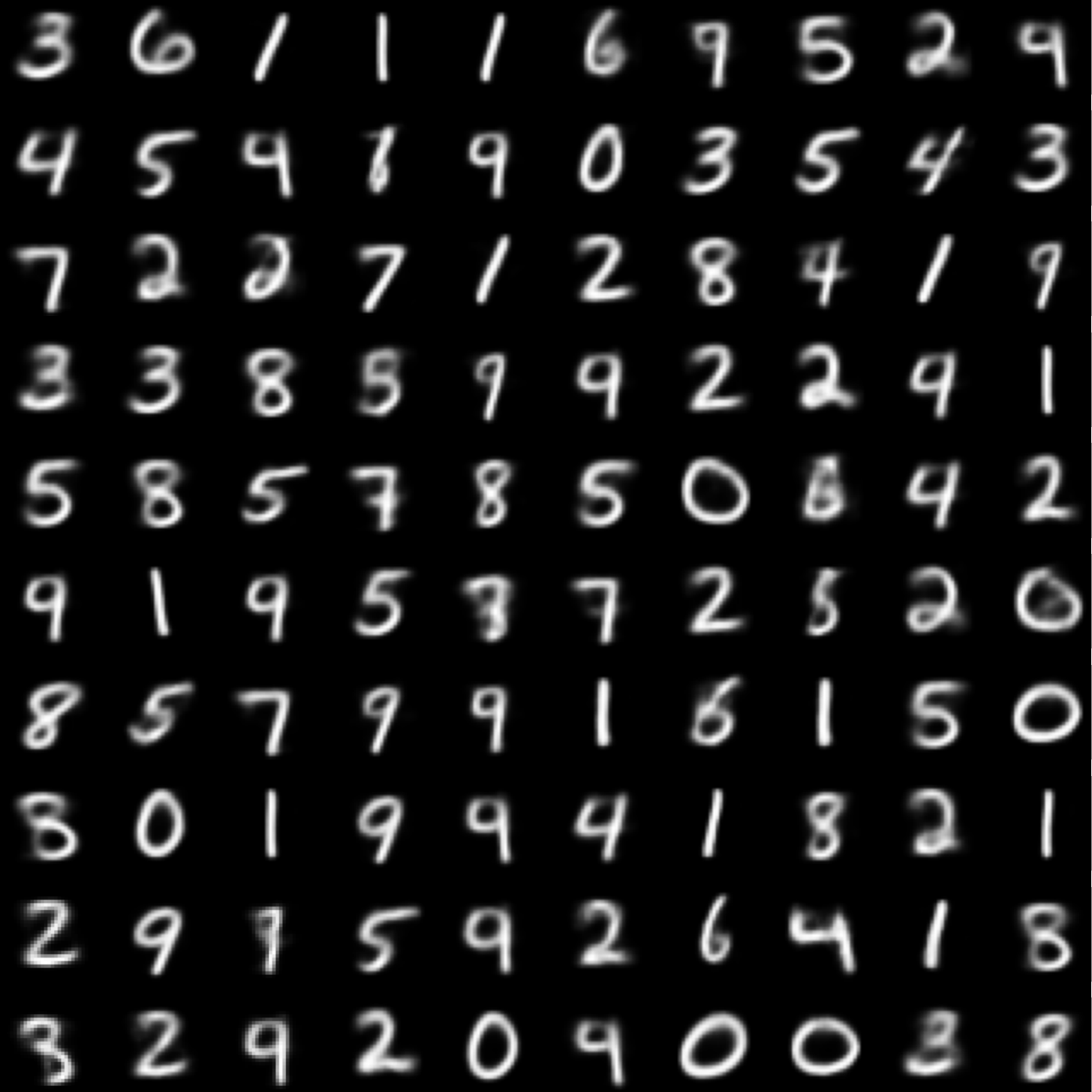}} &
        \raisebox{-0.5\height}{\includegraphics[height=3.5cm,width=0.25\textwidth]{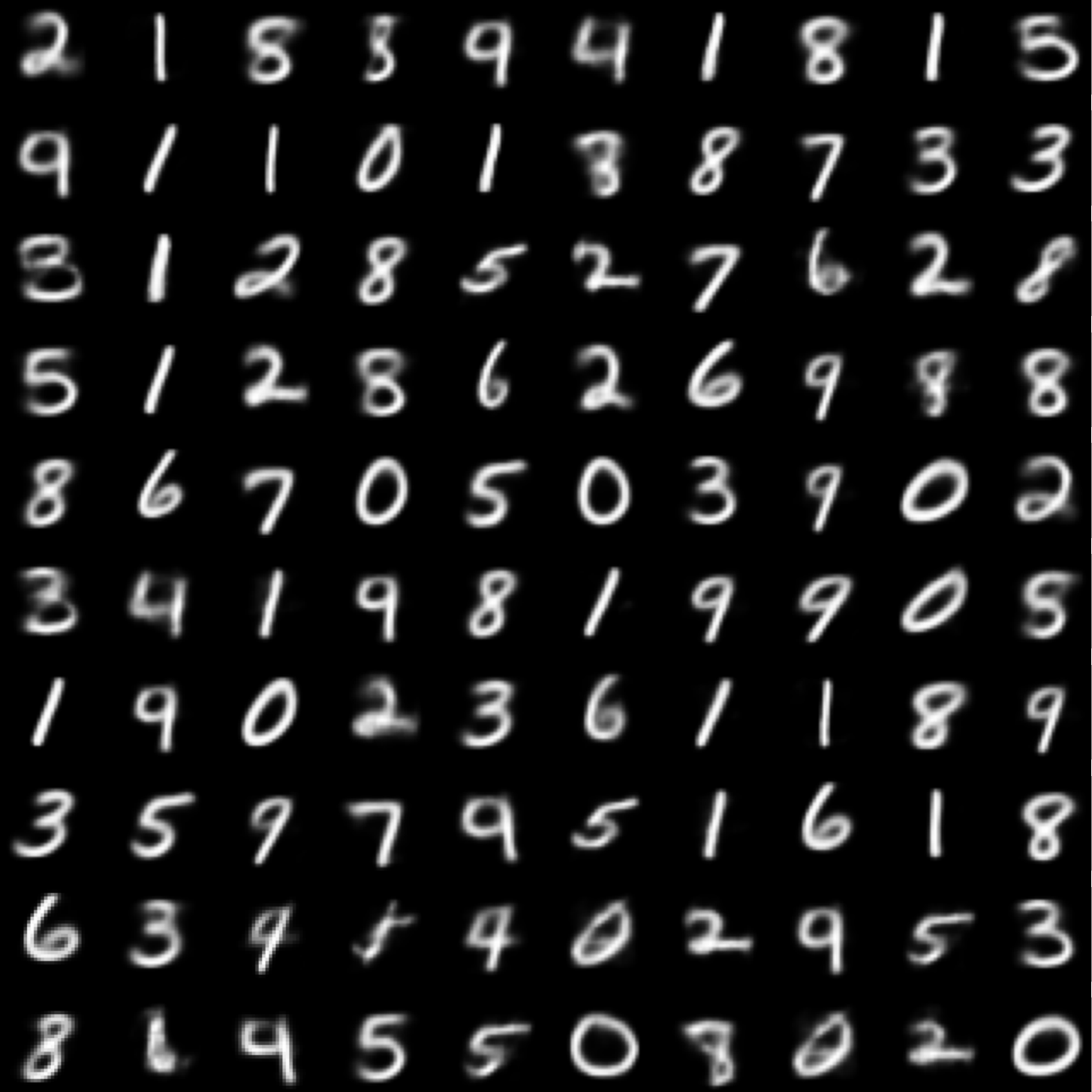}} &
        \raisebox{-0.5\height}{\includegraphics[height=5.0cm,width=0.35\textwidth]{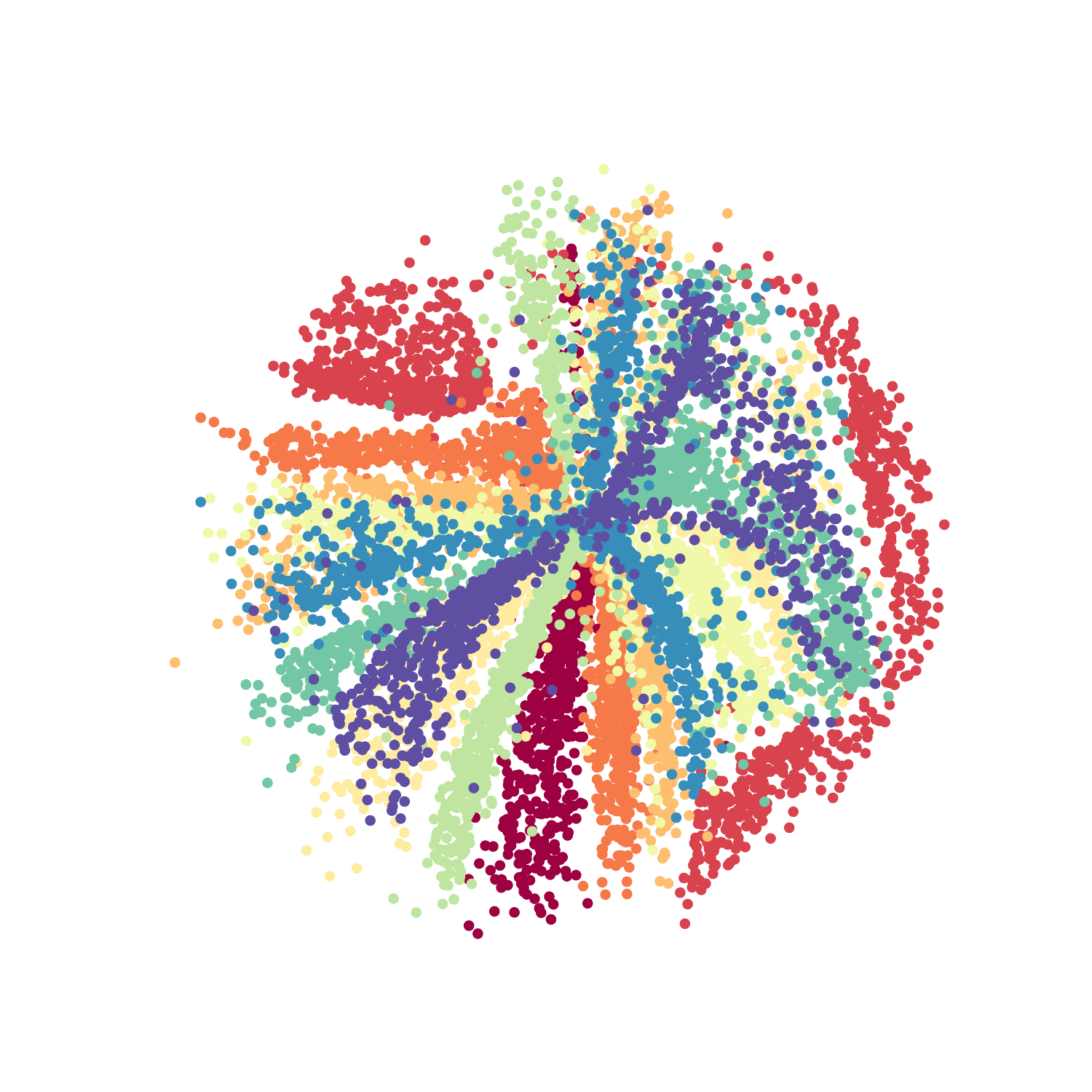}} \\
        \raisebox{-0.5\height}{MFSWB $\lambda=0.1$} & 
        \raisebox{-0.5\height}{\includegraphics[height=3.5cm,width=0.25\textwidth]{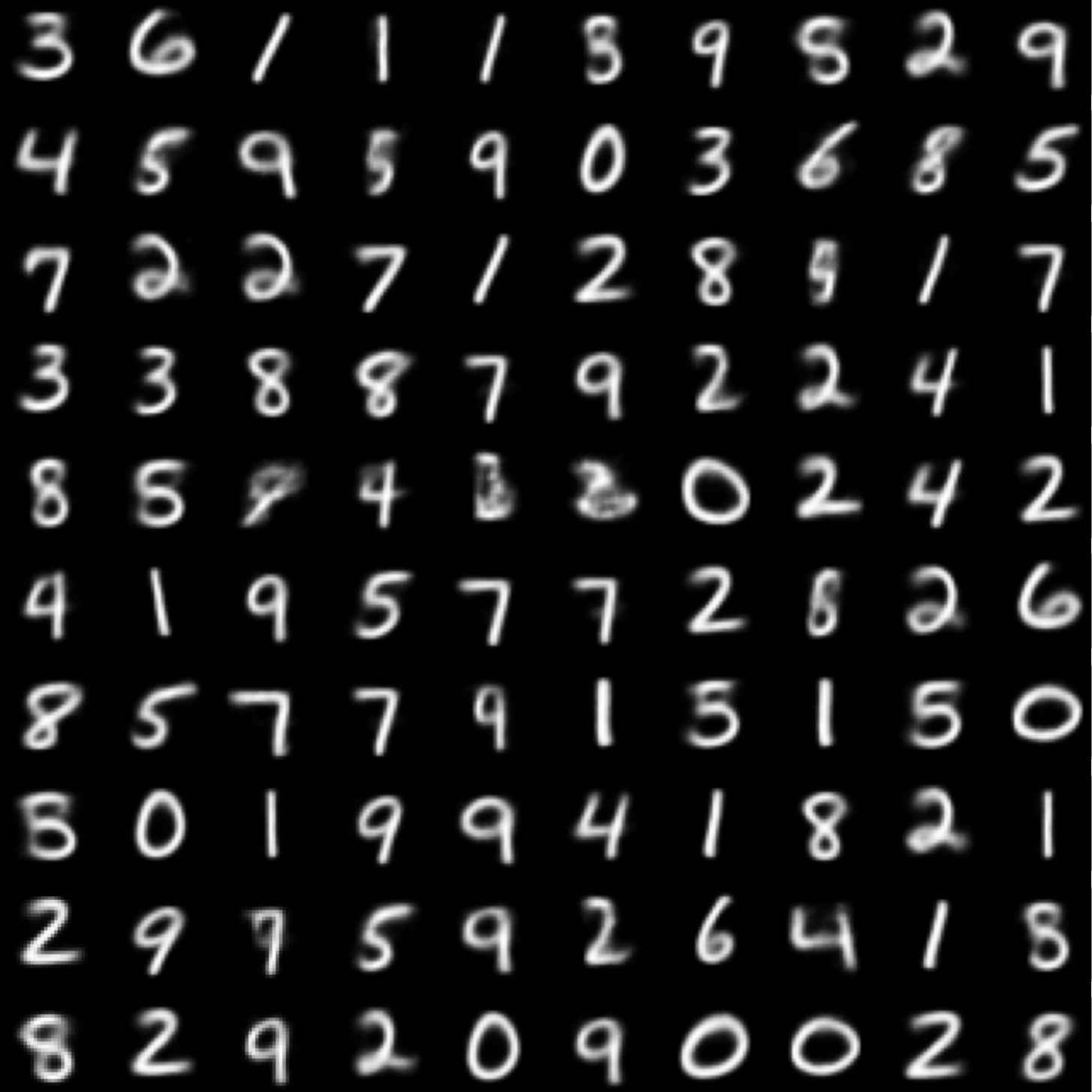}} &
        \raisebox{-0.5\height}{\includegraphics[height=3.5cm,width=0.25\textwidth]{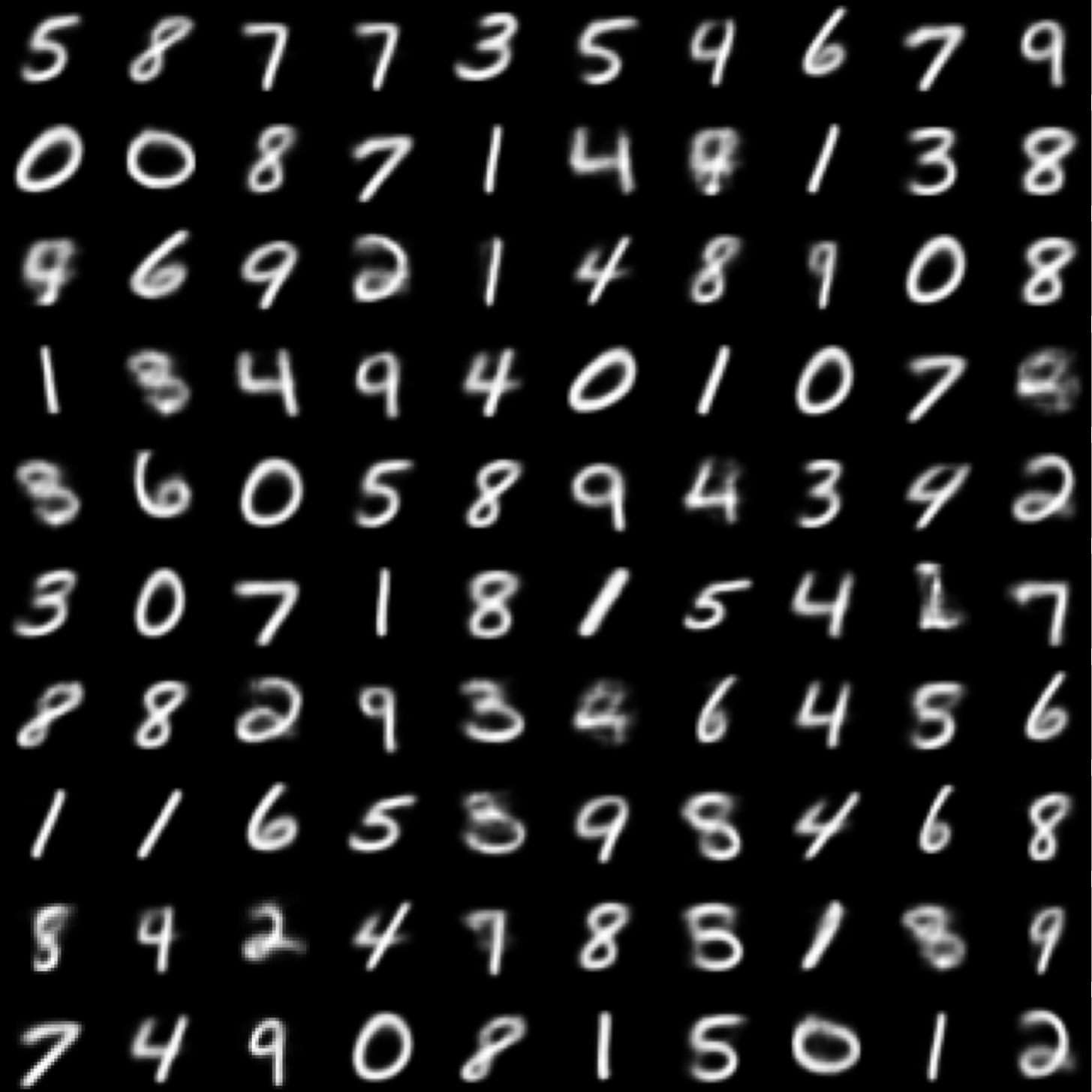}} &
        \raisebox{-0.5\height}{\includegraphics[height=5.0cm,width=0.35\textwidth]{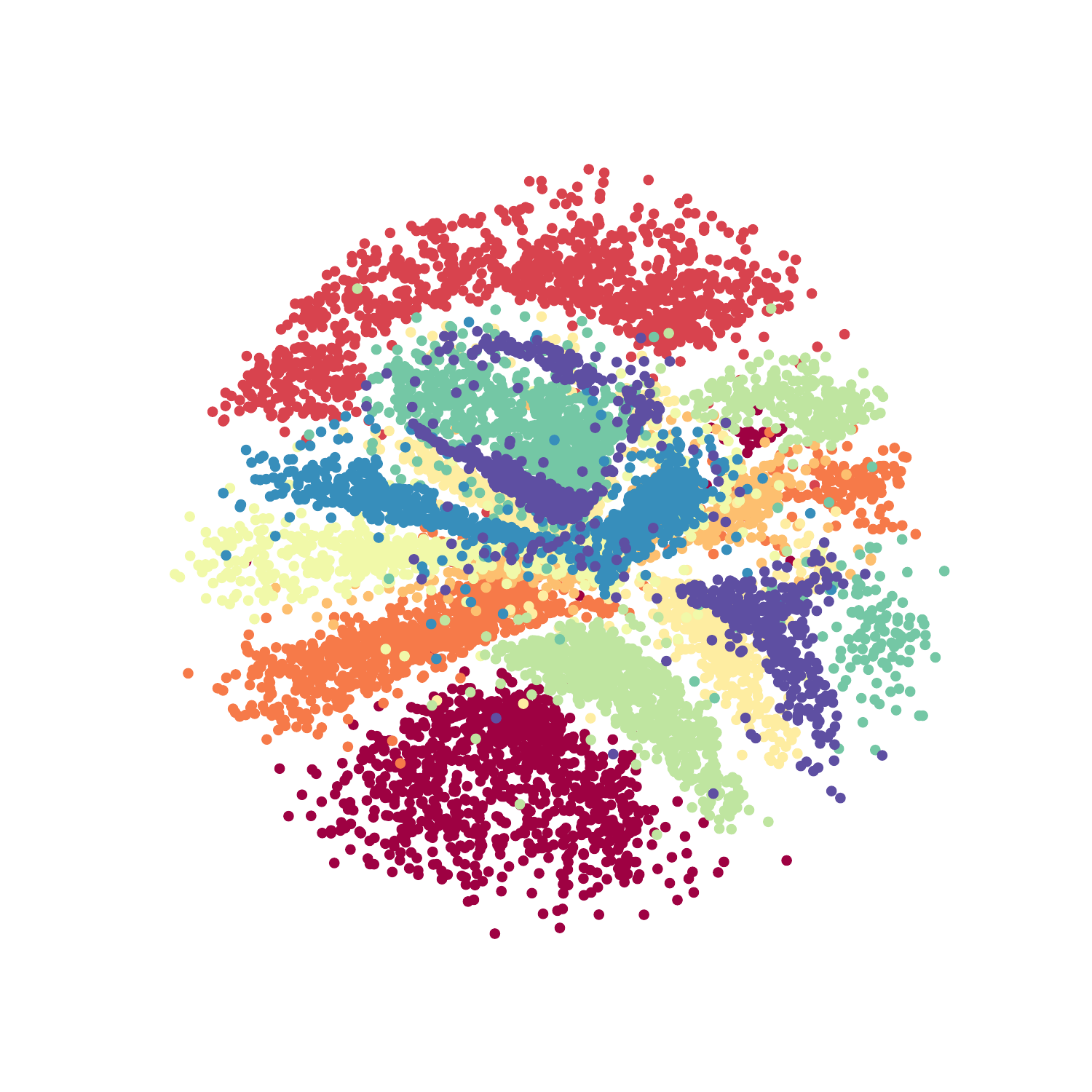}} \\
        \raisebox{-0.5\height}{MFSWB $\lambda=1.0$} & 
        \raisebox{-0.5\height}{\includegraphics[height=3.5cm,width=0.25\textwidth]{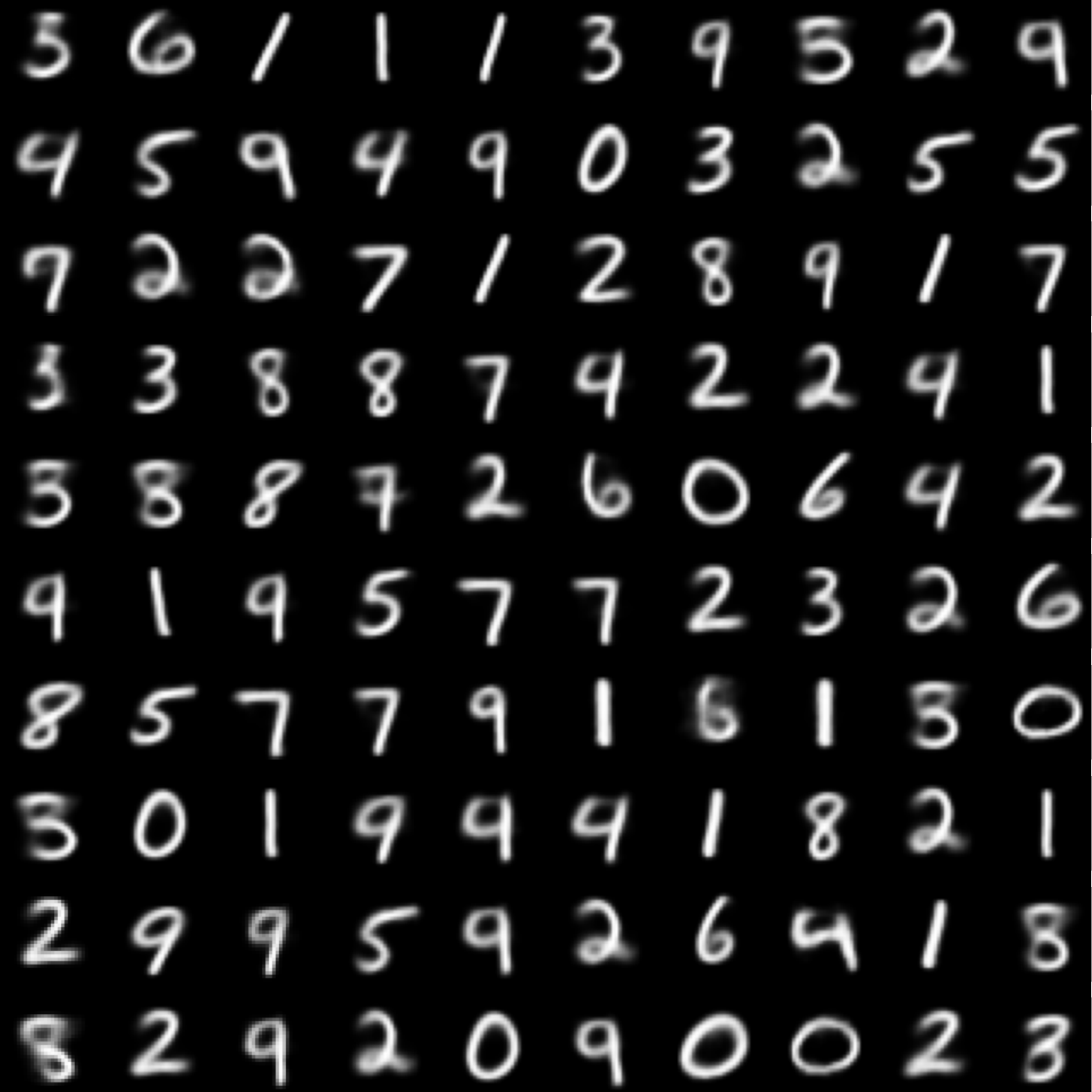}} &
        \raisebox{-0.5\height}{\includegraphics[height=3.5cm,width=0.25\textwidth]{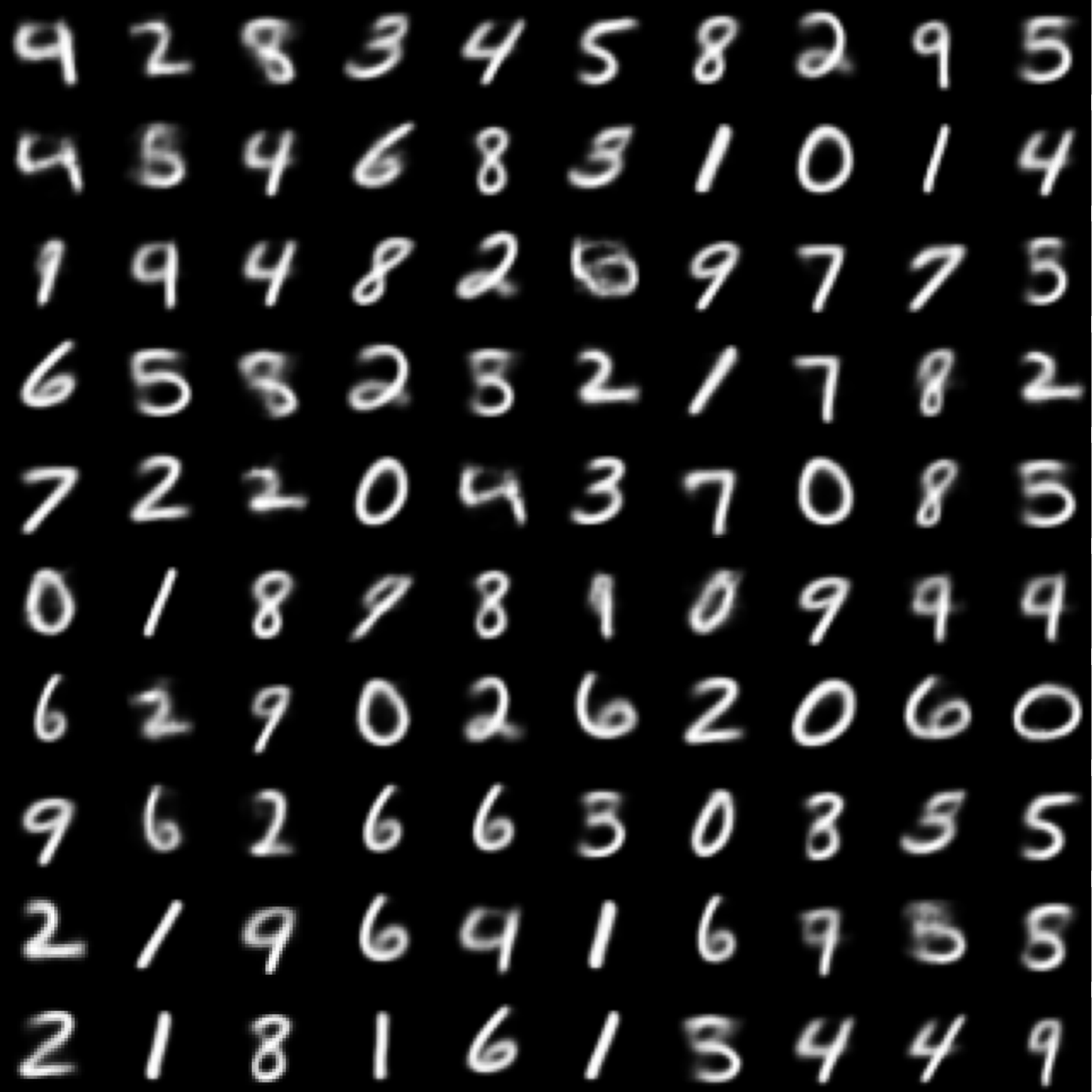}} &
        \raisebox{-0.5\height}{\includegraphics[height=5.0cm,width=0.35\textwidth]{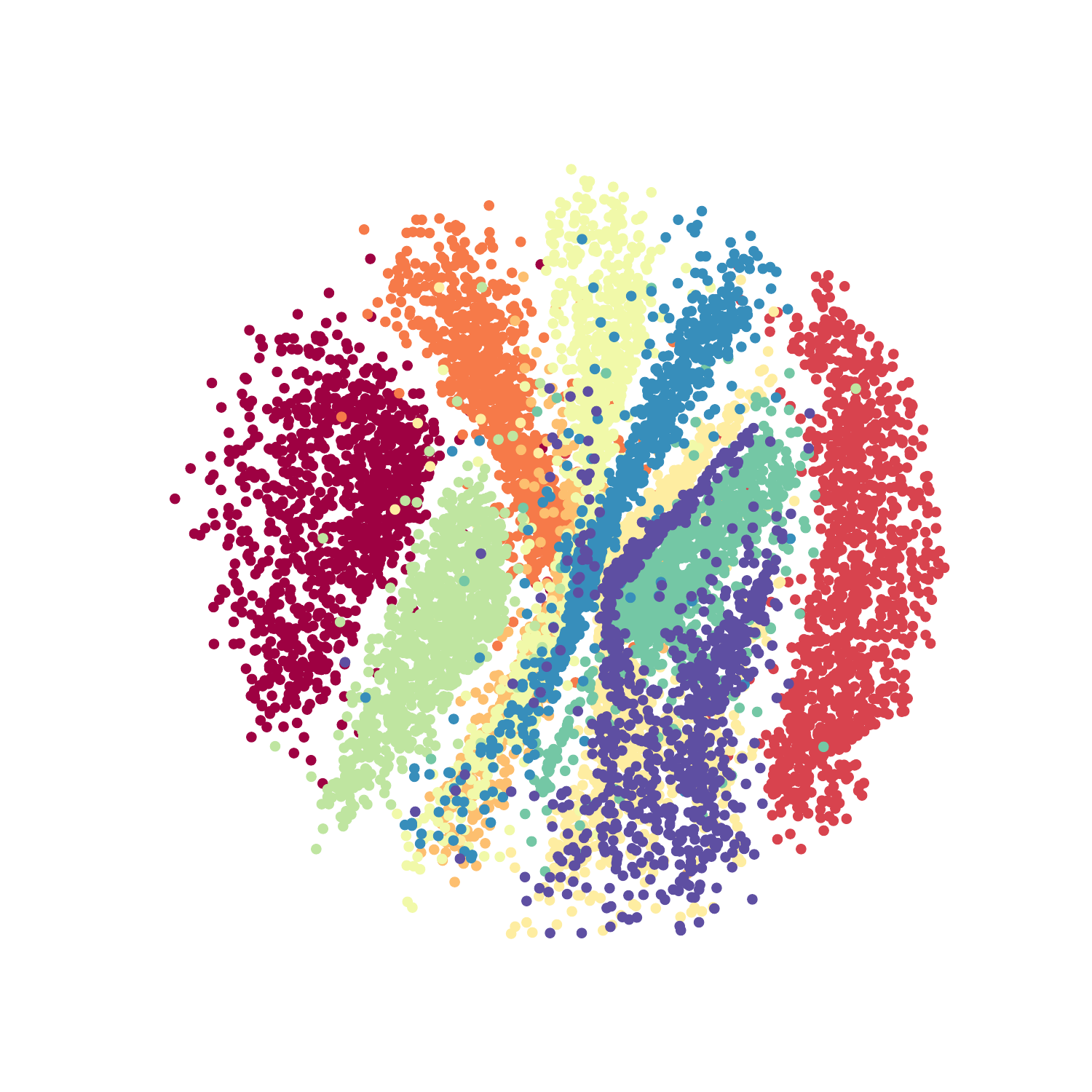}} \\
        \bottomrule
    \end{tabular}
    \label{fig:images}
\end{figure}

\begin{figure}
    \centering
    \footnotesize
    \setlength{\tabcolsep}{0.2cm}
    \begin{tabular}{c *{7}{c}}
        \toprule
        Method & Reconstructed Images & Generated Images & Latent Space \\
        \midrule
        \raisebox{-0.5\height}{MFSWB $\lambda=10.0$} & 
        \raisebox{-0.5\height}{\includegraphics[height=3.5cm,width=0.25\textwidth]{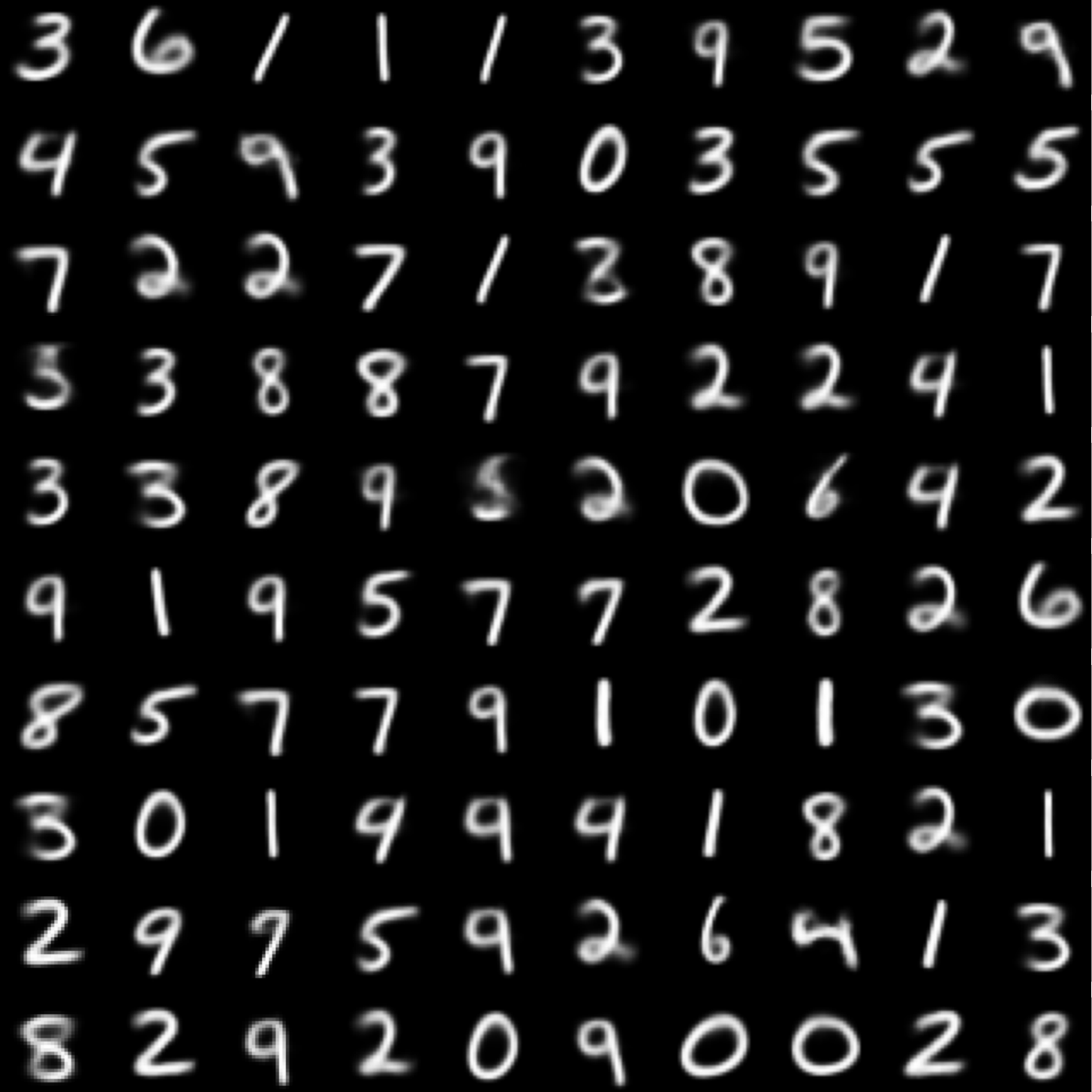}} &
        \raisebox{-0.5\height}{\includegraphics[height=3.5cm,width=0.25\textwidth]{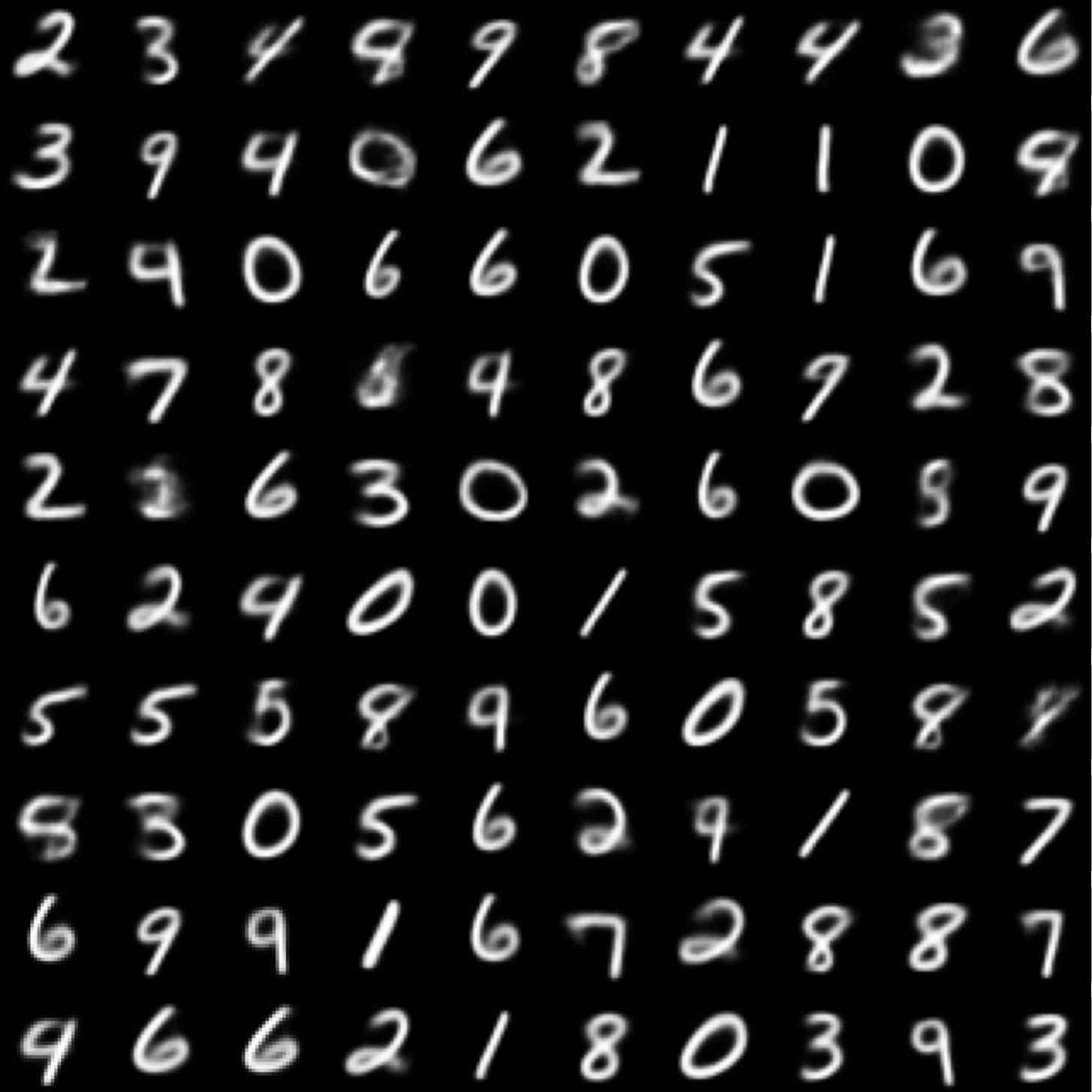}} &
        \raisebox{-0.5\height}{\includegraphics[height=5.0cm,width=0.35\textwidth]{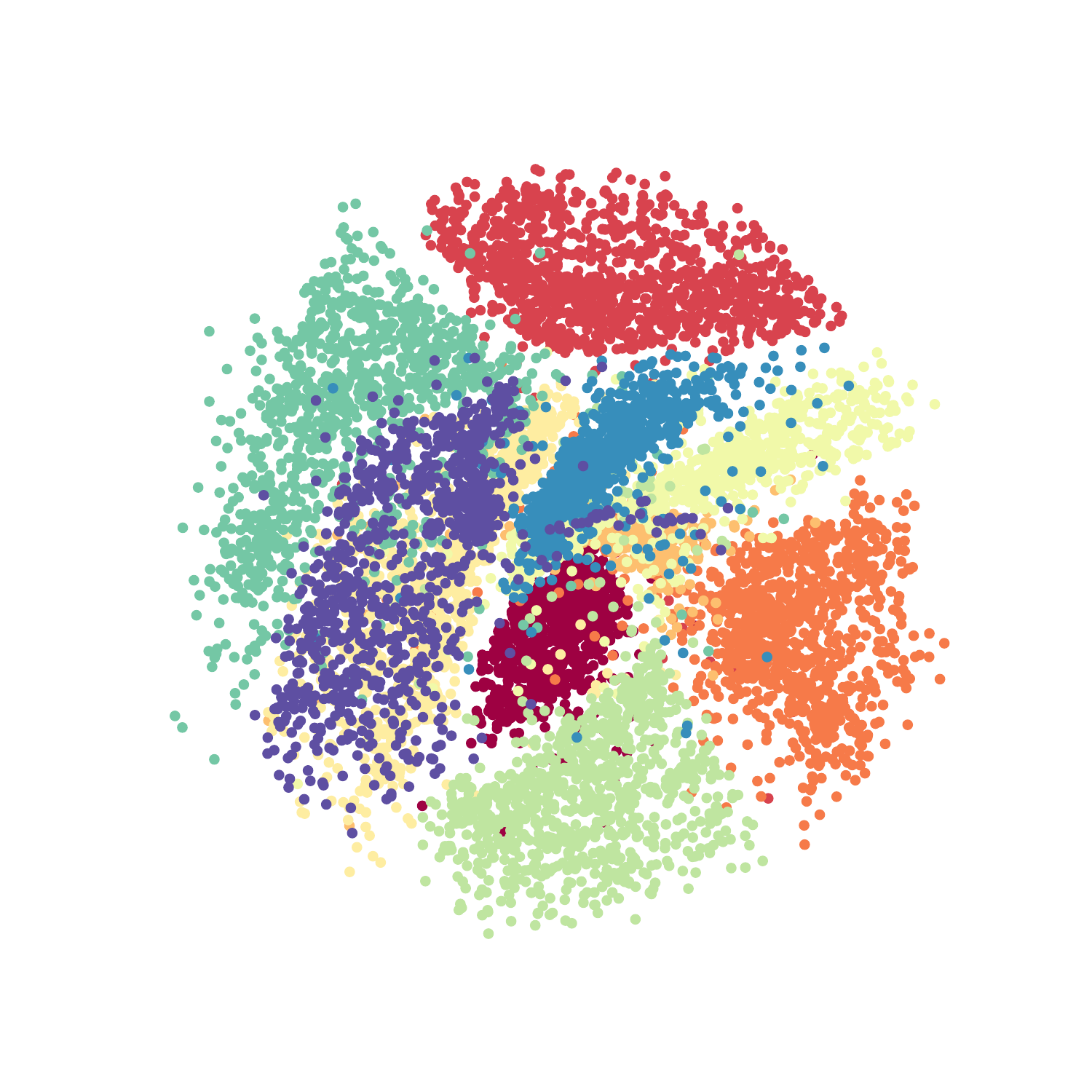}} \\
        \raisebox{-0.5\height}{s-MFSWB} & 
        \raisebox{-0.5\height}{\includegraphics[height=3.5cm,width=0.25\textwidth]{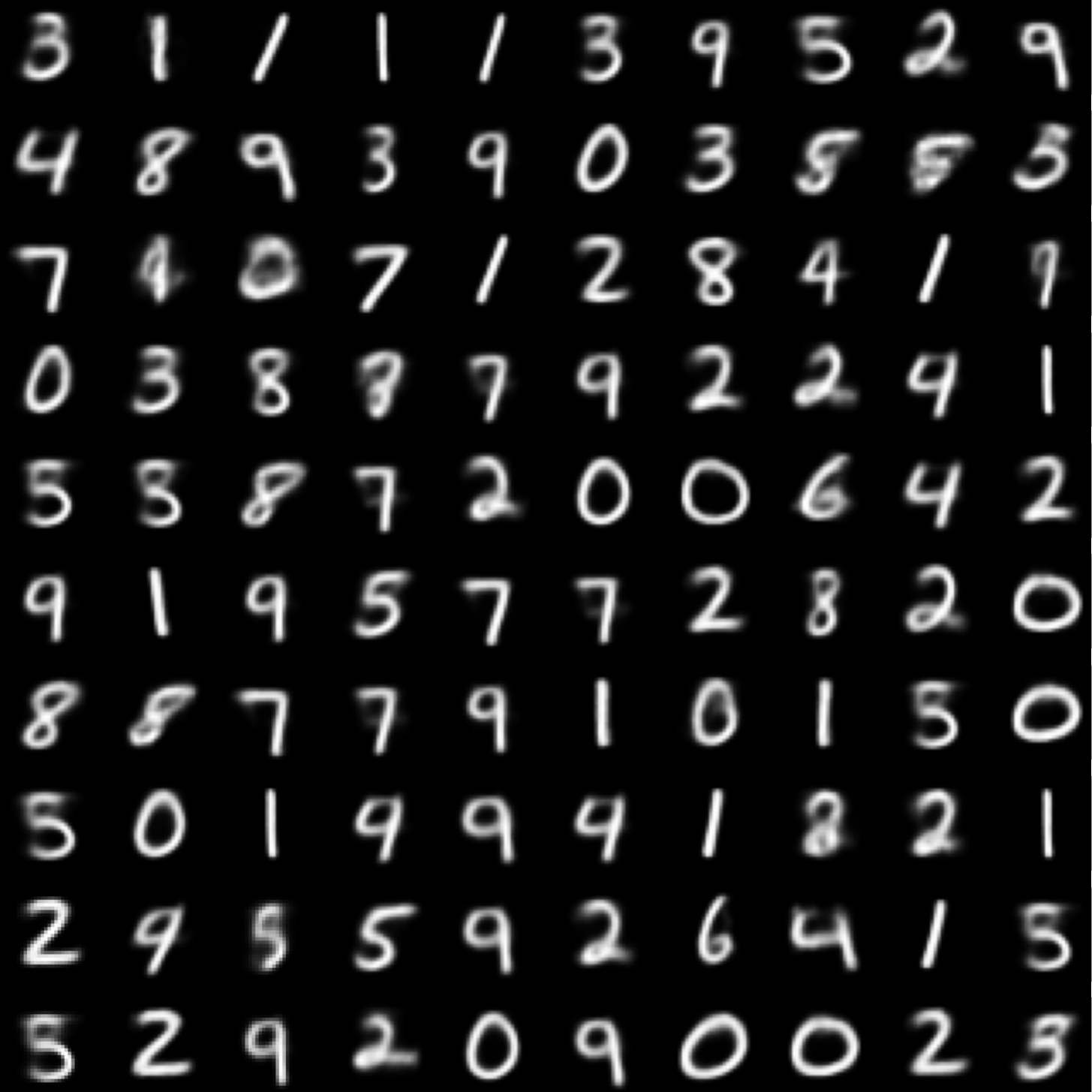}} &
        \raisebox{-0.5\height}{\includegraphics[height=3.5cm,width=0.25\textwidth]{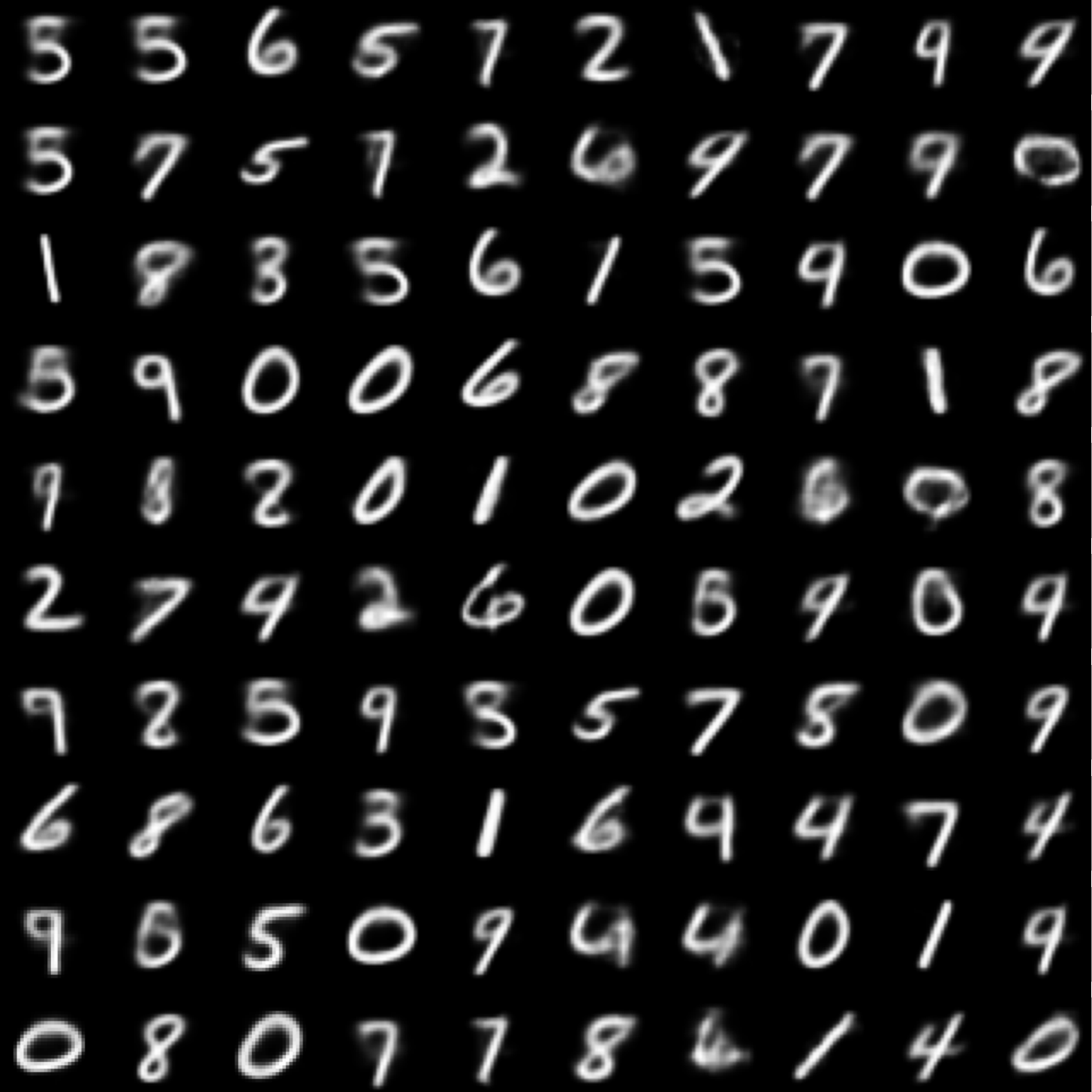}} &
        \raisebox{-0.5\height}{\includegraphics[height=5.0cm,width=0.35\textwidth]{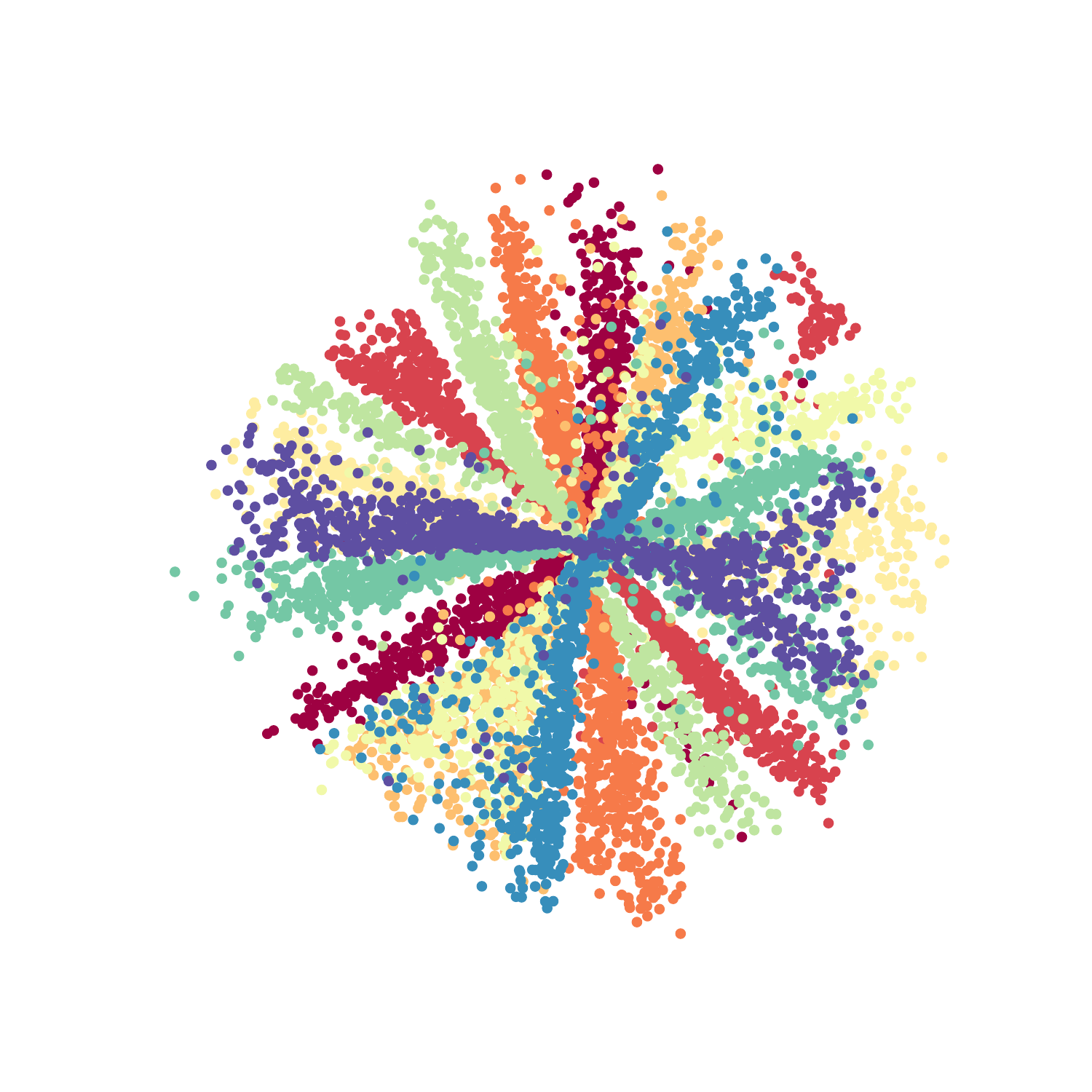}} \\
        \raisebox{-0.5\height}{us-MFSWB} & 
        \raisebox{-0.5\height}{\includegraphics[height=3.5cm,width=0.25\textwidth]{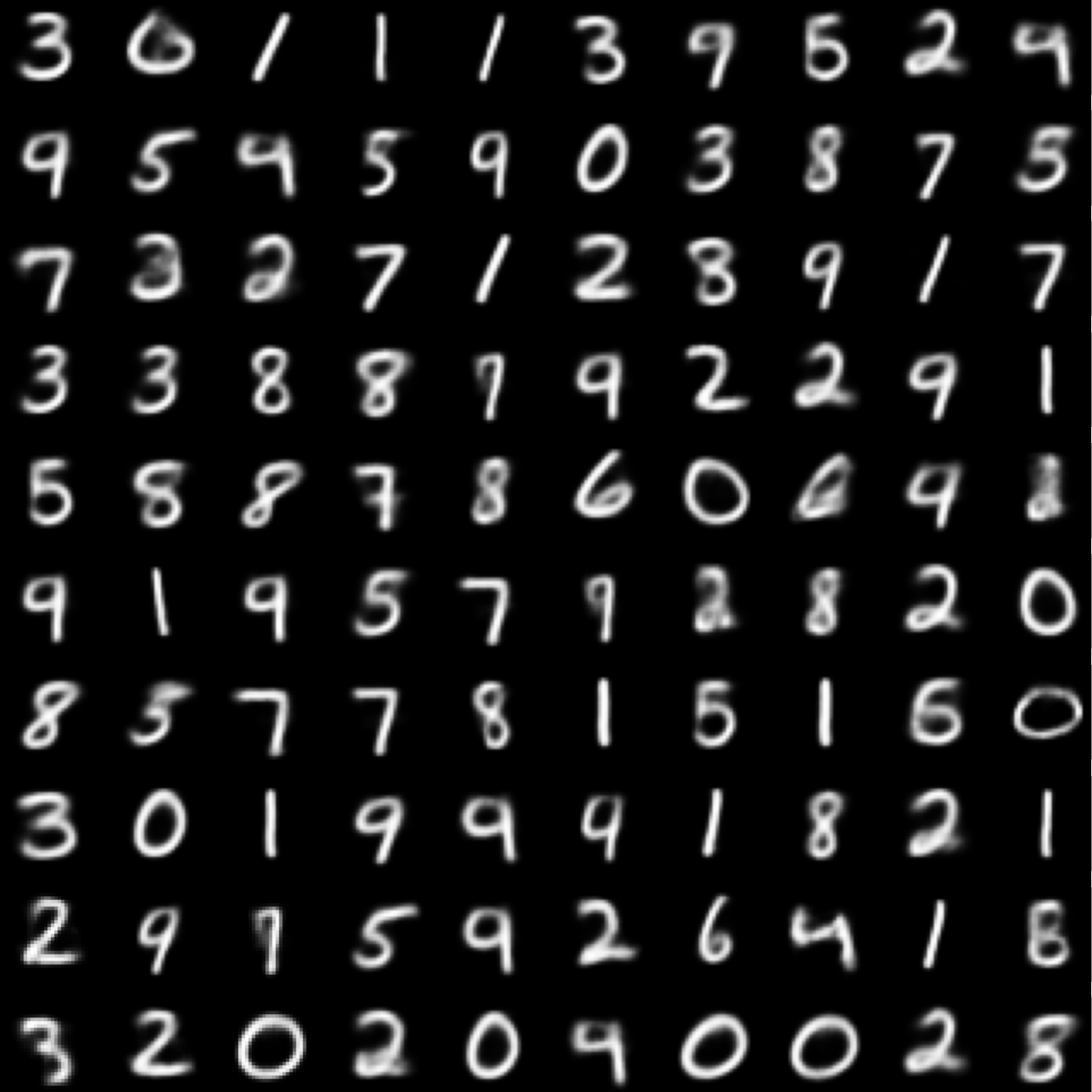}} &
        \raisebox{-0.5\height}{\includegraphics[height=3.5cm,width=0.25\textwidth]{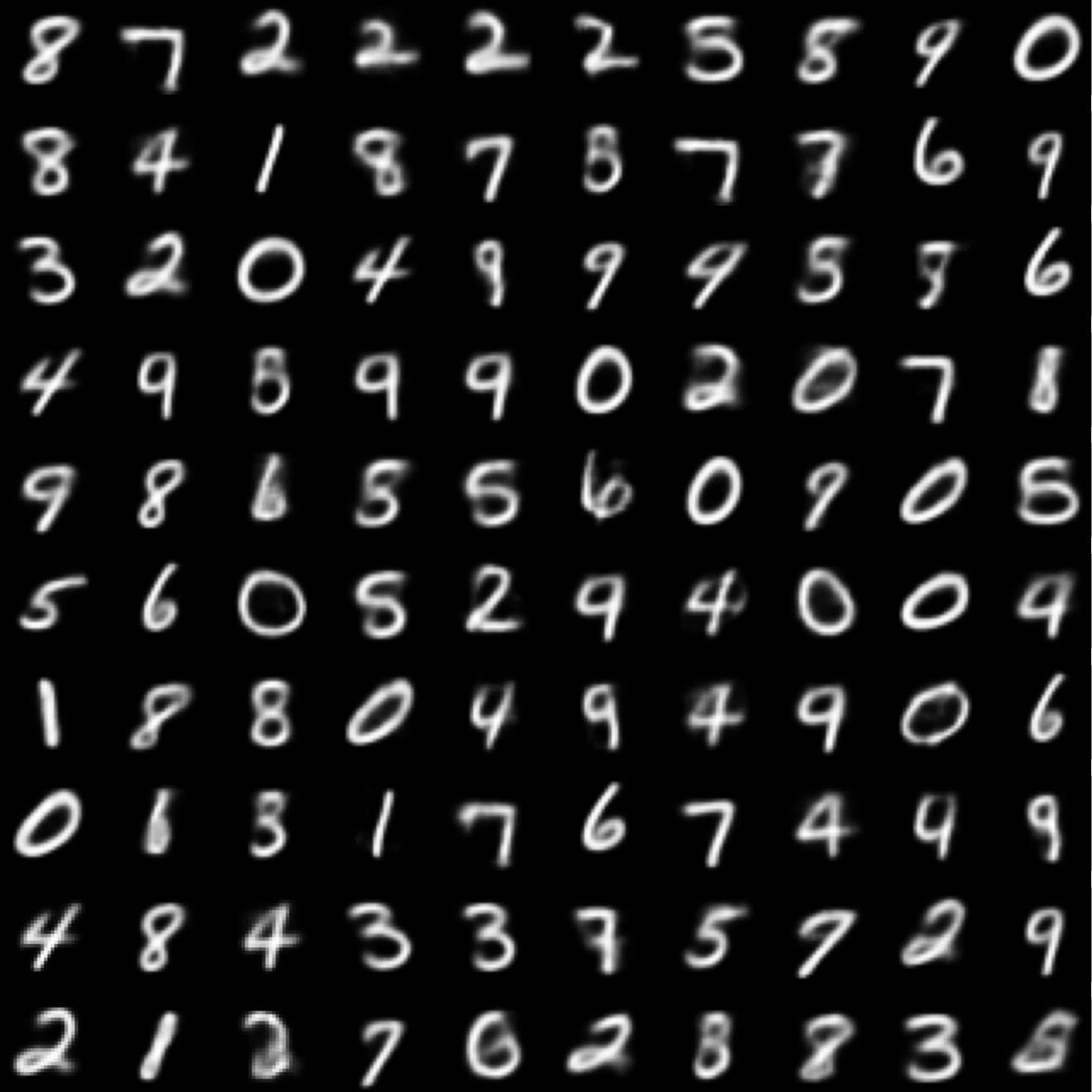}} &
        \raisebox{-0.5\height}{\includegraphics[height=5.0cm,width=0.35\textwidth]{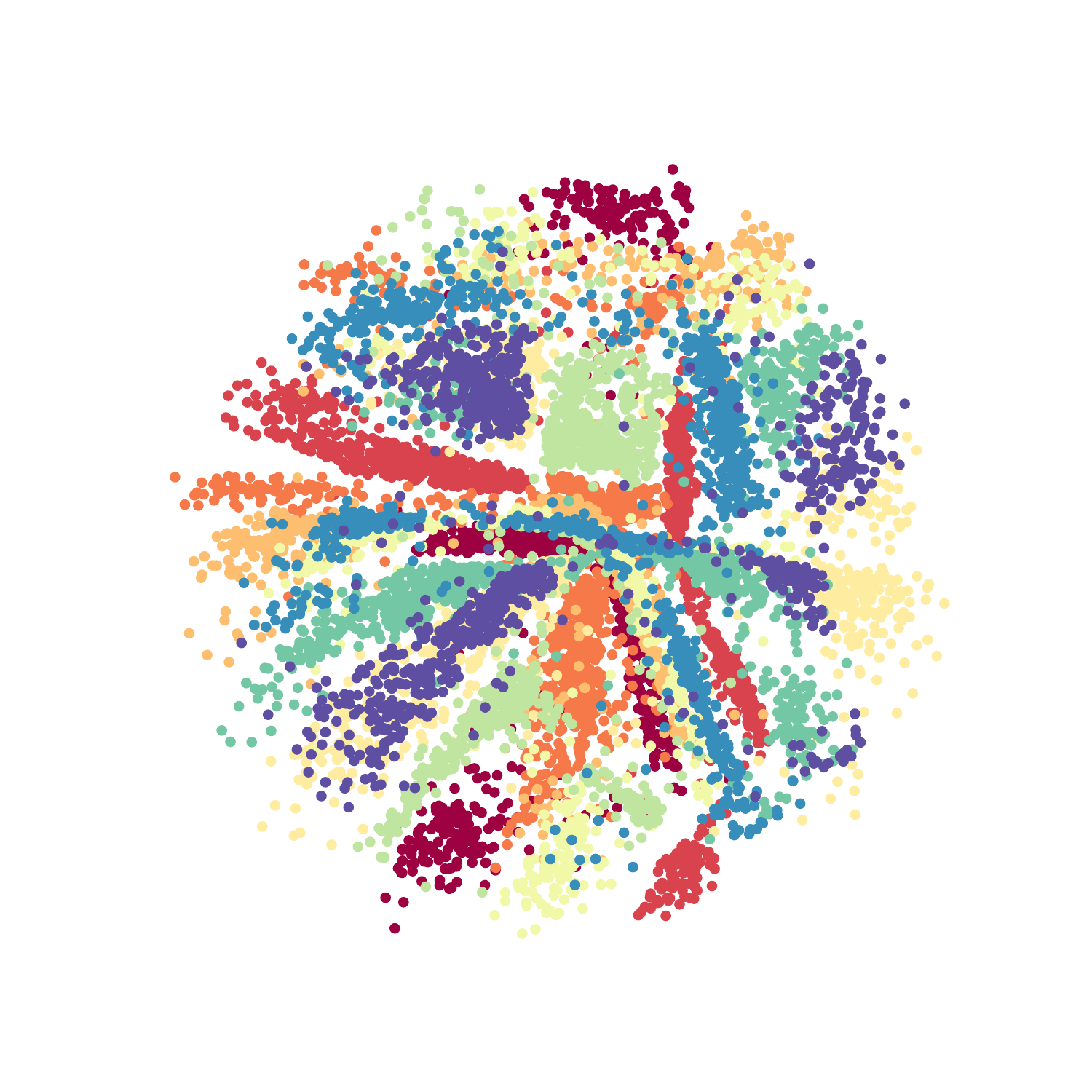}} \\
        \raisebox{-0.5\height}{es-MFSWB} & 
        \raisebox{-0.5\height}{\includegraphics[height=3.5cm,width=0.25\textwidth]{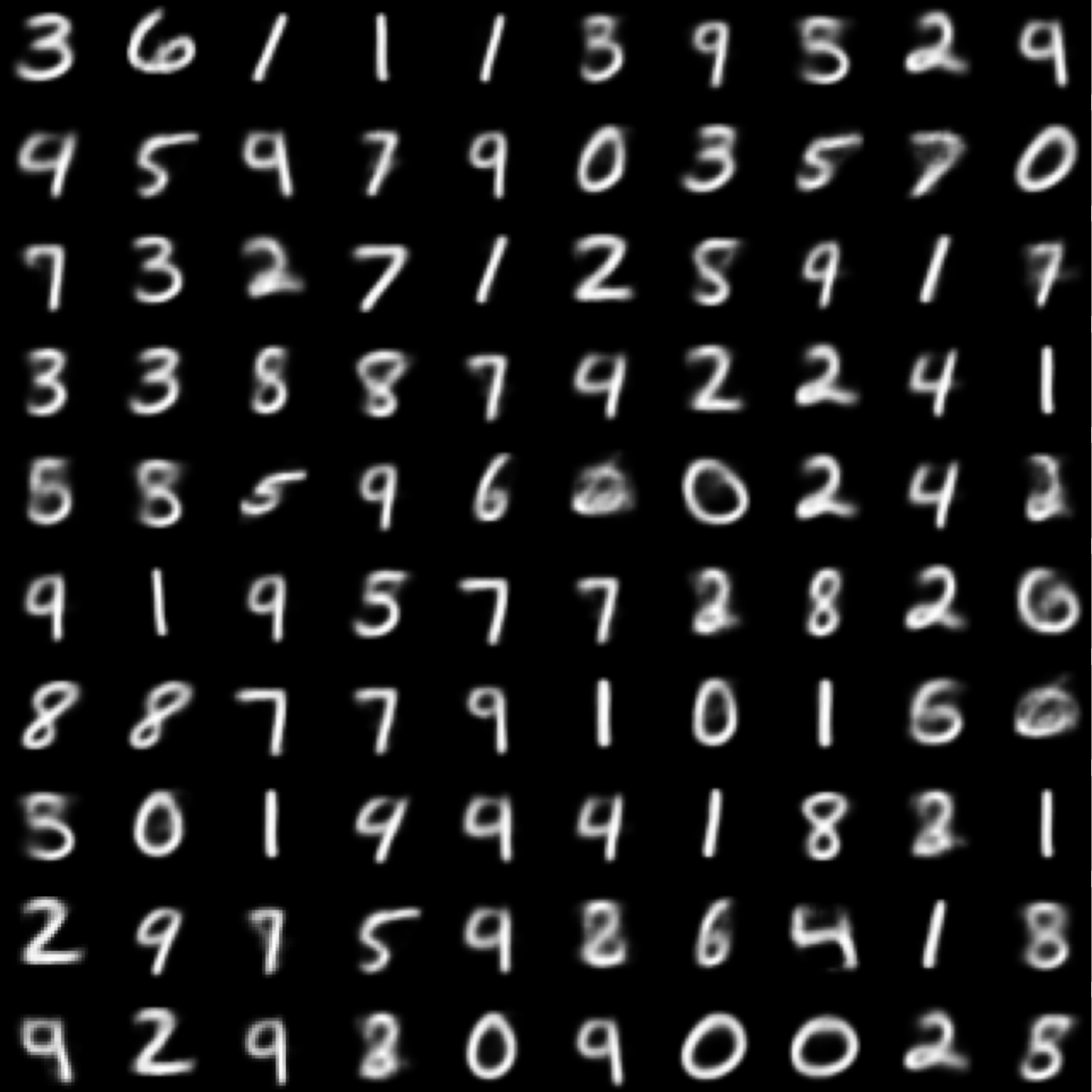}} &
        \raisebox{-0.5\height}{\includegraphics[height=3.5cm,width=0.25\textwidth]{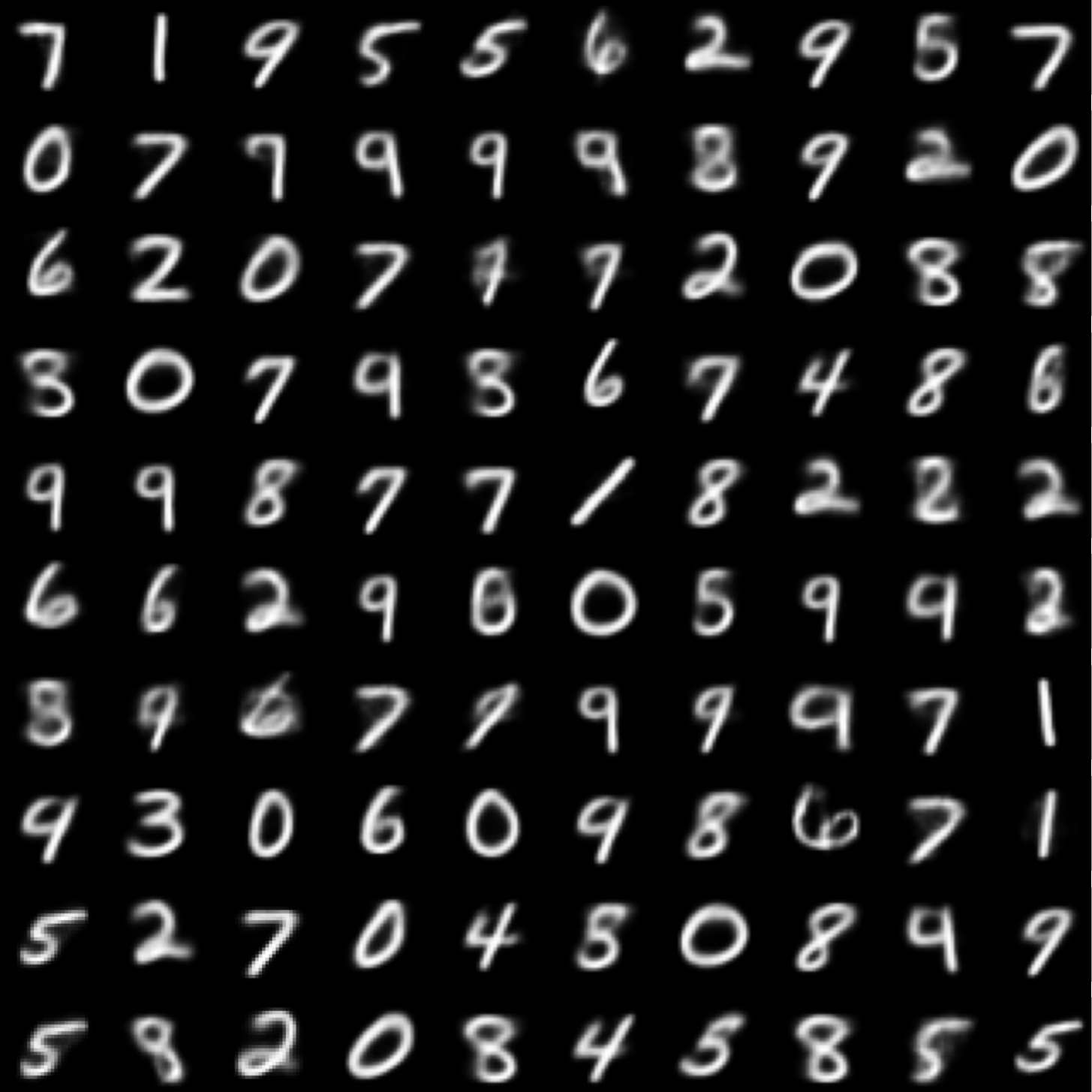}} &
        \raisebox{-0.5\height}{\includegraphics[height=5.0cm,width=0.35\textwidth]{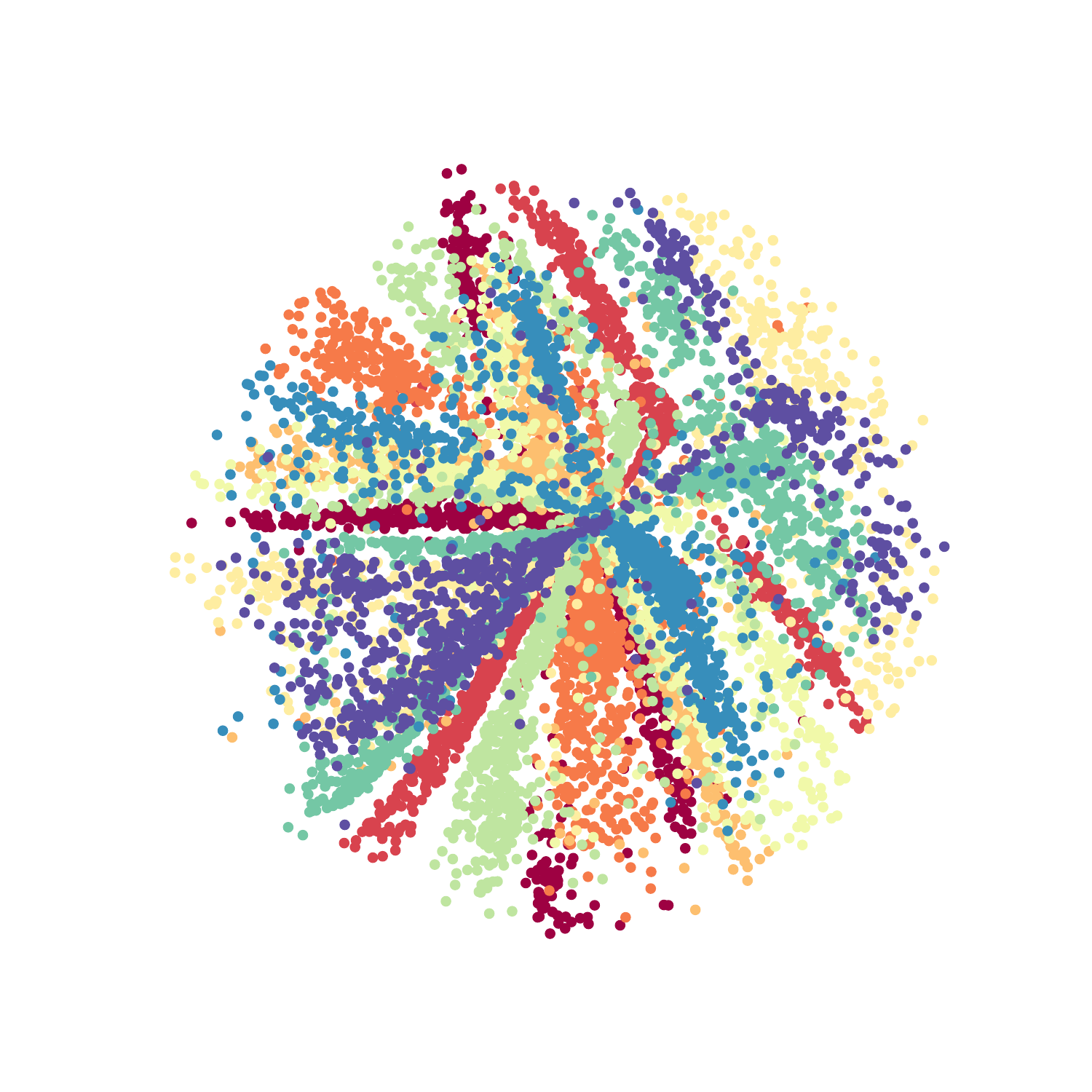}} \\
        \bottomrule
    \end{tabular}
    \caption{Reconstructed images, generated images and latent space of all methods.}
    \label{fig:images_2}
\end{figure}

\section{Computational Devices}
\label{sec:devices}
For the Gaussian simulation, point-cloud averaging, and color harmonization, we use a HP Omen 25L desktop for conducting experiments. Additionally, for the Sliced Wasserstein Autoencoder with class-fair representation experiment, we employ the NVIDIA Tesla V100 GPU.

\clearpage

\begin{table}[H]
\footnotesize
\centering
\caption{{Comparison of methods with $\kappa_2 =0.5$ on CIFAR10 after 500 epochs.}}
\label{tab:cifar10_0.5_500}
\begin{tabular}{@{}cccccccccc@{}}
\toprule
Methods & $\text{RL}$ ($\downarrow$) & $\text{W}_{2,\text{latent}}^{2}$ ($\downarrow$) & $\text{W}_{2,\text{image}}^{2}$ ($\downarrow$) & $\text{F}_{\text{latent}}$ ($\downarrow$) & $\text{W}_{\text{latent}}$ ($\downarrow$) & $\text{F}_{\text{images}}$ ($\downarrow$) & $\text{W}_{\text{images}}$ ($\downarrow$)\\
\midrule
SWAE & 0.640& 6.101& 141.984& 0.280& 4.585& 46.006& 178.798\\
\midrule
UBSW & 0.640& 6.104& 135.944& 0.228& 4.572& 44.024& 174.322\\
MFSWB $\lambda = 0.1$ & 0.640& 6.097& 142.530& 0.281& 4.585& 46.080& 179.210\\
MFSWB $\lambda = 1.0$ & 0.641& 6.092& 142.289& 0.279& 4.578& 46.076& 179.135\\
MFSWB $\lambda = 10.0$ & 0.640& 6.100& 141.503& 0.282& 4.585& 46.088& 178.373\\
s-MFBSW & 0.640& 6.103& 134.766& 0.218& 4.569& 42.503& 173.530\\
us-MFBSW & 0.642& 6.088& \textbf{131.934}& \textbf{0.209}& 4.546& \textbf{39.329}& \textbf{171.204}\\
es-MFBSW & 0.642& \textbf{6.060}& 132.170& 0.212& \textbf{4.534}& 40.642& 171.573\\

\bottomrule
\end{tabular}
\end{table}

\begin{table}[H]
\footnotesize
\centering
\caption{{Comparison of methods with $\kappa_2 =0.5$ on STL10 after 500 epochs.}}
\label{tab:stl10_0.5_500}
\begin{tabular}{@{}cccccccc@{}}
\toprule
Methods & $\text{RL}$ ($\downarrow$) & $\text{W}_{2,\text{latent}}^{2}$ ($\downarrow$) & $\text{W}_{2,\text{image}}^{2}$ ($\downarrow$) & $\text{F}_{\text{latent}}$ ($\downarrow$) & $\text{W}_{\text{latent}}$ ($\downarrow$) & $\text{F}_{\text{images}}$ ($\downarrow$) & $\text{W}_{\text{images}}$ ($\downarrow$)\\
\midrule
SWAE & 0.613& 16.826& 301.397& 0.647& 15.699& 25.827& 199.175\\
\midrule
UBSW & 0.616& 16.908& 301.143& 0.585& 15.719& 24.905& 199.918\\
MFSWB $\lambda = 0.1$ & 0.614& 16.823& 301.704& 0.647& 15.698& 25.637& 199.662\\
MFSWB $\lambda = 1.0$ & 0.614& 16.814& 301.505& 0.647& 15.688& 25.790& 199.307\\
MFSWB $\lambda = 10.0$ & 0.613& 16.831& 301.370& 0.648& 15.705& 25.546& 199.168\\
s-MFBSW & 0.613& 16.842& 302.632& 0.580& 15.658& 23.520& 200.262\\
us-MFBSW & 0.616& 16.830& 297.952& 0.586& \textbf{15.645}& 23.638& \textbf{197.057}\\
es-MFBSW & 0.616& \textbf{16.796}& \textbf{296.548}& \textbf{0.557}& 15.658& \textbf{22.551}& 199.117\\
\bottomrule
\end{tabular}
\end{table}

{\textbf{Results.} We evaluate the scalability of our method using two well-established datasets: CIFAR10~\citep{krizhevsky2009learning} ($d = 32 \times 32 \times 3$) and STL10~\citep{coates2011analysis} ($d = 64 \times 64 \times 3$). For these experiments, we set $\kappa_1 = 8.0$, $\kappa_2 = 0.5$, and train for 500 epochs with a learning rate of $0.0005$. The CIFAR10 experiment uses a uniform distribution on a 48-dimensional ball ($h = 48$), while the STL10 experiment uses a 128-dimensional ball ($h = 128$).}

{We assess fairness and averaging distance in the latent space, denoted as $F_{\text{latent}}$ and $W_{\text{latent}}$, respectively. Additionally, we measure the reconstruction loss (RL) and the Wasserstein-2 distance between the prior and aggregated posterior distribution in the latent space, $\text{W}_{2,\text{latent}}^2$. Unlike the MNIST experiments, where the Wasserstein distance was used to measure metrics related in image space, we employ the FID score~\citep{heusel2017gans} for CIFAR10 and STL10 due to its widespread use and reliability in measuring distances.
Specifically, the F-metric and W-metric in the image domain and the gap between generated images and the dataset $\text{W}_{2,\text{image}}^{2}$ are calculated as:
\begin{align}
    F_{\text{images}} &= \frac{2}{K(K-1)} \sum_{i=1}^{K-1} \sum_{j=i+1}^{K} \big|FID( \mu,\mu_i) - FID( \mu,\mu_j)\big|, \\
    W_{\text{images}} &= \frac{1}{K} \sum_{i=1}^{K} FID( \mu,\mu_i), \\
    \text{W}_{2,\text{image}}^{2} &=FID\left(\mu_0,  \frac{1}{K}\sum_{k=1}^K \mu_k\right)
\end{align}
where $\mu$ is the empirical distribution of generated images, $\mu_1,\ldots,\mu_K$ are the images for each label in the dataset, and $FID()$ is the FID score~\citep{heusel2017gans}. We report the quantitative results in Table~\ref{tab:cifar10_0.5_500} for the CIFAR10 experiment and Table~\ref{tab:stl10_0.5_500} for the STL10 experiment.}

{The proposed methods outperform baselines across nearly all metrics. For CIFAR10, us-MFBSW and es-MFBSW deliver the best results, with us-MFBSW excelling in image domain metrics like $\text{W}_{2,\text{image}}^2$, $\text{F}_{\text{image}}$, and $\text{W}_{\text{image}}$. On STL10, es-MFBSW stands out, achieving the best $\text{W}_{2,\text{latent}}^2$, $\text{W}_{2,\text{image}}^2$, and $\text{F}_{\text{image}}$, while also improving fairness in the latent space with the lowest $\text{F}_{\text{latent}}$, while us-MFBSW does its best at reducing the averaging distance both in latent and image domain, which are $\text{W}_{\text{latent}}$ and $\text{W}_{\text{image}}$, respectively.}

{Overall, compared to the baselines, the proposed methods achieve greater geometric fairness and bring the generated images closer to the dataset distribution in both latent and image spaces, though this comes at the expense of reduced image reconstruction quality.}

\end{document}